\documentclass{article}
\usepackage{graphicx} % Required for inserting images
\usepackage{amsmath,amssymb, amsthm, amsfonts}%
\usepackage{mathtools}
\usepackage{lipsum}
\usepackage[hidelinks]{hyperref}
\usepackage{bbm}
\usepackage{bm}
\usepackage{multicol}
\usepackage{mwe}
\usepackage{appendix}
\usepackage{geometry}
\usepackage{biblatex}
\usepackage{enumitem}
\usepackage{comment}
\usepackage{relsize}
\usepackage{booktabs}       % professional-quality tables
\usepackage{multirow}
\usepackage{makecell}
\usepackage{authblk}

\theoremstyle{definition}
\newtheorem{definition}{Definition}[section]
\theoremstyle{definition}
\newtheorem{setting}[definition]{Setting}
\theoremstyle{definition}

\newtheorem{assumptions}[definition]{Assumptions}
\theoremstyle{plain}
\newtheorem{theorem}[definition]{Theorem}
\newtheorem{lemma}[definition]{Lemma} 
\newtheorem{proposition}[definition]{Proposition} 
\newtheorem{corollary}[definition]{Corollary}
\theoremstyle{definition}

\newtheorem{remark}[definition]{Remark}

\newcommand{\R}{\mathbb{R}}
\newcommand{\N}{\mathbb{N}}

\newcommand{\C}{\mathbb{C}}

\newcommand{\E}{\mathbb{E}}
\newcommand{\1}{\mathbbm{1}}

\newcommand{\bary}{\Bar{Y}}
\newcommand{\barz}{\Bar{E}}
\newcommand{\barw}{\Bar{W}}
\newcommand{\bare}{\Bar{E}}

\newcommand{\clip}{\operatorname{clip}}
\usepackage{xcolor}

\title{Is Variational Monte Carlo Robust?\\[0.2em]\smaller{}
Sharp Moment Thresholds and Heavy-tailed Stochastic Optimization}
 \author{Philipp Grohs$^{1,2}$ and Davide Nobile$^1$}
 
\date{%
    $^1$ Faculty of Mathematics, University of Vienna\\
    firstname.lastname@univie.ac.at\\
    $^2$ RICAM, Austrian Academy of Sciences\\
    firstname.lastname@oeaw.ac.at\\
    [2ex]%
    \today
}

\begin{document}

\maketitle
\begin{abstract}
    Variational Monte Carlo (VMC) is a central algorithm in electronic structure theory and has gained renewed importance through modern neural-network ans\"atze such as FermiNet. At its core, VMC seeks ground states by minimizing the Rayleigh quotient by stochastic optimization. In this work, we show that the resulting stochastic optimization problem is intrinsically governed by the nodal geometry of the underlying wave function. More precisely, we establish that properties of the nodal set determine the integrability of the local energy and gradient estimators that drive VMC. For broad and practically relevant ansatz classes, including Slater-Jastrow wave functions with variable-exponent Slater-type orbitals, we prove that these estimators are generically heavy-tailed and fail to admit higher moments. At the same time, for general analytic ans\"atze, we prove weak moment bounds for the relevant estimators and identify precise low-moment regimes, showing how generic and degenerate nodal structures lead to different integrability thresholds. Building on this analysis, we introduce a new robust variant of VMC -- coined PS-Clip-VMC -- which is based on clipping both the local energies and the per-sample gradients. We prove that PS-Clip-VMC converges both in expectation and with high probability in the weak moment regime of VMC. The robustness of our method is confirmed experimentally by training FermiNet on atoms with up to 18 electrons.
 \end{abstract}
\section{Introduction}
\subsection{Motivation}
A central computational problem in quantum chemistry concerns the approximation of ground states and ground state energies of many-particle Hamiltonians. If
$\mathcal{H}:\ H^2(\Omega)\to L^2(\Omega)$ is a self-adjoint Hamiltonian bounded from below, its \emph{ground-state energy} $E_0$
is characterized by the Rayleigh--Ritz variational principle
\begin{equation}\label{eq:RayleighRitz}
        E_0
        :=
        \inf_{\psi\in \mathfrak{d}(\mathcal{H})\setminus \{0\}}
        \frac{\langle \mathcal{H}\psi,\psi\rangle}{\|\psi\|_{L^2}^2},
\end{equation}
with $\mathfrak{d}(\mathcal{H})$ the \emph{form-domain} of $\mathcal{H}$; typically we have $\mathfrak{d}(\mathcal{H})=H^1(\Omega)$ \cite{teschl2014mathematical}.
If the infimum is attained, then every minimizer is an eigenfunction associated
with the lowest eigenvalue of $\mathcal{H}$ -- a so-called \emph{ground state}. 
In many-particle systems we usually have that $\Omega = \R^{d\times N}$ (with $d$ the particle dimension and $N$ the number of particles) and (in dimensionless units) $\mathcal{H} = -\Delta + \sum_{i,j=1}^N V(x_i,x_j) + \sum_{i=1}^N V_{\mbox{ext}}(x_i)$, where $V:\R^d\times \R^d\to \R$ describes particle interactions and $V_{\mbox{ext}}:\R^d\to \R$ describes an external potential. If the particles are indistinguishable and fermionic, there is an additional restriction, namely $\psi\in H^1_a(\R^{d\times N})$, where we denote by $H_a^1(\R^{d\times N})$ the space of all functions $\psi\in H^1(\R^{3\times N})$ with $$\psi(x_{\pi(1)},\dots , x_{\pi(N)})=\mathrm{sgn}(\pi)\psi(x_1,\dots , x_N)$$ for all permutations $\pi$ of $\{1,\dots , N\}$, all $(x_1,\dots , x_N)\in \R^{d\times N}$ and $\mathrm{sgn}(\pi)$ denoting the signature of the permutation $\pi$.
We generally refer to multi-particle systems restricted to the space of antisymmetric functions as \emph{fermionic multi-particle systems} and note that for such systems, the problem \eqref{eq:RayleighRitz} amounts to computing 
\begin{equation}\label{eq:RayleighRitzElectronic}
    E_0
        :=
        \inf_{\psi\in H^1_a(\R^{3\times N})\setminus \{0\}}
        \frac{\langle \mathcal{H}\psi,\psi\rangle}{\|\psi\|_{L^2}^2}.
\end{equation}

A particularly important example is given by the (Born-Oppenheimer) multi-electron Hamiltonian 
\begin{equation}\label{eq:BOHamiltonian}
    \mathcal{H}^{BO}:=-\Delta + \sum_{i,j=1}^{N}\frac{1-\delta_{ij}}{|x_i - x_j|} + \sum_{I,J=1}^M\frac{Z_IZ_J(1-\delta_{IJ})}{|R_I - R_J|} - \sum_{i=1}^N\sum_{I=1}^M\frac{Z_I}{|x_i - R_I|}
\end{equation}
which describes electronic properties of a molecule with $M$ nuclei $(R_1,\dots , R_M)\in \R^{3\times M}$ and charges $(Z_1,\dots , Z_M)\in \mathbb{N}^{M}$. The multi-electron Hamiltonian acts on the \emph{electronic wavefunction} $\psi \in H^2(\R^{3\times N})$ which depends on the $n$ electron coordinates $(x_1,\dots , x_N)\in \R^{3\times N}$ \cite{helgaker2013molecular,szabo2012modern}. Since electrons are indistinguishable fermionic particles, (and ignoring spin) the electronic wavefunction is antisymmetric with respect to permutations of the electron coordinates. Finding efficient algorithms to accurately compute the ground-state energies and ground states of \eqref{eq:BOHamiltonian} (along with excited states corresponding to higher eigenvalues and eigenvectors) is one of the key open problems in computational chemistry. Its solution would enable the simulation and prediction of all nonrelativistic properties of molecules from first principles without resorting to expensive and time-consuming experiments.

In the case of the Born-Oppenheimer multi-electron Hamiltonian \eqref{eq:BOHamiltonian}, the solution of \eqref{eq:RayleighRitzElectronic} poses formidable challenges: First, the dimension of the problem scales linearly in the number of electrons. This means that even for small molecules with only tens or hundreds of electrons, solving \eqref{eq:RayleighRitzElectronic} requires solving a \emph{PDE eigenvalue problem on a computational domain with hundreds or even thousands of dimensions}. 
Second, many chemical applications require exceedingly accurate approximations of $E_0$ to within a few $\mathrm{mHa}$ (millihartree) while the absolute energy $E_0$ may be of the order of several hundred Hartree. In other words, practically useful computations often need to \emph{approximate $E_0$ to within a relative accuracy of $6-7$ digits} \cite{gerard2024deepLearningVMCChapter}.  

In recent years, so-called variational monte-carlo (VMC) algorithms have emerged as a new state of the art in terms of accuracy, see for example \cite{pfau2020ab,hermann2020deep,renaud2025qmctorch,scherbela2023budget,scherbela2023foundationPreprint,scherbela2024transferableMolecules,scherbela2025fire,gerard2022goldStandard,gerard2024deepLearningVMCChapter,gerard2025transferableSolids,hermann2023abinitioReview,gao2022pesnet,gao2023generalizing,gao2023samplingFree}. These algorithms build on a parametric function class $\Psi=\{\psi_\theta:\ \theta\in \Theta\}\subset H^1_a(\R^{3\times N})\setminus \{0\}$ and $\Theta\subset \R^P$ (for example consisting of neural networks) and approximate the true ground state and ground-state energy by minimizing the \emph{loss}
\begin{equation}\label{eq:VMCLoss}
    L(\theta):=\frac{\langle \mathcal{H}\psi_\theta,\psi_\theta\rangle}{\|\psi_\theta\|_{L^2}^2}.
\end{equation}
Since evaluating $L(\theta)$ or $\nabla_\theta L(\theta)$ requires the calculation of high-dimensional integrals, stochastic representations and corresponding stochastic optimization algorithms have to be used: with 
$$X_\theta\sim \frac{|\psi_\theta(x_1,\dots , x_N)|^2}{\|\psi_\theta\|_{L^2}^2},\  E_\theta:=\frac{\mathcal{H}\psi_\theta(X_\theta)}{\psi_\theta(X_\theta)}, \ W_\theta:=\frac{\nabla_\theta \psi_\theta(X_\theta)}{\psi_\theta(X_\theta)}\ \mbox{and}\ Z_\theta:= E_\theta \cdot W_\theta - \mathbb{E}\left[E_\theta\right]\cdot \mathbb{E}\left[ W_\theta\right]$$ we have that 
$$
    L(\theta) = \mathbb{E}[E_\theta]\quad \mbox{and}\quad \nabla_\theta L(\theta) = 2\mathbb{E}[Z_\theta].
$$
VMC algorithms leverage these stochastic representations by first sampling from $X_\theta$ via MCMC and then estimating $\nabla L(\theta)$ from suitable finite sample estimators of $\mathbb{E}[Z_\theta]$ to compute parameter updates \cite{toulouse2016introduction,becca2017quantum}. The success of this approach hinges on the quality with which the true gradient can be approximated by its respective stochastic approximation. Indeed, if the random variables $E_\theta$ and $Z_\theta$ satisfy sufficient moment assumptions,  \cite{abrahamsen2024convergence} recently proved convergence of VMC to stationary points of $L$. 

\subsection{Contributions}
The main contributions of this paper follow three threads which we now summarize.
\subsubsection{Non-Existence of Moments}
Our first main result uncovers yet another challenge in VMC: the moment assumptions of \cite{abrahamsen2024convergence} are in general not satisfied and both $E_\theta$ and $Z_\theta$ are severely heavy-tailed. More precisely, in Theorem \ref{thm:mainmomentslater} we prove that even for the simple case of $\Psi$ consisting of any (sufficiently expressive) Slater-Jastrow Ansatz with Slater-Type orbitals \cite{renaud2025qmctorch} the set of parameters with heavy tails on $E_\theta$ and $Z_\theta$
is residual (in fact, we show that it contains a dense open subset). For $h_1,\dots , h_N\in C(\R^3)$ and $(x_1,\dots , x_N)\in \R^{3\times N}$ we write
$$
    |h_1,\dots , h_N\rangle (x_1,\dots, x_N):=\det\begin{pmatrix}h_1(x_1)& \dots & h_N(x_1)\\\vdots & \ddots & \vdots \\h_1(x_N) & \dots & h_N(x_N)\end{pmatrix}.
$$
Then, we have the following result.
\begin{theorem}[Colloquial Version of Theorem \ref{thm:mainmomentslater}]
    With $Y_{l,m}$ denoting the spherical harmonics and
    %%
    %$$
    %    h_{\alpha_i,l,m,k}(x):=Y_{l,m}\left(\frac{x}{|x|}\right)|x|^{l+k}e^{-\alpha_i|x|},\quad l=0,\dots,L,\ m=-l,\dots , l,\ k=0,\dots, K-1,\ i=1,\dots , I.
    %$$
    $\bm{\alpha}\in [0,\infty)^I$, $\mathbf{c} =\left(c_{l,m,k,i}\right)_{l=0,\dots,L,\ m=-l,\dots , l,\ k=0,\dots, K-1,\ i=1,\dots , I}\in \R^{(L+1)^2KI}$ denote
    $$
        h_{(\bm{\alpha},\mathbf{c})}(x):=\sum_{l,m,k,i}c_{l,m,k,i}Y_{l,m}\left(\frac{x}{|x|}\right)|x|^{l+k}e^{-\alpha_i|x|},
    $$
    the corresponding \emph{Slater-Type orbital} and let
    $\mathcal{J}_\beta $ be a symmetric and positive \emph{Jastrow factor}, parametrized by $\beta\in \R^J$.

    For $\theta:=\left(\beta,((\bm{\alpha}_a^b,\mathbf{c}_a^b))_{a=1,\dots , N,\ b=1,\dots , B}\right)\in \R^J\times [0,\infty)^{NBI}\times \R^{NB(L+1)^2KL}=:\Theta$ denote 
    $$
        \psi_\theta(\mathbf{x})\coloneqq\mathcal{J}_\beta(\mathbf{x})\cdot\left(\sum_{b=1}^B |h_{(\bm{\alpha}_1^b,\mathbf{c}_1^b)},\dots ,h_{(\bm{\alpha}_N^b,\mathbf{c}_N^b)}\rangle(\mathbf{x}) \right).
    $$
    Then, whenever $L\geq 1$ and $KI\geq 3N-3$, the set
    $$
        \left\{\theta\in \Theta:\ \mathbb{E}[|E_\theta|^3]=\infty \quad \land \quad \mathbb{E}[|Z_\theta|^{3/2}]=\infty\right\}
    $$
    is residual.
\end{theorem}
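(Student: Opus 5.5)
The plan has three parts: a local analysis near a generic point of the nodal set, a density argument producing such a point, and a stability argument showing the set of good parameters contains a dense open set.

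\textbf{Step 1 (local analysis).} Fix $\theta$ and write $\psi=\psi_\theta$. Suppose there is a point $\mathbf{x}_0$ with the following properties:
\begin{itemize}
\item $\psi(\mathbf{x}_0)=0$ and $\nabla_{\mathbf{x}}\psi(\mathbf{x}_0)\neq 0$;
\item $\psi$ is real-analytic near $\mathbf{x}_0$, i.e.\ $\mathbf{x}_0$ avoids electron--electron and electron--nucleus coincidences and the origins of the orbitals;
\item $\mathcal{H}\psi(\mathbf{x}_0)\neq 0$, or equivalently $\Delta\psi(\mathbf{x}_0)\neq 0$, since the potential is smooth there and $\psi(\mathbf{x}_0)=0$.
\end{itemize}

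Near such a point, $\{\psi=0\}$ is a smooth hypersurface. Let $s$ denote the signed distance to it. Then $\psi\approx c\,s$ and $E_\theta=\mathcal{H}\psi/\psi\approx \kappa/s$ with $\kappa\neq 0$. The density is $|\psi|^2/\|\psi\|^2\approx c^2 s^2$, so
\[
\mathbb{E}|E_\theta|^p\gtrsim \int_{-\varepsilon}^{\varepsilon} |s|^{2-p}\,ds=\infty \quad\text{for } p\ge 3.
\]

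For the gradient, take a parameter direction with $\partial_{\theta_j}\psi(\mathbf{x}_0)\neq 0$. Then $W_j\approx \gamma/s$, so $E W_j\approx \kappa\gamma/s^2$, and $\int |s|^{2-2q}\,ds=\infty$ for $q\ge 3/2$. The expectations $\mathbb{E}[E_\theta]$ and $\mathbb{E}[W_\theta]$ are finite and constant, so they do not affect the tail, and $\mathbb{E}|Z_\theta|^{3/2}=\infty$. If the paper states such a local nodal lemma earlier, I would invoke it; otherwise I would prove it by this flattening change of variables.

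\textbf{Step 2 (producing a good point).} I would show that the set $U$ of $\theta$ admitting a point $\mathbf{x}_0$ as in Step 1, together with a direction $j$ such that $\partial_{\theta_j}\psi_\theta(\mathbf{x}_0)\neq0$, is open and dense.

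\emph{Openness.} All three conditions are open in $(\theta,\mathbf{x})$ jointly. By the implicit function theorem, a nearby zero $\mathbf{x}_0(\theta')$ persists, and the inequalities remain true at it.

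\emph{Density.} Given any $\theta$, I would perturb it to reach $U$.
\begin{itemize}
\item The Jastrow factor is positive and smooth away from coalescences, so it preserves the nodal set. Moreover $\mathcal{H}(\mathcal{J}\phi)=\mathcal{J}\mathcal{H}\phi-2\nabla\mathcal{J}\cdot\nabla\phi-(\Delta\mathcal{J})\phi$, which at a node is $\mathcal{J}\Delta\phi+2\nabla\mathcal{J}\cdot\nabla\phi$ up to sign conventions. It therefore suffices to control the determinant $\Phi$ together with the term involving $\nabla\mathcal{J}$.
\item Antisymmetry forces $\Phi=0$ on the coincidence set $x_1=x_2$. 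Coincidence points are excluded from analyticity only for the Coulomb potential, not for the Slater determinant itself. The condition $L\ge1$ (p-orbitals) gives enough freedom for nodes that are not coincidence nodes.
\end{itemize}
Concretely, the richness condition $KI\ge 3N-3$ lets me choose coefficients so that the orbitals, restricted near a chosen configuration, realise arbitrary prescribed low-order jets of $\Phi$. The count $3N-3$ reflects that a generic node requires independent data in the $3N$ coordinates modulo constraints. With this freedom I would prescribe $\Phi(\mathbf{x}_0)=0$, $\nabla\Phi(\mathbf{x}_0)\neq0$, and $\Delta\Phi(\mathbf{x}_0)$ arbitrary. The last condition can then be met by an arbitrarily small perturbation, because the map from parameters to the 2-jet at $\mathbf{x}_0$ is a submersion and $\theta\mapsto\psi_\theta$ is analytic. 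Since $\theta\mapsto\psi_\theta(\mathbf{x}_0)$ is analytic, a failure set defined by analytic equalities that is not everything is nowhere dense.

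\textbf{Main obstacle.} The hard step is density, specifically showing that $\Delta\psi(\mathbf{x}_0)\neq0$ and $\nabla_\mathbf{x}\psi(\mathbf{x}_0)\neq0$ can both be achieved at a node arbitrarily close to an arbitrary parameter. This includes degenerate parameters, such as all $\mathbf{c}=0$ or linearly dependent orbitals for which $\psi\equiv0$; those may need to be handled by excluding them from $\Theta$ or by showing they are nowhere dense.

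My first approach would be analytic-function rigidity. The function
\[
F(\theta,\mathbf{x})=\bigl(\psi,\ \Delta\psi\bigr)\ \text{restricted to }\{\psi=0,\ \nabla\psi\neq0\}
\]
is analytic. If $\Delta\psi$ vanished on an open piece of the regular nodal set for an open set of $\theta$, differentiating in $\alpha_i$ and $c$ would yield polynomial-exponential identities. I expect these to contradict the linear independence of $|x|^{l+k}e^{-\alpha|x|}Y_{lm}$, which is where $KI\ge3N-3$ enters.

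Having established that $U$ is open and dense, Step 1 gives the inclusion of $U$ in the set in the statement, so that set is residual.
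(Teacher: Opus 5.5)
Your Step 1 and the openness half of Step 2 coincide with the paper's Lemma~\ref{lem:regularvalue} and Lemma~\ref{lem:regularopen}. Only $C^2$ regularity is used there, which is all you actually have: $\mathcal{J}_\beta$ is merely $C^2$ off $\Sigma$. So your claim that $\psi$ is real-analytic near $\mathbf{x}_0$ is unjustified, though harmless for Step 1.

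The genuine gap is density. You identify it as the main obstacle but do not prove it.

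\begin{itemize}
\item The claim that the parameter-to-2-jet map at $\mathbf{x}_0$ is a submersion is asserted, not derived, and it is precisely the nontrivial point. Interpolation on $\R^3$ by a fixed finite-dimensional space is not automatic (Mairhuber--Curtis). The hypothesis $KI\ge 3N-3$ only yields Hermite interpolation for the \emph{univariate} radial factor, not for three-dimensional orbitals. Your reading of $3N-3$ as ``$3N$ coordinates modulo constraints'' is not the mechanism.
\item The remark that an analytic failure set which is not everything is nowhere dense is circular. Showing it is not everything is exactly the missing construction, and the node itself moves with $\theta$.
\item The analytic-rigidity fallback fails as stated. $\psi_\theta$ need not be analytic in $\mathbf{x}$, and the identities you would obtain by differentiating in $\alpha_i,\mathbf{c}$ are unspecified. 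More basically, it presupposes a regular node of some nearby $\psi_{\theta'}$ off the coincidence set, which you never produce.
\item Coincidence nodes cannot serve. They lie in $\Sigma$, and for a $C^2$ antisymmetric function $\Delta\psi$ is again antisymmetric, so it vanishes on $\{x_i=x_j\}$.
\end{itemize}

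The paper closes this gap with two concrete constructions.

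\emph{First, a zero with special geometry} (Lemma~\ref{lem:existenceofzeros}).
\begin{itemize}
\item Pick $(z_\ast,z_3,\dots,z_N)$ with nonvanishing cofactor $|v_2,\dots,v_N\rangle(z_\ast,z_3,\dots,z_N)$ and distinct radii.
\item Move $x_1=\gamma(t)$, $x_2=\gamma(t+\pi)$ on a small circle around $z_\ast$ in $\{z_\ast\}^\perp$, so that $|x_1|=|x_2|$.
\item By antisymmetry, $t\mapsto\psi(\gamma(t),\gamma(t+\pi),z_3,\dots,z_N)$ changes sign between $0$ and $\pi$. This gives a zero $\mathbf{x}\notin\Sigma$ of the \emph{current} $\psi$ with nonzero cofactor $\tau$, and no regularity of this zero is needed.
\end{itemize}

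\emph{Second, a perturbation of one orbital that keeps $\mathbf{x}$ a zero but makes it regular} (Lemmas~\ref{lem:interpolation} and \ref{lem:perturbation}).
\begin{itemize}
\item Perturb one orbital by $\lambda v_\delta$ with $v_\delta(x)=(w\cdot x)v_r(|x|)$. Here $w\cdot x_2=0\neq w\cdot x_1$, which uses $L\ge1$.
\item Choose $v_r\in\mathrm{span}\{r^ke^{-\alpha_ir}\}$ with $v_r=0$ at all radii and $v_r'=v_r''=0$ at $|x_3|,\dots,|x_N|$, leaving $v_r'(|x_1|)$ and $v_r''(|x_1|)$ free.
\item Because $|x_1|=|x_2|$, these are $3(N-1)$ Hermite conditions at $N-1$ distinct radii. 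They are solvable because this space is an extended Chebyshev system when the $\alpha_i$ are distinct and $KI\ge3N-3$ (Lemma~\ref{lem:chebsys}); that is exactly where the hypothesis enters.
\item Since $v_\delta$ lies in the span of the STOs with the current exponents, this is a small change of $\mathbf{c}$ only.
\item Cofactor expansion gives $\psi_{\delta,\lambda}(\mathbf{x})=0$ exactly, with no implicit function theorem needed.
\item $\partial_{x_{1,1}}\psi_{\delta,\lambda}(\mathbf{x})$ and $\Delta\psi_{\delta,\lambda}(\mathbf{x})$ are affine in the free values $\lambda\nu^{(1)}$, $\lambda\nu^{(2)}$, with coefficient $\mathcal{J}(\mathbf{x})\tau\neq0$. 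The $\nabla\mathcal{J}\cdot\nabla\phi$ term you noted is absorbed because $\nu^{(2)}$ is free. Hence both are nonzero for all small $\lambda>0$.
\item An analogous $u$ with $u(x_1)\neq0=u(x_i)$ for $i\ge2$ gives $\nabla_{\mathbf{c}}\psi\neq0$ at $\mathbf{x}$.
\end{itemize}

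Before all this, one discards a nowhere dense parameter set so that the orbitals of one determinant are linearly independent with pairwise distinct exponents. That disposes of the degenerate cases you flag.
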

Our result in particular shows that the gradient random variable $Z_\theta$ generically does not possess finite variance, rendering standard stochastic approximations insufficient.  The main difficulty in the proof of Theorem \ref{thm:mainmomentslater} is to show that every neighbourhood of every $\theta$ in $\Theta$ contains a parameter $\mu$ such that $\psi_\mu$ has a regular zero $(x_1,\dots , x_n)\in \R^{3\times n}$ with $\Delta \psi_{\mu}(x_1,\dots , x_n)\neq 0$ and $\nabla_\theta \psi_\mu(x_1,\dots, x_n) \neq 0$, see Lemma \ref{lem:perturbation}. Having established this, the lack of moments follows from asymptotically expanding the local energy $\frac{\mathcal{H}\psi_\mu}{\psi_\mu}$ and gradient random variable $\frac{\mathcal{H}\psi_\mu}{\psi_\mu}\cdot \frac{\nabla_\theta\psi_\mu}{\psi_\mu}$ around this zero, see Lemma \ref{lem:regularvalue}.
\begin{remark}
    Theorem \ref{thm:mainmomentslater} holds for any Hamiltonian. In  particular, the lack of moments is not caused by singular potentials such as \eqref{eq:BOHamiltonian}. Moreover it is possible to extend the result to general Slater-Jastrow Ans\"atze whenever the orbitals are chosen from a sufficiently expressive space, see Lemma \ref{lem:perturbation}.
\end{remark}
\begin{remark}At present, our main result applies to Slater-Jastrow Ans\"atze,
    but in recent years more general representations have become popular. For example, in \cite{hermann2020deep,renaud2025qmctorch} the Slater-Jastrow Ansatz  is generalized by applying $\psi_\theta$ to correlated variables $B_\theta(\mathbf{x})$ through a parametrized \emph{backflow transform}. A more general ansatz is given by FermiNet \cite{pfau2020ab} and subsequent work \cite{scherbela2022weightSharing,scherbela2023budget,scherbela2023foundationPreprint,scherbela2024transferableMolecules,scherbela2025fire,gerard2022goldStandard,gerard2024deepLearningVMCChapter,gerard2025transferableSolids,hermann2023abinitioReview,gao2022pesnet,gao2023generalizing,gao2023samplingFree}. We expect that the conclusion of Theorem \ref{thm:mainmomentslater} remains true in these more general models using similar methods as in the proof of Theorem \ref{thm:mainmomentslater}. We leave this to future work.  
\end{remark}
\subsubsection{Existence of Moments}
Having established a generic lack of high moments, our second main contribution consists of sharp lower bounds on the number of moments of $Z_\theta$: For real-analytic parametric function classes $\Psi$, we show in Theorem \ref{thm:momentexistencegrad} that (subject to additional compactness assumptions) a minimal number of moments always exists:

\begin{theorem}[Colloquial Version of Theorem \ref{thm:momentexistencegrad}]
    Let $K\subset \Omega$ compact with smooth boundary, and let $\eta\in C^\infty(\Omega)$ be any smooth cutoff function with $\mathrm{supp}(\eta) = K$. Then, for any parametrized family $\Theta \ni\theta\mapsto \psi_\theta:=\eta \varphi_\theta$ with $\varphi_\theta$ real-analytic and $\Theta$ compact, it holds that 
    \begin{equation*}
        \sup\{p\in (0,\infty):\ \sup_{\theta\in \Theta}  \mathbb{E}[|Z_{\theta}|^p]<\infty \}\geq \frac{5}{4}.
    \end{equation*}
\end{theorem}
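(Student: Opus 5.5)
The plan is to reduce the moment bound to a weighted integrability estimate for real-analytic functions near their zero sets, and then to a one-dimensional computation in which the exponent $5/4$ appears. Since $\mathbb{E}[E_\theta]$ and $\mathbb{E}[W_\theta]$ are bounded uniformly over compact $\Theta$ by continuity, it suffices to bound $\mathbb{E}[|E_\theta W_\theta|^p]$ uniformly. Also $\|\psi_\theta\|_{L^2}$ is bounded below on $\Theta$, because $\psi_\theta\neq 0$ and $\theta\mapsto\|\psi_\theta\|_{L^2}$ is continuous on a compact set. Writing $\mathcal{H}\psi=-\Delta\psi+V\psi$, we get
\[
\mathbb{E}[|E_\theta W_\theta|^p]\;\lesssim\;\int_K |\mathcal{H}\psi_\theta|^p\,|\nabla_\theta\psi_\theta|^p\,|\psi_\theta|^{2-2p}\,dx .
\]
For $p\le 1$ this is trivially bounded. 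For $p\in(1,5/4)$ the only danger is the factor $|\psi_\theta|^{2-2p}$ near the nodal set $\{\varphi_\theta=0\}\cap K$. Throughout I assume $V$ is bounded on $K$, or at least no worse than the Coulomb singularities, which can be handled separately by Hölder's inequality away from the nodal set. The factor $|\nabla_\theta\psi_\theta|^p$ is bounded on $K\times\Theta$ and can be discarded. The $V\psi$ part of $\mathcal{H}\psi$ contributes $|\psi|^{2-p}$, which is harmless. So everything reduces to showing
\[
\sup_{\theta\in\Theta}\int_K |\Delta\psi_\theta|^p\,|\psi_\theta|^{2-2p}\,dx<\infty\qquad\text{for all } p<5/4 .
\]

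The heuristic, which I then intend to make rigorous, is a one-dimensional count along a direction transversal to the nodal set. Suppose $\psi$ vanishes to order $m$ in the normal variable $t$, so that $\psi\sim t^m$ and $\Delta\psi\sim t^{\max(m-2,0)}$. Then the integrand behaves like $t^{p\max(m-2,0)+m(2-2p)}$, and integrability in $t$ requires:
\begin{itemize}
\item for $m=1$: $2-2p>-1$, i.e.\ $p<3/2$;
\item for $m=2$: $4-4p>-1$, i.e.\ $p<5/4$;
\item for $m\ge 3$: $p<(2m+1)/(m+2)\ge 7/5$.
\end{itemize}
The worst case is a double zero, which gives exactly the threshold $5/4$. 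Zeros of higher codimension only improve integrability, since the measure of a tube around them is smaller.

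To make this rigorous and uniform in $\theta$, I would work locally and patch by compactness of $K\times\Theta$, viewing $(x,\theta)\mapsto\varphi_\theta(x)$ as jointly real-analytic. Near each point $(x_0,\theta_0)$ with $\varphi_{\theta_0}(x_0)=0$, after a linear change of coordinates, some $x$-direction $t$ has $\partial_t^m\varphi_{\theta_0}(x_0)\neq0$ for a finite $m$; this holds because $\varphi_{\theta_0}\not\equiv 0$. By the Weierstrass preparation theorem, uniformly for $(x',\theta)$ near $(x_0',\theta_0)$,
\[
\varphi_\theta = u\cdot P, \qquad P(t)=\prod_{j=1}^m (t-r_j(x',\theta)),
\]
with $u$ a unit. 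The estimate then becomes one-dimensional: bound $\int |\partial_t^2 P + \dots|^p\, |P|^{2-2p}\,dt$ uniformly over all monic degree-$m$ polynomials with (complex) roots in a small disc. Derivatives in the remaining variables $x'$ also enter $\Delta$; they should be controlled through the analytic dependence of the roots, possibly after a Łojasiewicz-type inequality $|\nabla\varphi|\gtrsim|\varphi|^{1-\epsilon}$ to prevent root collisions from causing blow-ups.

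The hardest step is this uniform one-dimensional estimate. The danger is that roots cluster: a pair of nearly coincident roots makes $P$ locally look like $(t-a)(t-b)$, which is the $m=2$ model, while clusters of three or more roots must never produce anything worse than $5/4$. I would first prove the bound by a dyadic decomposition in the distance to the nearest root, comparing $|P|$ and $|P''|$ on each piece via the root configuration (Lagrange/Markov-type inequalities for polynomials). I would then check that the cluster case $k\ge 3$ reproduces the exponent $(2k+1)/(k+2)>5/4$. If the Weierstrass approach becomes unwieldy in the tangential directions, the fallback is local resolution of singularities (Hironaka, applied to the analytic family), which reduces both $\psi$ and $\Delta\psi$ to monomial form. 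Then only monomial integrals need to be checked, with the double-zero case again extremal.
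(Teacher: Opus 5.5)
Your reduction to $\sup_{\theta}\int_K|\Delta\psi_\theta|^p|\psi_\theta|^{2-2p}\,dx<\infty$ and your local Weierstrass set-up agree with the paper. The gap is the step you postpone, and the route you sketch for it does not work. With a single preparation direction $t$ you have $\Delta\varphi_\theta=\partial_t^2\varphi_\theta+\Delta_{x'}\varphi_\theta$. The tangential part (for instance $\Delta_{x'}P=\sum_l(\Delta_{x'}a_l)\,t^l$, plus cross terms involving the unit) is not determined by the roots of the slice polynomial. So no estimate that is uniform over monic polynomials with roots in a small disc can control it. In fact the slice-wise bound is false. Take $\varphi_\theta(t,s)=t^3+\theta s^2$ near $(\theta_0,x_0)=(0,0)$: $t$ is a legitimate preparation direction, since $\varphi_0=t^3$ has order $3$ and $\partial_t^3\varphi_0\neq0$. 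Yet for every $\theta\neq0$ the slice $s=0$ gives $\int|6t+2\theta|^p|t|^{6-6p}\,dt=\infty$ for all $p\in[7/6,5/4)$; finiteness only appears after integrating in $s$. The tools you mention do not repair this. The roots $r_j(x',\theta)$ are merely continuous (Puiseux-type), so their tangential derivatives blow up precisely at collisions. A {\L}ojasiewicz inequality $|\nabla\varphi|\gtrsim|\varphi|^{1-\epsilon}$ bounds the gradient from below, whereas you need upper bounds on tangential second derivatives relative to $\varphi$. The resolution-of-singularities fallback has the same hole. After simultaneously monomializing $\varphi$ and $\Delta\varphi$ (in families, for uniformity in $\theta$), you would still have to prove $p\,\nu_E(\Delta\varphi)+(2-2p)\,\nu_E(\varphi)+\nu_E(J)>-1$ for every exceptional divisor $E$ and all $p<5/4$. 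Here $\nu_E$ is the vanishing order along $E$ and $J$ the Jacobian. Nothing in your plan explains why the double zero should be extremal there.

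The paper avoids tangential derivatives altogether. Let $k$ be the order of $\varphi_\theta$ at $\mathbf{x}$. Lemma~\ref{lem:orderalign} uses that the degree-$k$ Taylor part $H$ is a nonzero homogeneous polynomial to choose a whole orthonormal frame $\mathbf{r}_1,\dots,\mathbf{r}_d$ with $H(\mathbf{r}_i)\neq0$ for every $i$. Weierstrass preparation, jointly in $(\mu,\mathbf{y})$, then applies in each frame direction with a degree-$k$ polynomial $u_i$ and unit $b^i$. By rotation invariance, $\Delta=\sum_i\partial_{y_i}^2$. Each summand is a derivative in the polynomial variable of its own preparation, so $|\partial_{y_i}^2f|^p|f|^{2-2p}\lesssim|u_i''|^p|u_i|^{2-2p}+|u_i'|^p|u_i|^{2-2p}+|u_i|^{2-p}$. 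Fubini on $\bigcap_i(I_i\times_iV_i)$ then reduces everything to Lemma~\ref{lem:polynomint}. In the example above, preparing along $(1,\pm1)/\sqrt{2}$ instead of the axes is exactly what makes this work. Conversely, the one-dimensional estimate you regard as the hardest step is elementary and needs no dyadic or cluster analysis. Since $u''/u=\sum_{i\neq j}(t-\alpha_i)^{-1}(t-\alpha_j)^{-1}$, the quantity $|u''|^p|u|^{2-2p}$ is a sum of terms $|t-\alpha_i|^{2-2p}|t-\alpha_j|^{2-2p}$ times bounded factors $|t-\alpha_l|^{2-p}$. Cauchy--Schwarz with $\sup_{\alpha\in\mathbb{C}}\int_I|t-\alpha|^{4-4p}\,dt<\infty$ for $p<5/4$ then gives a bound uniform in the roots.
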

The proof of Theorem \ref{thm:momentexistencegrad} makes use of the Weierstrass preparation theorem \cite[Section I.3.3]{range1998holomorphic}, along with an auxiliary non-degeneracy result in Lemma \ref{lem:orderalign} to carefully control the local structure of nodal sets of $\psi_\theta$.
A few remarks are in order.
\begin{remark}\label{rem:sharpmomentsgrad}
    By considering the function $\mathbf{x}\mapsto x_1^2$ it is easy to see that the threshold $p=\frac{5}{4}$  is sharp. 
\end{remark}
\begin{remark}\label{rem:GenericVSWorst}
    Suppose that $\R^P\ni\theta\mapsto \psi_\theta = \eta\varphi_\theta$ (with $\eta$ as in Theorem \ref{thm:momentexistencegrad}) is a parametrized family of compactly supported $C^2$ functions and that, for $\mu\in \R^P$, $0$ is a regular value of $\varphi_\mu$. Then, one can establish via first order Taylor expansion that $\mathbb{E}[|Z_\mu|^p]<\infty$ for every $p< \frac{3}{2}$, which is consistent with the lower bound from Theorem \ref{thm:mainmomentslater}. Since by Thom's transversality theorem \cite[Chapter 3, Theorem 2.1]{hirsch2012differential} a generic $C^2$ function has $0$ as a regular value, it follows -- roughly speaking -- that generically the variable $Z_\theta$ has $p$ moments for every $p<\frac{3}{2}$ but as soon as $\psi_\nu$ has higher order zeroes, the moment threshold drops to $p<\frac{5}{4}$. In other words, the \emph{generic} critical threshold is $\frac32$ but the \emph{worst-case} critical threshold is $\frac54$.
\end{remark}
\begin{remark}
    Using Malgrange's preparation theorem \cite{malgrange1964preparation}, one can extend Theorem \ref{thm:momentexistencegrad} to the case where the family $\varphi_\theta(\mathbf{x})$ is merely $C^\infty$ and not real-analytic, as long as the functions $\varphi_\theta$ do not posses "flat zeros" where all derivatives vanish. 
\end{remark}
\begin{remark}
    The assumption of $\Theta$ being compact in Theorem \ref{thm:momentexistencegrad} is needed to achieve a uniform bound of $\mathbb{E}[|Z_\theta|^p]$ over all parameters $\theta\in \Theta$. However, the optimization algorithms presented and analyzed in Section \ref{sec:optimization} assume unconstrained optimization over $\theta\in \R^P$ and therefore -- strictly speaking -- the families of Theorem \ref{thm:momentexistencegrad} do not fall into the unconstrained optimization framework studied in Section \ref{sec:optimization}. This is not really an issue, since one can easily extend the families $(\psi_\theta)_{\theta\in \Theta}$ to $\tilde \psi_\theta:=\psi_{g(\theta)}$, $\theta\in \R^P$ and $g:\R^P\to \Theta$ smooth with uniformly bounded derivatives. It is easy to see that this new family still satisfies the same moment assumptions and, in particular, the Assumptions \ref{set: SGD assumptions} needed in the convergence analysis of Section \ref{sec:optimization}. 
\end{remark}
\begin{remark}
    The result of Theorem \ref{thm:momentexistencegrad} requires the functions $\psi_\theta$ to be compactly supported, which is a restriction. However, the restriction is quite mild for applications related to Variational Monte Carlo, since it is known that bound states (e.g., eigenfunctions of $\mathcal{H}$ corresponding to eigenvalues below the ionization threshold) are exponentially decaying, see \cite[Theorem 8.6]{gustafson2003mathematical} or \cite{agmon2006bounds}. Extending Theorem \ref{thm:momentexistencegrad} to globally supported functions seems challenging; a potential avenue could be techniques similar to those used in \cite[Section 3]{grohs2021stable}. We leave this to future work.
\end{remark}
\begin{remark}
    Theorem \ref{thm:momentexistencegrad} does not yet apply fully to families $\psi_\theta$ with cusps, which by Kato's cusp conditions \cite{kato1957eigenfunctions} always appear in ground states of the multi-electron Hamiltonian. We do however expect that with similar arguments one can extend our results and leave the incorporation of cusps to future work. 
\end{remark}

\subsubsection{Robust Optimization}
Having clarified the precise regime of moments for the gradient random variable, we continue with our third main contribution: a new robust variant of VMC that provably converges to a stationary point of $L$, both in expectation (see Theorem \ref{thm: convergence in exp of vartiational monte carlo}) and with high probability (see Theorem \ref{thm: high prob conv of variational monte carlo}). This algorithm -- coined PS-Clip-VMC -- arises as a generalization of our recent work \cite{nobile2026robust} where robust gradient estimators are constructed using a \emph{per sample clipping} procedure. Roughly speaking, the resulting gradient estimators assumes $n$  i.i.d. samples $X_1,\dots , X_n\sim X_\theta$ and corresponding i.i.d. samples, $W_1,\dots , W_n\sim W_\theta$ and $E_1,\dots , E_n\sim E_\theta$ and approximates $\nabla L(\theta) = 2\mathbb{E}[Z_\theta]$ by a clipped empirical mean
$$
    \frac{2}{n} \sum_{k=1}^n \gamma_{\alpha,p,k}\left(E_kW_k\right)-
    \frac{2}{n^2-n} \sum_{k \neq j}\gamma_{\beta,2,k}\left(E_k\right)  \gamma_{\beta,2,k}\left(W_j\right),\ \mbox{where}\ 
    \gamma_{\alpha,p,k}(v) \coloneqq v\cdot \min \left\{1, \frac{\alpha k^{\frac{1}{p}}}{\left|v\right|} \right\}.
$$
In Theorem \ref{thm: convergence in exp of vartiational monte carlo} we show that under weak moment assumptions, consistent with the one proved in Section \ref{sec: Existence of Moments}, using this robust estimator guarantees convergence in expectation to a stationary point. Specifically, we show that for any small enough, constant step-size and any batch-size $n\geq M^{\frac{p}{2(p-1)}}$ the iterates of PS-Clip-VMC satisfy
\begin{equation*}
        \frac{1}{M}\sum_{m=1}^M\E(|\nabla_\theta L(\theta_m)|^2)= O\left(M^{-1}\right).
\end{equation*}
In Theorem \ref{thm: high prob conv of variational monte carlo} we then show that under the same assumptions, and for any $\delta \in (0,1)$ the iterates satisfy
\begin{equation*}
    \frac{1}{M}\sum_{m=1}^M|\nabla_\theta L(\theta_m)|^2 =O\left( \frac{\left({\log(4M/\delta)+1/4}\right)^{\frac{2(p-1)}{p}}}{M}\right)\;.
\end{equation*}
with probability $1-\delta$.

To the best of our knowledge, these results establish, for the first time, convergence of VMC to a stationary point under realistic moment assumptions.

\begin{remark}
In this work, we assume access to a stochastic oracle that provides i.i.d. samples from the distribution $|\psi_\theta|^2 / ||\psi_\theta||_{L^2}^2$. In practical VMC implementations, however, the samples are typically generated using Markov chain Monte Carlo (MCMC). The work \cite{li2024convergenceanalysisstochasticgradient} analyzes the convergence of stochastic gradient methods with MCMC sampling, but its analysis relies on substantially stronger moment assumptions than the ones considered in the present work. We leave the extension of our results to the practically relevant MCMC setting to future work.
\end{remark}

\begin{remark}
Clipping only the local energy observations is a common practice for stabilizing training in variational Monte Carlo \cite{hermann2023abinitioReview,hermann2020deep,gao2022pesnet, gao2023samplingFree, vonGlehn2023psiformer, gerard2022goldStandard}. In Appendix \ref{sec: In expectation convergence of Energy Clipping}, we show that this is sufficient to guarantee convergence in expectation, but not convergence with high probability in the heavy-tailed regime. In practice, local energy clipping is usually combined with a \emph{global} clipping, or norm constraint, of the stochastic gradient. It is possible that this combination is also sufficient to ensure convergence with high probability. However, the preliminary experiments in Section \ref{sec: Experiments} indicate that incorporating per-sample clipping of the 
gradient random variables can improve the stability of practical VMC training.
\end{remark}
\subsubsection{Experiments}
We complement our theoretical results with preliminary numerical experiments using FermiNet \cite{pfau2020ab,spencer2020better} applied to atoms with up to 18 electrons. 
Our results in Section \ref{sec:optimization} suggest that, in the heavy-tailed regime of VMC, 
one should clip both the local energies and the individual log-gradient contributions before averaging. The experiments in 
Section \ref{sec: Experiments} are intended to test this principle in a realistic setting. To this end, we train FermiNet on Sulfur and Argon, using the standard components of modern neural-network-based VMC such as MCMC sampling, 
KFAC updates, and centered local-energy clipping. In order to stay close to competitive practical VMC 
training, the algorithm used in the experiments is not a literal implementation of the estimator analyzed in 
Section \ref{sec:optimization}. In particular, instead of clipping thresholds which increase with the sample index, we use constant 
centered clipping thresholds, in analogy with the clipping rule commonly used for the local energy. The principle of per-sample clipping however remains. 

Our experiments indicate that per-sample gradient clipping can improve the stability and performance of VMC. In particular, using only half the number of samples per batch typically employed to train FermiNet, the model trained with PS-Clip-VMC matches the best published energy for Argon and improves upon the best published energy for Sulfur; see Table~\ref{tab: final energies}. The robustness of PS-Clip-VMC is particularly evident in the training trajectory for Argon shown in Figure~\ref{fig: Ferminet training trajectories}. The standard method exhibits a sharp increase in energy around step 55,000, and is not able to recover and converge to a lower energy after that.

\begin{figure}[h]
    \centering
    \includegraphics[width=0.45\textwidth]{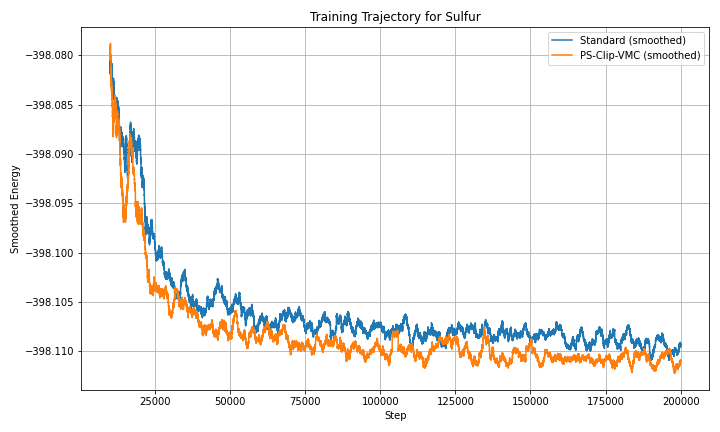}
    \includegraphics[width=0.45\textwidth]{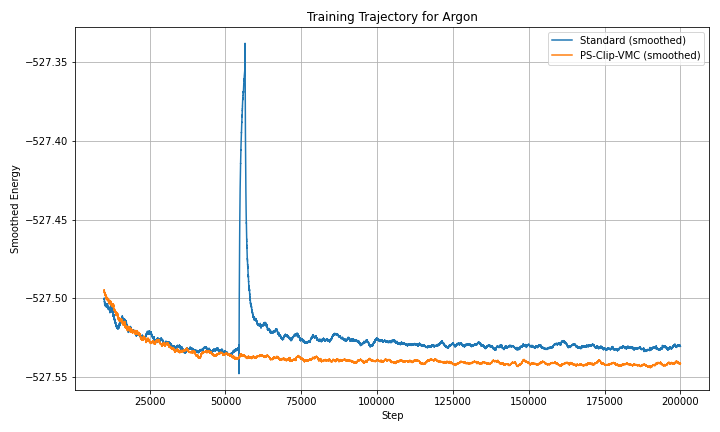}
    \hfill
    \caption{Training trajectories of Sulfur and Argon using a batchsize of 2048. The values are smoothed using a rolling mean of the last 2000 iterations.}
    \label{fig: Ferminet training trajectories}
\end{figure}
\begin{table}[h]
\centering
\begin{tabular}{c c c c}
\midrule
 Atom & {Standard} &{PS-Clip-VMC}&Best Published  \\

\midrule

S  & -398.1089(3)& \textbf{-398.1110(3)}&-398.110\textsuperscript{b} \\
Ar & -527.5304(4)&\textbf{-527.5420(3)}&-527.5419\textsuperscript{a} \\

\bottomrule
\end{tabular}
\caption{Final energies for Sulfur and Argon using a batch-size of 2048. The cited values are from a: DeepErwin \cite{gerard2022goldStandard}, b: CAS + experimental corrections \cite{chakravorty1993ground}.}
\label{tab: final energies}
\end{table}
\subsection{Related Work}
The literature on VMC is quite vast, see \cite{becca2017quantum} and references therein.  Especially for the Born-Oppenheimer multi-electron Hamiltonian the method has found renewed interest due to the empirical success of FermiNet and related methods, see \cite{pfau2020ab,hermann2020deep,renaud2025qmctorch,scherbela2023budget,scherbela2023foundationPreprint,scherbela2024transferableMolecules,scherbela2025fire,gerard2022goldStandard,gerard2025transferableSolids,hermann2023abinitioReview,gao2022pesnet,gao2023generalizing,gao2023samplingFree} and the survey paper \cite{gerard2024deepLearningVMCChapter}.
The mathematical study of these methods is still at its infancy. Important results concern the expressivity of Deep Learning VMC ans\"atze, see for example \cite{pang2022universalAntisymmetry,lin2023explicitly}. The optimization problem for VMC has been studied mathematically in \cite{abrahamsen2024convergence, li2024convergenceanalysisstochasticgradient}, albeit under moment conditions which -- as we show -- are not even satisfied for generic standard Slater-Jastrow ans\"atze. More recently, \cite{wan2026removingnodalsupportmismatchpathologies} proposed blurred sampling, a post-processing technique for mitigating the statistical pathologies caused by nodal singularities in VMC.

Regarding the existence of heavy tails, the most closely related work we found is the paper \cite{trail2008heavy}, where asymptotic expansions of the wavefunction and local energy are provided. Similarly to our first main result, Theorem \ref{thm:mainmomentslater}, it is concluded that the energy random variable $E_\theta$ is heavy-tailed. Therefore, to our knowledge, \cite{trail2008heavy} should be credited with first discovering the heavy-tailed nature of VMC algorithms. We note, however, that the main focus of \cite{trail2008heavy} is to establish CLT-like results in order to construct asymptotic confidence intervals. Moreover, we emphasize that the results in \cite{trail2008heavy} are non-rigorous and do not prove the validity of its asymptotic expansions (which would indeed require complicated arguments along the lines of our Lemma \ref{lem:perturbation}). Finally, unlike our work, \cite{trail2008heavy} does not establish the existence of a certain number of moments or any rigorous convergence theorems.

\subsection{Notation }
For $K\in \N$ we write $[K]:=\{1,\dots , K\}$. Moreover for $p\in [1,\infty]$, $k,d\in \N$ and $\Omega\subset \R^d$ with smooth boundary we write $L^p(\Omega)$, $H^k(\Omega)$, $C^k(\Omega)$, and $L^p_{\mathrm{loc}}(\Omega)$ for the Lebesque space with integrability $p$, the $k$-th order Sobolev space, the space of $k$-times continuously differentiable functions, and the space of locally $p$-integrable functions. For $N,d\in \N$ and $\mathbf{x}=(x_1,\dots , x_N)\in \R^{d\times N}$ we occasionally write $\hat{\mathbf{x}}_i:=(x_1,\dots , x_{i-1},x_{i+1},\dots , x_N)\in \R^{d\times (N-1)}$.
\subsection{Outline}
The outline is as follows. In Section \ref{sec:nonexistencemoments} we prove the non-existence of moments for general Slater-Jastrow ans\"atzte. In Section \ref{sec: Existence of Moments} we prove the existence of a minimal number of moments for real-analytic ans\"atze. 
In Section \ref{sec:optimization} we introduce and analyze novel optimization algorithms and prove their convergence. Finally, in Section \ref{sec: Experiments} we present computational experiments. 
\section{Non-existence of Moments for a Slater-Jastrow Ansatz}
\label{sec:nonexistencemoments}
The present section is devoted to proving that the random variables $E_\theta$ for the energy and $Z_\theta$ for the energy gradient are generically heavy tailed. 
\subsection{Formulation of the Main Result}
We start with describing the main settings of this section, starting with our main notation and assumptions for hamiltonians and associated random quantities.  
\begin{setting}[Hamiltonian]\label{set:Hamiltonian}
    For $d\in \N$, $\Omega\subset \R^d$ open and $\psi\in H^2(\Omega)$ let $\mathcal{V}\in L^1_{\mathrm{loc}}(\Omega)$ be such that the \emph{Hamiltonian} $$\mathcal{H}:\left\{ \begin{array}{ccc}H^2(\Omega)&\to &L^2(\Omega)\\ \psi& \mapsto & -\Delta \psi + \mathcal{V}\psi\end{array}\right.$$ is bounded
    (see for example \cite[Theorem 4.3]{kato:76:perturbation} for suitable conditions for this to hold). 
    
    For $\mathbf{x}\in \R^d$ and $\psi\in H^2(\Omega)$ we define the  \emph{local energy} $E_{L,\psi}(\mathbf{x}):=\frac{\mathcal{H}\psi(\mathbf{x})}{\psi(\mathbf{x})}$ and the \emph{density} $p_\psi(\mathbf{x}):=\frac{|\psi(\mathbf{x})|^2}{\|\psi\|^2_{L^2}}$.
    For $X_\psi\sim p_\psi$ we call the random variable $E_\psi:=E_{L,\psi}(X_\psi)$ the \emph{energy random variable}. It satisfies that $\mathbb{E}[E_\psi]= \frac{1}{\|\psi\|^2_{L^2}}\langle \mathcal{H}\psi,\psi\rangle_{L^2}$.
    
    If $\R^P\ni \theta \mapsto \psi_\theta\in H^2(\Omega)$ is a parametrized model such that the map $\theta\mapsto \psi_\theta\in L^2(\Omega)$ is differentiable, we denote $p_\theta:=p_{\psi_\theta}$, $X_\theta:=X_{\psi_\theta}$, $E_{L,\theta}:=E_{L,\psi_\theta}$, $E_\theta:=E_{\psi_\theta}$, and $W_\theta:=\frac{\nabla_\theta\psi_\theta(X_\theta)}{\psi_\theta(X_\theta)}$.
    We call the random variable $$Z_\theta:=  E_\theta \cdot W_\theta-\mathbb{E}[E_\theta]\cdot \mathbb{E}\left[W_\theta\right]$$ the \emph{energy gradient random variable}. It satisfies that 
    $$2\mathbb{E}[Z_\theta] = \nabla_\theta \mathbb{E}[E_\theta] \quad \mbox{and}\quad \mathbb{E}\left[W_\theta\right] = \frac12\frac{\nabla_\theta \|\psi_\theta\|_{L^2}^2}{\|\psi_\theta\|_{L^2}^2},$$ see \cite[Section 2.2]{armegioiu2025functional}. 

     %For $A\subset \R^d$ open we denote with $C^\omega(A)$ the set of real-analytic functions on $A$.
\end{setting}
The present section focuses on fermionic many-particle systems which are formally introduced in the following setting.
\begin{setting}[Fermionic Many-Particle Systems and Slater Determinants]\label{set:HF}
    Assume Setting \ref{set:Hamiltonian}. For $N\in \N$, $A\subset \R^3$ and $F(A^N)$ a function space on $A^N$ (such as $L^2(A^N)$ or $C(A^N)$) we denote
    $$
        F_a\left(A^N\right):=\left\{\psi\in F\left(A^N\right):\ \forall \sigma \in \mathfrak{S}_N,\ \mathbf{x}=(x_1,\dots , x_N)\in A^N:\ \psi(x_{\sigma(1)},\dots , x_{\sigma(N)}) = (-1)^{\mathrm{sgn}(\sigma)}\psi(x_1,\dots , x_N)\right\},
    $$
    where $\mathfrak{S}_N$ denotes the set or permutations of $[N]$ and for a permutation $\sigma\in \mathfrak{S}_N$ we denote with $\mathrm{sgn}(\sigma)$ its signature. If in Setting \ref{set:Hamiltonian} it holds that $\Omega = A^N$ and $\mathcal{H}$ is restricted to $H^2_a(\Omega)$, we speak of a \emph{fermionic many-particle system}.
    
    For $h_1,\dots , h_N\in C(\R^3)$ we define the \emph{Slater determinant} $|h_1,\dots , h_N\rangle \in C_a(\R^{3\times N})$ via 
    \begin{equation}\label{eq:SDdef}
        |h_1,\dots , h_N\rangle(x_1,\dots, x_N):=\det\begin{pmatrix}h_1(x_1)& \dots & h_N(x_1)\\\vdots & \ddots & \vdots \\h_1(x_N) & \dots & h_N(x_N)\end{pmatrix},\quad  (x_1,\dots , x_N)\in \R^{3\times N}.
    \end{equation}
    For $M\in \N$ and $\mathbf{R}=(R_1,\dots , R_M)\in \R^{3\times M}$ define the \emph{collision set}
    $$
        \Sigma(\mathbf{R}):=\{(x_1,\dots , x_N)\in \R^{3\times N}:\ \exists i\in [N],\ j\in [N]\setminus \{i\}:\ x_i = x_j\}\cup \bigcup_{i\in [N]}(\R^3)^{i-1}\times \bigcup_{j\in [M]}\{R_j\}\times (\R^3)^{N-i}
    $$
    and assume that $\mathcal{V}\in C(\R^{3\times N}\setminus \Sigma(\mathbf{R}))$.
    
\end{setting}
Finally, we introduce the Slater-Jastrow ansatz for the numerical solution of \eqref{eq:RayleighRitzElectronic}. 
\begin{setting}[Slater-Jastrow Ansatz with variable exponent Slater-Type Orbitals (STOs)]\label{set:slaterorbs}
    Assume Settings \ref{set:Hamiltonian} and \ref{set:HF}.
    For $l\in \N_0$ let $Y_{l,-l},\dots , Y_{l,l}$ be \emph{spherical harmonics}, e.g., an ONB of 
    $$
        \mathfrak{H}_l:=\{u\in L^2(\mathbb{S}^2):\ u\ \textrm{harmonic homogeneous polynomial of degree }l\ \}\subset L^2(\mathbb{S}^2).
    $$
    Given $L,I,K\in \N$, $\alpha_1,\dots , \alpha_I\in [0,\infty)$ the corresponding \emph{Slater-type orbitals (STOs) with variable exponents} (see for example \cite[Section 6.5.6]{helgaker2013molecular}) are defined as
    \begin{equation}\label{eq:STOdef}
        \R^3\ni x \mapsto h_{\alpha_i,l,m,k}(x):=Y_{l,m}\left(\frac{x}{|x|}\right)|x|^{l+k}e^{-\alpha_i|x|},\quad l=0,\dots,L,\ m=-l,\dots , l,\ k=0,\dots, K-1,\ i=1,\dots , I.
    \end{equation}
    For $\bm{\alpha}\in [0,\infty)^I$, $\mathbf{c} =\left(c_{l,m,k,i}\right)_{l=0,\dots,L,\ m=-l,\dots , l,\ k=0,\dots, K-1,\ i=1,\dots , I}\in \R^{(L+1)^2KI}$ and $x\in \mathbb{R}^3$ we denote 
    $$
        h_{(\bm{\alpha},\mathbf{c})}(x):=\sum_{l,m,k,i}c_{l,m,k,i}h_{\alpha_i,l,m,k}(x).
    $$
    Let  $\Sigma:=\Sigma(0)$
    and $B,J,N\in \N$ and $\R^J\ni \beta\mapsto \mathcal{J}_\beta\in C^2(\R^{3\times N}\setminus \Sigma)$ smooth with $\mathcal{J}_\beta$ symmetric and positive. 
    Then, for $((\bm{\alpha}_a^b,\mathbf{c}_a^b))_{a=1,\dots , N,\ b=1,\dots , B}\in \left([0,\infty)^{I}\times \R^{(L+1)^2KL}\right)^{NB}\cong[0,\infty)^{NBI}\times \R^{NB(L+1)^2KL}$ and $\theta:=\left(\beta,((\bm{\alpha}_a^b,\mathbf{c}_a^b))_{a=1,\dots , N,\ b=1,\dots , B}\right)\in \R^J\times [0,\infty)^{NBI}\times \R^{NB(L+1)^2KL}$ we denote
    $$
        \psi_\theta(\mathbf{x})\coloneqq\mathcal{J}_\beta(\mathbf{x})\cdot\left(\sum_{b=1}^B |h_{(\bm{\alpha}_1^b,\mathbf{c}_1^b)},\dots ,h_{(\bm{\alpha}_N^b,\mathbf{c}_N^b)}\rangle(\mathbf{x}) \right)
    $$
    as the Slater-Jastrow ansatz with variable exponent Slater-type orbitals.
\end{setting}
We can now rigorously formulate our first main theorem. 
\begin{theorem}\label{thm:mainmomentslater}
    Assume Settings \ref{set:Hamiltonian} and \ref{set:slaterorbs}. Assume further that $L\geq 1$ and $KI\geq 3N-3$. 
    Then, the set
    \begin{equation}
        \left\{\theta\in \R^J\times [0,\infty)^{NBI}\times \R^{NB(L+1)^2KL}: \mathbb{E}\left[|E_{\theta}|^3\right]=\infty \quad \land \quad \mathbb{E}\left[|Z_\theta|^{3/2}\right]=\infty\right\} 
    \end{equation}
    contains a dense open subset in $\R^J\times [0,\infty)^{NBI}\times \R^{NB(L+1)^2KL}$. In other words, nonexistence of a $3$rd moment of $E_\theta$ and of a $3/2$-th moment of $Z_\theta$ constitutes a \emph{generic property}. 
\end{theorem}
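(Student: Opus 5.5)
The proof splits into a local analytic statement and a genericity statement. Call $\theta$ \emph{good} if there is a point $\mathbf{x}_0\in\R^{3\times N}\setminus\Sigma$ with
\[
\psi_\theta(\mathbf{x}_0)=0,\qquad \nabla_{\mathbf{x}}\psi_\theta(\mathbf{x}_0)\neq0,\qquad \Delta\psi_\theta(\mathbf{x}_0)\neq0,\qquad \nabla_\theta\psi_\theta(\mathbf{x}_0)\neq 0 .
\]
This is the content of Lemma \ref{lem:regularvalue} and Lemma \ref{lem:perturbation} referred to in the introduction.

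The first step is to show that every good $\theta$ has $\mathbb{E}[|E_\theta|^3]=\infty$ and $\mathbb{E}[|Z_\theta|^{3/2}]=\infty$. Since $\mathbf{x}_0\notin\Sigma$, the potential $\mathcal{V}$ is continuous near $\mathbf{x}_0$, so $\mathcal{V}\psi_\theta/\psi_\theta=\mathcal{V}$ is locally bounded. By the implicit function theorem, choose local coordinates $(t,y)$ with $t$ the signed distance to the nodal hypersurface $\{\psi_\theta=0\}$. Then $\psi_\theta=a(y)t+O(t^2)$ with $a\neq0$, and $\Delta\psi_\theta=b(y)+O(t)$ with $b(y)\neq 0$ near $y_0$. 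Consequently $E_{L,\theta}=-b/(at)+O(1)$, and the density is $p_\theta\asymp t^2$. Hence
\[
\mathbb{E}|E_\theta|^3\gtrsim\int\!\!\int |t|^{-3}\,t^2\,dt\,dy=\infty .
\]
For $Z_\theta$, the quantity $\nabla_\theta\psi_\theta$ is continuous and nonzero at $\mathbf{x}_0$, hence equals some $c(y)\neq0$ up to $O(t)$. Thus $|E_\theta W_\theta|\asymp|t|^{-2}$, and the constant shift $\mathbb{E}[E_\theta]\mathbb{E}[W_\theta]$ is irrelevant whenever it is finite (if it is not, $Z_\theta$ is undefined, and we interpret the statement accordingly). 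Then
\[
\mathbb{E}|Z_\theta|^{3/2}\gtrsim\int |t|^{-3}t^2\,dt=\infty .
\]
One caveat: $\psi_\theta$ near $\mathbf{x}_0$ must be $C^2$ jointly in $(\mathbf{x},\theta)$ off $\Sigma$. This holds because the STOs are smooth away from the origin, and $\mathcal{J}_\beta$ is $C^2$ and positive.

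The second step is openness. All four conditions are open in $(\theta,\mathbf{x})$, but the zero itself must persist. Because $\nabla_{\mathbf{x}}\psi_\theta(\mathbf{x}_0)\neq0$, the implicit function theorem (or simply the sign change along a short segment through $\mathbf{x}_0$) gives, for all $\theta'$ near $\theta$, a zero $\mathbf{x}'$ near $\mathbf{x}_0$. By joint continuity of the derivatives, the other three conditions persist at $(\theta',\mathbf{x}')$. Hence the set of good parameters is open.

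The third step, density, is the main obstacle (Lemma \ref{lem:perturbation}). Given any $\theta$ and any $\varepsilon>0$, we must find a good $\mu$ within distance $\varepsilon$ of $\theta$. Since the Jastrow factor is positive, we can write
\[
\Delta(\mathcal{J}D)=\mathcal{J}\Delta D+2\nabla\mathcal{J}\cdot\nabla D \quad\text{on }\{D=0\},
\]
where $D$ is the determinant part. So the conditions reduce to conditions on the determinant part together with a derivative term we can control. I would proceed in the following order.
\begin{enumerate}
\item If the determinant sum is identically zero, perturb one coefficient block to make a single determinant nonvanishing. STOs with $L\ge1$ contain the functions $e^{-\alpha|x|}$ and $x_ie^{-\alpha|x|}$, which suffice for this.
\item A nonzero antisymmetric real-analytic function on $\R^{3\times N}\setminus\Sigma$ changes sign. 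Antisymmetry under a transposition supplies a point of each sign, so there is a zero off $\Sigma$. By Sard's theorem applied to the family $\psi_\theta-s$, and by adding a small multiple of a constant-orbital perturbation, we obtain a regular zero.
\item Using the freedom $KI\ge 3N-3$, perturb the coefficients $c$ so that $\Delta\psi$ becomes nonzero at this zero while the zero stays regular. This requires the map from coefficients to the $2$-jet of $D$ at $\mathbf{x}_0$ along the nodal surface to be rich enough. The counting $KI\ge3N-3$ suggests we must interpolate prescribed values and derivatives of the radial parts at the relevant points, which is a Vandermonde / Müntz-type argument for $r^ke^{-\alpha_i r}$.
\item Finally, $\nabla_\theta\psi\neq0$ at the zero is automatic. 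The derivative with respect to a coefficient $c$ equals $\mathcal{J}$ times a minor-type determinant, which is nonzero at a generic point of the nodal set.
\end{enumerate}
I expect step 3, producing $\Delta\psi\neq0$ on the nodal set after an arbitrarily small perturbation, to be the genuinely hard part. I would attack it by restricting to a single electron coordinate $x_1$ with the other electrons frozen. Then $D$ is a linear combination of the orbitals in $x_1$ with cofactor weights, and the expressivity of the STO span lets us adjust $\Delta_{x_1}$ independently of value and gradient.

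Combining the three steps, the good parameters form an open dense subset of $\R^J\times[0,\infty)^{NBI}\times\R^{NB(L+1)^2KL}$ on which both moments are infinite, which proves the theorem.
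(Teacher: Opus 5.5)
Your first two steps are sound and coincide with the paper's Lemmas \ref{lem:regularvalue} and \ref{lem:regularopen}. The gap is in the density step, which carries the whole difficulty and which you only sketch.

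Your mechanism in item 3 is to freeze $x_2,\dots,x_N$ and use the STO span to adjust $\Delta_{x_1}$ independently of value and gradient. This does not work as stated. Changing the coefficients of one orbital, $v_1\mapsto v_1+\lambda v_\delta$, changes $\psi$ by
\[
\lambda\,\mathcal{J}(\mathbf{x})\sum_{i=1}^N(-1)^{i+1}v_\delta(x_i)\,|v_2,\dots,v_N\rangle(\hat{\mathbf{x}}_i),
\]
so $v_\delta$ enters at \emph{every} electron position. It shifts the value at $\mathbf{x}$ through $v_\delta(x_i)$, $i\ge2$. It shifts the full Laplacian $\sum_i\Delta_{x_i}$ through $\Delta v_\delta(x_i)$ and through cross terms involving $\nabla_i\mathcal{J}\cdot\nabla v_\delta(x_i)$.

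Keeping $\mathbf{x}$ a zero while steering $\nabla\psi$ and $\Delta\psi$ therefore requires controlling $v_\delta,\nabla v_\delta,\Delta v_\delta$ at all $N$ points of $\R^3$ at once, inside a fixed finite-dimensional span. Such multivariate interpolation is not automatic (cf.\ Mairhuber--Curtis). The budget $KI\ge3N-3$ also does not allow three radial conditions at each of $N$ distinct radii. Your Vandermonde/M\"untz remark does not say how the three-dimensional conditions reduce to one-dimensional ones within that budget.

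Two side claims also fail.
\begin{itemize}
\item In item 2, $\psi_\theta-s$ is not in the model class. An orbital perturbation adds the non-constant antisymmetric function $\lambda\mathcal{J}\sum_i(-1)^{i+1}u(x_i)|v_2,\dots,v_N\rangle(\hat{\mathbf{x}}_i)$, not a constant, so Sard's theorem gives nothing.
\item In item 4, $\nabla_\theta\psi_\theta\neq0$ at the chosen zero is not automatic; it has to be built into the choice of the point.
\end{itemize}

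The missing idea, which is exactly where $L\ge1$ and $KI\ge3N-3$ enter, is the construction in Lemmas \ref{lem:interpolation}--\ref{lem:perturbation}.

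\emph{The special perturbation.} Perturb only by $v_\delta(x)=(w\cdot x)v_r(|x|)$. For these, $\nabla v_\delta(x)=w\,v_r(|x|)+(w\cdot x)v_r'(|x|)\,x/|x|$ and $\Delta v_\delta(x)=(w\cdot x)\bigl(v_r''(|x|)+4v_r'(|x|)/|x|\bigr)$.

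\emph{The special nodal point.} Choose it with $x_1,x_2$ non-collinear, $|x_1|=|x_2|$, all other radii distinct, and take $w\perp x_2$ with $w\cdot x_1\neq0$. Then it suffices that $v_r$ vanishes at the $N-1$ radii, that $v_r'=v_r''=0$ at $|x_3|,\dots,|x_N|$, and that $v_r'(|x_1|),v_r''(|x_1|)$ are prescribed freely. This is the $(N-1)$-HIP, which holds for $\mathrm{span}\{r^ke^{-\alpha_ir}\}$ by the extended Chebyshev property once $KI\ge3(N-1)$. The resulting $v_\delta$ has vanishing value, gradient and Laplacian at $x_2,\dots,x_N$. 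It vanishes at $x_1$ and has freely prescribable gradient (along $x_1$) and Laplacian there.

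\emph{Existence of that point.} Such a nodal point exists for \emph{every} continuous antisymmetric $\psi$, together with a nonvanishing cofactor $\tau:=|v_2,\dots,v_N\rangle(\hat{\mathbf{x}}_1)\neq0$. Move two electrons along antipodal points $\gamma(t),\gamma(t+\pi)$ of a small circle of constant norm centred at $z_\ast$, inside a neighbourhood where the cofactor does not vanish. Then $g(t):=\psi(\gamma(t),\gamma(t+\pi),z_3,\dots,z_N)$ satisfies $g(\pi)=-g(0)$, so $g$ has a zero.

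\emph{Conclusion.} Since $\mathcal{J}(\mathbf{x})\tau\neq0$, both $\partial_{x_{1,1}}\psi$ and $\Delta\psi$ at $\mathbf{x}$ are affine in the free data $(\nu^{(1)},\nu^{(2)})$. The coefficient of $\nu^{(1)}$, respectively $\nu^{(2)}$, is $\mathcal{J}(\mathbf{x})\tau$. This gives regularity and $\Delta\psi\neq0$ at once, with no Sard step. The same $\tau\neq0$ gives $\nabla_{\mathbf{c}}\psi\neq0$, via some $u$ in the STO span with $u(x_1)\neq0=u(x_i)$ for $i\ge2$.
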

\subsection{Proof of the Main Result}
We now turn to the proof of Theorem \ref{thm:mainmomentslater}. At a high level we will show that every $\psi_\theta$ has an arbitrarily small perturbation with certain regular zeros. 
Then, the existence of such zeros implies a lack of moments, as the following elementary results show. 
\begin{lemma}\label{lem:regularvalue}
    Assume Setting \ref{set:Hamiltonian}, let $A\subset \Omega$ open with $\mathcal{V}\in C(A)$ and $\psi\in C^2(A)$. Assume there exists $\mathbf{x}\in A$ so that
    \begin{equation}\label{eq:regularvalue}
        \psi(\mathbf{x})=0,\quad \nabla \psi(\mathbf{x})\neq 0,\quad \mbox{and}\quad \Delta\psi(\mathbf{x})\neq 0.
    \end{equation}
    Then, it holds that 
    \begin{equation}\label{eq:momentinfinity}
        \mathbb{E}\left[|E_\psi|^3\right]=\infty.
    \end{equation}
    If $\psi = \psi_\theta$ and additionally $\nabla_\theta \psi_\theta(\mathbf{x})\neq 0$ with $\nabla_\theta\psi_\theta$ continuous at $\mathbf{x}$, then it also holds that
    \begin{equation}\label{eq:gradientmomentinfinity}
        \mathbb{E}\left[|Z_\theta|^{3/2}\right]=\infty \;.
    \end{equation}
\end{lemma}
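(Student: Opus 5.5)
The plan is to localize near the regular zero $\mathbf{x}$ and straighten the nodal surface. Since $\nabla\psi(\mathbf{x})\neq 0$, the implicit function theorem gives a neighbourhood $U\subset A$ of $\mathbf{x}$ and a $C^1$ change of coordinates $\Phi:U\to (-\varepsilon,\varepsilon)\times B'$, with $B'\subset\R^{d-1}$ a small ball, of the form $\Phi(\mathbf{y})=(\psi(\mathbf{y}),\mathbf{y}')$. Here $\mathbf{y}'$ are $d-1$ coordinates complementary to the direction $\nabla\psi(\mathbf{x})$. In these coordinates $\psi$ is simply the first coordinate $t$, and the Jacobian of $\Phi^{-1}$ is bounded above and below on $U$ after shrinking. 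By continuity of $\Delta\psi$ and $\mathcal{V}$ on $A$, we may also shrink $U$ so that $|\Delta\psi|\ge c>0$ and $|\mathcal{V}|\le C$ there. Then $|\mathcal{H}\psi|=|-\Delta\psi+\mathcal{V}\psi|\ge c-C|\psi|\ge c/2$ on $U$ once $\varepsilon$ is small. Hence $|E_{L,\psi}|\ge \tfrac{c}{2}|\psi|^{-1}$ on $U$.

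For \eqref{eq:momentinfinity}, we drop the complement of $U$, which is allowed since the integrand is nonnegative:
$$\|\psi\|_{L^2}^2\,\mathbb{E}|E_\psi|^3\ \ge\ \int_U |E_{L,\psi}|^3|\psi|^2\,d\mathbf{y}\ \ge\ (c/2)^3\int_U|\psi|^{-1}\,d\mathbf{y}\ \gtrsim\ \int_{B'}\int_{-\varepsilon}^{\varepsilon}|t|^{-1}\,dt\,d\mathbf{y}'=\infty.$$
The last step uses the change of variables through $\Phi$ together with the Jacobian lower bound.

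For \eqref{eq:gradientmomentinfinity}, we use continuity of $\nabla_\theta\psi_\theta$ at $\mathbf{x}$ and shrink $U$ further so that $|\nabla_\theta\psi_\theta|\ge c'>0$ on $U$. Then
$$|E_{L,\theta}\,W_{L,\theta}|\ \ge\ \frac{cc'}{2}\,|\psi_\theta|^{-2}\quad\text{on } U,$$
where $W_{L,\theta}=\nabla_\theta\psi_\theta/\psi_\theta$ is the pointwise version of $W_\theta$. The subtracted term $\mathbb{E}[E_\theta]\mathbb{E}[W_\theta]$ is a constant vector; if it is infinite or undefined there is nothing to prove, so assume it is finite. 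By $|a-b|^{3/2}\ge 2^{-1/2}|a|^{3/2}-|b|^{3/2}$, it only contributes a finite amount on the finite-measure set $U$. It then suffices to show $\int_U |\psi_\theta|^{-3}|\psi_\theta|^2\,d\mathbf{y}=\int_U|\psi_\theta|^{-1}\,d\mathbf{y}=\infty$, which is the same divergent integral as before.

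The main obstacle is not the divergence itself but the bookkeeping: the constant $\mathbb{E}[E_\theta]\mathbb{E}[W_\theta]$ must be well-defined, and $U$ must be shrunk so that the lower bounds on $\mathcal{H}\psi$ and $\nabla_\theta\psi_\theta$ hold simultaneously. Only continuity of $\nabla_\theta\psi_\theta$ at the single point $\mathbf{x}$ is assumed, but since it is nonzero there, a small ball around $\mathbf{x}$ suffices. The straightening map $\Phi$ only needs $\psi\in C^1$ near $\mathbf{x}$, which holds because $\psi\in C^2(A)$.
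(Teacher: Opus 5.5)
Your proposal is correct and follows essentially the same route as the paper: localize at the regular zero, use continuity to keep $|\mathcal{H}\psi|$ (and $|\nabla_\theta\psi_\theta|$) bounded away from zero, and reduce both moments to the divergent integral $\int_{-\varepsilon}^{\varepsilon}|t|^{-1}\,dt$. The differences are cosmetic. You straighten $\psi$ into a coordinate via the inverse function theorem, so that $|\psi|=|t|$ up to a bounded Jacobian, whereas the paper writes the nodal set as a graph and uses a Taylor bound $|\psi|\lesssim|\tau|$. Your explicit treatment of the centering constant via $|a-b|^{3/2}\ge 2^{-1/2}|a|^{3/2}-|b|^{3/2}$ makes the paper's ``dominant term'' remark rigorous; that constant is automatically finite here by Theorem~\ref{thm:momentexistenceenergy}, so the case distinction is unnecessary.
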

\begin{proof}  
Let $d = 3n$ and $\mathbf{x}\in A$ with $\psi(\mathbf{x})=0$
and $\nabla\psi(\mathbf{x})\neq 0$ and $\Delta \psi(\mathbf{x})\neq 0$. 
For $\mathbf{y}\in \R^d$ we denote $\mathbf{y}_{d-1}:=(y_1,\dots , y_{d-1})$.
Without loss of generality, suppose that $\frac{\partial}{\partial x_d}\psi(\mathbf{x})\neq 0$ and $\|\psi\|_{L^2(\Omega)}=1$.
Then, by the implicit function theorem there exists an open set $U\subset \R^{d-1}$ with $\mathbf{x}_{d-1}\in U$ and $g\in C^1(U)$ with 
\begin{equation}\label{eq:implicitfunction}
    g(\mathbf{x}_{d-1}) = x_d\quad \land \quad \forall \,\mathbf{y}_{d-1}\in U:\ \psi(\mathbf{y}_{d-1},g(\mathbf{y}_{d-1}))=0.
\end{equation}
For $\varepsilon>0$ denote
$$
    B_\varepsilon\coloneqq\{ (\mathbf{y}_{d-1},g(\mathbf{y}_{d-1})+\tau):\ \mathbf{y}_{d-1}\in U,\ \tau\in (-\varepsilon,\varepsilon)\}\subset \R^{d}.
$$
Then, $B_\varepsilon$ is open and for every integrable $\varphi:B_\varepsilon \to \mathbb{R}$ it holds that
$$
    \int_{B_\varepsilon}\varphi(\mathbf{y}) d\mathbf{y}= \int_{-\varepsilon}^\varepsilon \int_U\varphi(\mathbf{y}_{d-1},g(\mathbf{y}_{d-1})+\tau)d\mathbf{y}_{d-1}d\tau.
$$
By noting that $\mathcal{V}(\mathbf{x})\psi(\mathbf{x})=0$ and shrinking $U$ and $\varepsilon$ if necessary, we can ensure that 
$B_\varepsilon\subset A$ and
\begin{equation}\label{eq:lowerH}
    \inf_{\mathbf{y}\in B_\varepsilon}|\mathcal{H}\psi(\mathbf{y})|=:\delta>0.
\end{equation}
Now, note that 
\begin{align}
     \mathbb{E}\left[|E_\psi|^3\right]&=\int_{\Omega}\left|\frac{\mathcal{H}\psi(\mathbf{y})}{\psi(\mathbf{y})}\right|^3 |\psi(\mathbf{y})|^2d\mathbf{y} \\ 
     &\geq
     \delta^3\int_{B_\varepsilon}|\psi(\mathbf{y})|^{-1}d\mathbf{y} \\
     &= 
     \delta^3 \int_{-\varepsilon}^\varepsilon \int_U |\psi(\mathbf{y}_{d-1},g(\mathbf{y}_{d-1})+\tau)|^{-1}d\mathbf{y}_{d-1}d\tau.
     \label{eq:intlower}
\end{align}
Furthermore, using \eqref{eq:implicitfunction}, it holds for $\mathbf{y}_{d-1}\in U$ and $\tau\in (-\varepsilon,\varepsilon)$ that
\begin{align*}
    \psi(\mathbf{y}_{d-1},g(\mathbf{y}_{d-1})+\tau) &=\psi(\mathbf{y}_{d-1},g(\mathbf{y}_{d-1}))+\tau \cdot \frac{\partial}{\partial x_d}\psi(\mathbf{y}_{d-1},g(\mathbf{y}_{d-1})) + \eta(\mathbf{y}_{d-1},\tau) \\
    &= \tau \cdot \frac{\partial}{\partial x_d}\psi(\mathbf{y}_{d-1},g(\mathbf{y}_{d-1})) + \eta(\mathbf{y}_{d-1},\tau)
\end{align*}
with 
$$
    |\eta(\mathbf{y}_{d-1},\tau)|\le \tau^2 \|\psi\|_{C^2(B_\varepsilon)}.
$$
Therefore, for all $\mathbf{y}_{d-1}\in U$ and $\tau\in (-\varepsilon,\varepsilon)$ we have that
$$
    \left|\psi(\mathbf{y}_{d-1},g(\mathbf{y}_{d-1})+\tau)\right|\le |\tau|(1+\varepsilon)\|\psi\|_{C^2(B_\varepsilon)}\;.
$$
Inserting this inequality into \eqref{eq:intlower} yields that
\begin{equation*}
    \mathbb{E}\left[|E_\psi|^3\right]\geq   \frac{|U|\delta^3}{(1+\varepsilon)\|\psi\|_{C^2(B_\varepsilon)}} \int_{-\varepsilon}^\varepsilon \frac{1}{|\tau|}d\tau = \infty ,
\end{equation*}
which proves \eqref{eq:momentinfinity}.

To prove \eqref{eq:gradientmomentinfinity}, we can apply the same reasoning to the integral
\begin{equation}
    \int_{B_\varepsilon}\left|\frac{\mathcal{H}\psi(\mathbf{y})}{\psi(\mathbf{y})}\frac{|\nabla_\theta\psi(\mathbf{y})|}{\psi(\mathbf{y})}\right|^{3/2} |\psi(\mathbf{y})|^2d\mathbf{y} \;,
\end{equation}
and use the fact that (by the assumption that $\nabla_\theta\psi_\theta(\mathbf{x})\neq 0$ and shrinking $U,\varepsilon$ if needed) it holds that $|\nabla_\theta\psi(\mathbf{y})|>0$ for all $\mathbf{y}\in B_\varepsilon$. 
This implies that
\begin{equation}\label{eq:Zdominant}
    \int_{B_\varepsilon}\left|\frac{\mathcal{H}\psi(\mathbf{y})}{\psi(\mathbf{y})}\frac{|\nabla_\theta\psi(\mathbf{y})|}{\psi(\mathbf{y})}\right|^{3/2} |\psi(\mathbf{y})|^2d\mathbf{y} = \infty .
\end{equation}
Since \eqref{eq:Zdominant} is the dominant term in $Z_\theta$,  it follows that $\mathbb{E}[|Z_\theta|^{3/2}]=\infty$.  
\end{proof}
\begin{lemma}\label{lem:regularopen}
    Let $A\subset \R^{3\times N}$ open.  Then,
    $$
        \mathcal{O}:=\{\psi\in C^2(A):\ \exists \mathbf{x}\in A:\  \psi(\mathbf{x})=0,\  \nabla \psi(\mathbf{x})\neq 0,\  \mbox{and}\  \Delta\psi(\mathbf{x})\neq 0\}
    $$
    is open in the local (compact-open) topology of $C^2(A)$.

    If $\psi = \psi_\theta$ such that the mapping $\R^P\ni\theta\mapsto (\psi_\theta,\nabla_\theta\psi_\theta)\in C^2(A)\times C(A,\R^P)$ is continuous in the local (compact-open) topology, then the set   
    $$
        \mathcal{O}':=\{\theta\in \R^P:\ \exists \mathbf{x}\in A:\  \psi_\theta(\mathbf{x})=0,\  \nabla \psi_\theta(\mathbf{x})\neq 0,\   \Delta\psi_\theta(\mathbf{x})\neq 0, \mbox{and}\  \nabla_\theta\psi_\theta(\mathbf{x})\neq 0\}
    $$
    is open.
\end{lemma}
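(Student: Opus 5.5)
The argument is a stability argument: the defining conditions are open conditions on finitely many derivatives at a point, and a nondegenerate zero persists under small $C^1$ perturbations.

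Fix $\psi\in\mathcal{O}$ with witness $\mathbf{x}\in A$. The plan is to show that every $\phi$ sufficiently close to $\psi$ in $C^2$, uniformly on a small closed ball $K=\overline{B_r(\mathbf{x})}\subset A$, also lies in $\mathcal{O}$. The compact-open $C^2$ topology makes $\{\phi:\ \|\phi-\psi\|_{C^2(K)}<\epsilon\}$ a neighbourhood, so this suffices.

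First, by continuity of $\nabla\psi$ and $\Delta\psi$ at $\mathbf{x}$, shrink $r$ so that on $K$ we have $|\nabla\psi|\ge c$ and $|\Delta\psi|\ge c$ for some $c>0$. We may also assume, as in the proof of Lemma \ref{lem:regularvalue}, that $\partial_d\psi\ge c'>0$ on $K$ after reordering and reflecting coordinates. For $\epsilon<\min(c,c')/2$, any $\phi$ in the neighbourhood then satisfies $|\nabla\phi|\ge c/2$, $|\Delta\phi|\ge c/2$ and $\partial_d\phi\ge c'/2$ on $K$.

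The only real point is that $\phi$ still vanishes somewhere in $K$; I would show this with a one-dimensional intermediate value argument. Consider the segment $t\mapsto\mathbf{x}+te_d$ for $t\in[-s,s]\subset[-r,r]$. Since $\psi(\mathbf{x})=0$ and $\partial_d\psi\ge c'$, we get $\psi(\mathbf{x}\pm se_d)$ with $\pm\psi(\mathbf{x}\pm se_d)\ge c's$. If in addition $\epsilon<c's$, then $\phi(\mathbf{x}+se_d)>0>\phi(\mathbf{x}-se_d)$. The intermediate value theorem gives $\mathbf{y}$ on the segment with $\phi(\mathbf{y})=0$, and $\mathbf{y}\in K$, so $\nabla\phi(\mathbf{y})\ne0$ and $\Delta\phi(\mathbf{y})\ne0$. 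Hence $\phi\in\mathcal{O}$.

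For $\mathcal{O}'$, fix $\theta$ with witness $\mathbf{x}$. Additionally choose $r$ so that $|\nabla_\theta\psi_\theta|\ge c$ on $K$, using continuity of $\nabla_\theta\psi_\theta$. By the assumed continuity of $\mu\mapsto(\psi_\mu,\nabla_\mu\psi_\mu)$ into $C^2(A)\times C(A,\R^P)$ with compact-open topologies, there is a neighbourhood of $\theta$ on which $\|\psi_\mu-\psi_\theta\|_{C^2(K)}<\epsilon$ and $\sup_K|\nabla_\mu\psi_\mu-\nabla_\theta\psi_\theta|<c/2$. Running the argument above produces a zero $\mathbf{y}\in K$ of $\psi_\mu$ at which all four conditions hold, including $|\nabla_\mu\psi_\mu(\mathbf{y})|\ge c/2>0$. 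So $\mathcal{O}'$ is open.

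I expect no serious obstacle. The only care needed is the order in which the constants are chosen: first $r$, then $s$, then $\epsilon<\min(c,c')/2,\ c's$.
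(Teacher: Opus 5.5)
Your proof is correct and follows essentially the same route as the paper. Both arguments force a sign change of the perturbed function along a short segment through the zero, apply the intermediate value theorem, and use that $\nabla\phi\neq 0$ and $\Delta\phi\neq 0$ persist on a small compact neighbourhood under $C^2$-small perturbations; the paper takes the direction $\mathbf{y}$ with $\mathbf{y}\cdot\nabla\psi(\mathbf{x})>0$ and a second-order Taylor bound, whereas you take a coordinate direction and the mean value theorem. Your handling of $\mathcal{O}'$, which secures $|\nabla_\mu\psi_\mu|\ge c/2$ uniformly on $K$ because the zero moves with $\mu$, spells out what the paper calls immediate. One cosmetic caveat: under some $C^2$-norm conventions, bounding $|\Delta(\phi-\psi)|$ costs a factor of the dimension, so $\epsilon$ should be shrunk accordingly.
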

\begin{proof}
    Recall that in the local $C^2$ topology, for every $\psi\in C^2(A)$, every compact $B\subset A$ and for every $\varepsilon>0$, the set $V(B,\varepsilon,\psi):=\{\varphi\in C^2(A):\ \|\varphi - \psi\|_{C^2(B)}< \varepsilon\}$ is open. We will show that for every $\psi\in \mathcal{O}$ there is $B,\varepsilon$ such that $V(B,\varepsilon,\psi)\subseteq \mathcal{O}$.

    Let $\psi\in \mathcal{O}$ with $\mathbf{x}$ as in the definition of $\mathcal{O}$. Let $B\subset A$ be a $\tau$-neighborhood of $\mathbf{x}$ and let $\mathbf{y}\in \R^{3\times n}$ be a unit vector with 
    $$
        s:=\mathbf{y}\cdot \nabla \psi(\mathbf{x})>0.
    $$
    Then, it holds for all $t\in [-\tau,\tau]$ that 
    $$
        \varphi(\mathbf{x}+t\mathbf{y})=\varphi(\mathbf{x}) + ts + \eta_t,
    $$
    where $|\eta_t|\le \|\varphi\|_{C^2(B)}\cdot t^2$. By choosing 
    $$
        \tau<\frac{s}{\|\psi\|_{C^2(B)}}\quad \mbox{and}\quad \varepsilon <\frac{\tau s - \|\psi\|_{C^2(B)}\tau^2}{1+\tau+\tau^2}
    $$
    it is easy to see that, whenever $\varphi \in V(B,\varepsilon,\psi)$, we have that 
    $$
        \varphi(\mathbf{x} - \tau\mathbf{y})<0\quad \mbox{and}\quad \varphi(\mathbf{x} + \tau\mathbf{y})>0
    $$
    and therefore, there is $\tau_\ast\in (-\tau,\tau)$ such that with $\mathbf{x}_\ast:=\mathbf{x}+\tau_\ast\mathbf{y}\in B$ it holds that 
    $$
        \varphi(\mathbf{x}_\ast) = 0.
    $$
    By reducing $\tau,\varepsilon$ further if needed and using that $\nabla \psi(\mathbf{x})\neq 0$ and $ \Delta\psi(\mathbf{x})\neq 0$ we can ensure that 
    $$
        \nabla \varphi(\mathbf{x}_\ast)\neq 0\quad \mbox{and}\quad \Delta \varphi(\mathbf{x}_\ast)\neq 0. 
    $$
    This proves that $V(B,\varepsilon,\psi)\subset \mathcal{O}$ as needed.
    The statment on openness of $\mathcal{O}'$ follows now immediately from the continuity of the mapping $\theta\mapsto (\psi_\theta,\nabla_\theta\psi_\theta)$.
\end{proof}
The key difficulty in the proof of Theorem \ref{thm:mainmomentslater} is now to show that any $\psi_\theta$ can be slightly perturbed so that a zero as in Lemma \ref{lem:regularvalue} exists. We will ultimately achieve this by slightly changing the underlying Slater-type orbital, but our results will require sufficient expressivity of the underlying Slater-type orbitals. A key condition in this context is the following.  
\begin{definition}
    Let $V_r\subset C^2(0,\infty)$ be such that for $m\in \N$, every $0<\tau_1<\dots <\tau_m$ and every $(\nu_{j}^{(i)})_{j\in [m],\ i\in \{0,1,2\}}$ there is $v\in V_r$ with $\frac{d^i}{dt^i}v(\tau_j) = \nu_{j}^{(i)}$ for all $j\in [m],\ i\in \{0,1,2\}$. Then, $V_r$ is said to satisfy the $m$-Hermite interpolation property ($m$-HIP).
\end{definition}
We also mention that the $m$-HIP is closely related to $V_r$ constituting an extended Chebychev system \cite{pinkus2012n} -- a property that is known to be satisfied by exponentially weighted polynomials, as the following lemma shows. We emphasize however, that the $m$-HIP only applies to the radial part of a $3$-dimensional orbital. 
\begin{lemma}\label{lem:chebsys}
    If $V_r = \mathrm{span}\{r^k e^{\alpha_i r}:\ i=1,\dots I, k=0,\dots , m_i-1\}$ with $\alpha_i\in \R$ pairwise disjoint and $\sum_{i=1}^I m_i\geq 3m$, then $V_r$ satisfies the $m$-HIP.
\end{lemma}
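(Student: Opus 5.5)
The claim is a Hermite interpolation statement for exponential polynomials, so I would argue by linear algebra. Fix $0<\tau_1<\dots<\tau_m$ and consider the linear map
$$
    T:V_r\to \R^{3m},\qquad v\mapsto \bigl(v(\tau_j),v'(\tau_j),v''(\tau_j)\bigr)_{j\in[m]}.
$$
The $m$-HIP says exactly that $T$ is surjective for every such choice of nodes. Since $\dim V_r=\sum_i m_i\geq 3m$, it suffices to show that $T$ has rank $3m$.

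Reduce to the case $\sum_i m_i=3m$. The basis functions $r^ke^{\alpha_i r}$ can be ordered so that the ranks $m_i$ are decreased one at a time. Keeping any subset with multiplicities $m_i'\le m_i$, $\sum m_i'=3m$, gives a subspace $V_r'\subset V_r$ of the same type. Surjectivity on $V_r'$ then implies surjectivity on $V_r$. We may drop indices with $m_i'=0$; the remaining $\alpha_i$ stay pairwise distinct.

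For $\sum m_i=3m$, $T$ is a square map, so surjectivity is equivalent to injectivity. Suppose $v\in V_r'$, $v\neq0$, satisfies $v(\tau_j)=v'(\tau_j)=v''(\tau_j)=0$ for all $j$. Then $v$ has at least $3m$ zeros in $(0,\infty)$, counted with multiplicity.

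The key fact to invoke is the classical zero count for exponential polynomials: a nontrivial function $\sum_{i=1}^{I'} p_i(r)e^{\alpha_i r}$ with distinct real $\alpha_i$ and $\deg p_i\le m_i-1$ has at most $\sum_i m_i-1$ real zeros, counting multiplicity (Pólya–Szegő; equivalently, $V_r'$ is an extended Chebyshev system, cf.\ \cite{pinkus2012n}).

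I would prove this by induction on $\sum m_i$, which is the main step.
\begin{itemize}
\item Multiply by $e^{-\alpha_{I'} r}$; this does not change the zeros or their multiplicities.
\item Differentiate $m_{I'}$ times. This kills $p_{I'}$ and leaves a function of the same form with exponents $\alpha_i-\alpha_{I'}\neq0$ and degrees preserved, since $\frac{d}{dr}(p e^{\beta r})=(p'+\beta p)e^{\beta r}$ with $\beta\neq 0$ keeps the degree.
\item By Rolle's theorem (in the multiplicity-aware version), each differentiation loses at most one zero. So if $v$ had at least $\sum m_i$ zeros, the new function would have at least $\sum_{i<I'}m_i$ zeros. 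This contradicts the inductive hypothesis unless the new function is identically zero.
\item In that degenerate case, the linear independence of $\{r^ke^{\beta r}\}$ for distinct $\beta$ shows all $p_i$, $i<I'$, vanish. Then $v=p_{I'}e^{\alpha_{I'}r}$ is a polynomial times a positive exponential, with at most $m_{I'}-1$ zeros.
\end{itemize}
The base case $I'=1$ is the polynomial case.

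Since $3m>3m-1$, the zero count gives a contradiction, so $T$ is injective and hence surjective. The care needed is mainly in bookkeeping multiplicities through Rolle's theorem and in the degenerate case.
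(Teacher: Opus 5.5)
Your proposal is correct, and it reaches the result by a more elementary, self-contained route than the paper. The paper's proof is essentially a citation. It notes that $V_r=\ker\bigl((\frac{d}{dt}-\alpha_1)^{m_1}\cdots(\frac{d}{dt}-\alpha_I)^{m_I}\bigr)$, invokes Aldaz--Kounchev--Render (Theorem 9) to conclude that this kernel is an extended Chebyshev system, and then asserts that this implies the $m$-HIP.

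You instead prove the underlying zero bound yourself: a nonzero element of $V_r$ has at most $\sum_i m_i-1$ real zeros counted with multiplicity. You do this by the classical P\'olya--Szeg\H{o} induction, multiplying by $e^{-\alpha_{I'}r}$, differentiating $m_{I'}$ times and applying Rolle's theorem with multiplicities. You then carry out the linear algebra that turns this bound into surjectivity of the Hermite data map $T$.

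So the key fact is the same in both proofs. Your version needs no external reference. It also makes explicit the passage from the Chebyshev property of the full $\sum_i m_i$-dimensional space to interpolation of only $3m\le\sum_i m_i$ data, which the paper's ``which implies the $m$-HIP'' leaves to the reader. Your reduction to a subspace of the same type with $\sum_i m_i'=3m$ handles this cleanly; adding $\sum_i m_i-3m$ auxiliary nodes in $(0,\infty)$ would work as well. The paper's version buys brevity, at the cost of resting entirely on the cited theorem.
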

\begin{proof}
    Observe that $V_r=\mbox{ker}\left(\left(\frac{d}{dt}-\alpha_1 \right)^{m_1}\dots \left(\frac{d}{dt}-\alpha_I \right)^{m_I}\right)$. 
    By \cite[Theorem 9]{aldaz2009bernstein}, the space $V_r$ is an extended Chebychev system in the sense of \cite[Definition III.1.8]{pinkus2012n}, which implies the $m$-HIP.
\end{proof}
The following Lemma establishes that the $(N-1)$-HIP implies, for specific nodes $x_i\in \R^3$, $i\in [N]$, that a multiplicative linear factor allows the construction of $3$-dimensional orbitals with prescribed point values and derivatives. We mention that such a result is by no means obvious: Due to the well-known Mairhuber-Curtis Theorem \cite{wendland2004scattered}, general interpolation properties of linear subspaces for $d$-dimensional functions are in general highly nontrivial for $d\geq 2$. 
\begin{lemma}\label{lem:interpolation}
    Let $N\in \N$, suppose that $V_r\subset C^2(0,\infty)$ satisfies the $(N-1)$-HIP and let 
    \begin{equation*}
        V:=\{\R^3\ni x\mapsto (w\cdot x)v_r(|x|):\ w\in \R^3\land v_r\in V_r\}.
    \end{equation*}
    Then, for every pairwise distinct $x_1,\dots , x_N\in \R^3\setminus \{0\}$ with $x_1,x_2$ non-collinear such that
    \begin{equation}\label{eq:noequalabs}
        x_{1,1}\neq 0\quad \land \quad |x_1|=|x_2|\quad \land \quad \forall i\in \{3,\dots , N\},\ j\in [N]\setminus \{i\}:\ |x_i|\neq |x_j| \;,
    \end{equation}
    and for every $\nu^{(1)},\nu^{(2)}\in \R$
    there exists $v\in V$ with 
    \begin{enumerate}[label=(\roman*), ref=(\roman*)]
        \item $v(x_i) = 0$ for all $i\in [N]$, \label{item:interpol}
        \item $\nabla v(x_1)=\frac{\nu^{(1)}}{x_{1,1}}x_1$ and $\nabla v(x_i) = 0$ for $i=\{2,\dots , N\}$, and \label{item:interpolder}
        \item $\Delta v(x_1)= \nu^{(2)}$ and $\Delta v(x_i)=0$ for $i=\{2,\dots , N\}$.\label{item:interpollap}
    \end{enumerate}
\end{lemma}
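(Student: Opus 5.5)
The plan is to write down $v$, its gradient and its Laplacian explicitly for the product form $v(x)=(w\cdot x)\,v_r(|x|)$. This reduces conditions \ref{item:interpol}--\ref{item:interpollap} to Hermite conditions on $v_r$ at finitely many radii, plus one geometric choice of $w$. With $r=|x|>0$ we have
\begin{equation*}
\nabla v(x) = v_r(r)\,w + (w\cdot x)\,v_r'(r)\,\frac{x}{r}.
\end{equation*}
For the Laplacian we use $\Delta(fg)=f\Delta g+2\nabla f\cdot\nabla g$ with $f$ linear and $\Delta g = g''+\frac{2}{r}g'$ for radial $g$. Since $\nabla f\cdot\nabla g=(w\cdot x)v_r'(r)/r$, this gives
\begin{equation*}
\Delta v(x) = (w\cdot x)\Big(v_r''(r)+\frac{4}{r}v_r'(r)\Big).
\end{equation*}
All points $x_i$ are nonzero, so $v$ is $C^2$ near them and these formulas apply.

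The key step is the choice of $w$. Because $x_1,x_2$ are non-collinear, we pick $w\in\R^3$ with $w\cdot x_2=0$ and $w\cdot x_1\neq 0$ (for instance, the component of $x_1$ orthogonal to $x_2$). Set $r_i:=|x_i|$, so that $r_1=r_2$. By \eqref{eq:noequalabs}, the radii $r_1,r_3,\dots,r_N$ are pairwise distinct, which gives exactly $N-1$ distinct nodes. We order them increasingly and invoke the $(N-1)$-HIP to obtain $v_r\in V_r$ with the following Hermite data. For $i\ge 3$ we impose $v_r(r_i)=v_r'(r_i)=v_r''(r_i)=0$. At $r_1$ we impose $v_r(r_1)=0$, $v_r'(r_1)=a$ and $v_r''(r_1)=b$, where $a,b$ are chosen below.

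It remains to verify the conditions. For $i\ge3$, all three formulas vanish because $v_r$ and its first two derivatives vanish at $r_i$. At $x_2$ we have $v_r(r_2)=0$ and $w\cdot x_2=0$, so $v$, $\nabla v$ and $\Delta v$ all vanish there. At $x_1$ we get $v(x_1)=0$ and $\nabla v(x_1)=(w\cdot x_1)\,a\,x_1/r_1$. To obtain $\nabla v(x_1)=\frac{\nu^{(1)}}{x_{1,1}}x_1$ we set $a:=\nu^{(1)}r_1/((w\cdot x_1)x_{1,1})$, which is well defined since $x_{1,1}\neq0$ and $w\cdot x_1\neq0$. Finally, $\Delta v(x_1)=(w\cdot x_1)(b+4a/r_1)=\nu^{(2)}$ is solved by $b:=\nu^{(2)}/(w\cdot x_1)-4a/r_1$.

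The only genuinely delicate point is the coincidence $|x_1|=|x_2|$. A purely radial factor cannot separate $x_1$ from $x_2$, and this is exactly where the linear factor $w\cdot x$ is needed: it kills $x_2$ while keeping $x_1$ alive. Once $w$ is chosen this way, the count of distinct radii matches the $(N-1)$-HIP exactly, and the rest is the direct computation above.
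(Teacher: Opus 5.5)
Your proposal is correct and follows the paper's proof essentially verbatim. It uses the same product-rule formulas for $\nabla v$ and $\Delta v$, the same choice of $w$ with $w\cdot x_2=0\neq w\cdot x_1$, and the same application of the $(N-1)$-HIP at the distinct radii $|x_1|,|x_3|,\dots,|x_N|$, keeping $v_r'(|x_1|)$ and $v_r''(|x_1|)$ free; beyond the paper, you also write out $w$ and the values $a,b$ explicitly instead of only saying they can be chosen appropriately.
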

\begin{proof}
    Observe that for any $v(x)=(w\cdot x) v_r(|x|)$ with $v_r\in V_r$ and $k\in \{1,2,3\}$, it holds that
    \begin{equation}\label{eq:productrule}
        \partial_k v(x) = w_kv_r(|x|) + (w\cdot x)v_r'(|x|)\frac{x_k}{|x|}\quad \mbox{and}\quad \Delta v(x) = (w\cdot x)\left(v_r''(|x|) + \frac{4}{|x|}v_r'(|x|)\right).
    \end{equation}
    Now, pick $v_r\in V_r$ and $w\in \R^3$ so that
    \begin{equation}\label{eq:interpolconditions}
        \forall i\in [N]:\ v_r(|x_i|) = 0\ \land \ 
        \forall i\in \{3,\dots , N\}:\ v_r'(|x_i|) = v_r''(|x_i|) = 0\  \land \  w\cdot x_1 = :\tau\neq 0\  \land \ w\cdot x_2=0.
    \end{equation}
    Note that this is possible due to the $(N-1)$-HIP, and we are still free to choose values for $v_r'(|x_1|),\ v_r''(|x_1|)$. 
    Using \eqref{eq:productrule} and \eqref{eq:interpolconditions}, it is easy to check that $v(x):=(w\cdot x)v_r(|x|)$ satisfies
    $$
        \forall i\in [N]:\ v(x_i) = 0 \quad \land \quad \forall i\in \{2,\dots , N\}:\ \Delta v(x_i) = 0\ \land \ \nabla v(x_i) = 0.
    $$
    Furthermore, it holds that
    $$
        \partial_k v(x_1)=\tau v_r'(|x_{1}|)\frac{x_{1,k}}{|x_1|}\quad \mbox{and}\quad \Delta v(x_1) = \tau\left(v_r''(|x_1|) + \frac{4}{|x_1|}v_r'(|x_1|)\right).
    $$
    By choosing the values $v_r'(|x_1|), v_r''(|x_1|)$ appropriately, we can ensure that \ref{item:interpol}, \ref{item:interpolder} and \ref{item:interpollap} are satisfied. 
\end{proof}
The following key lemma constructs interpolation nodes which satisfy three properties simultaneously: the assumptions of Lemma \ref{lem:interpolation}, being a zero of a given antisymmetric function $\psi$, as well as a non-vanishing property of an associated Slater determinant. The proof uses a topological argument similar to the one in the proof of the Mairhuber-Curtis theorem.
\begin{lemma}\label{lem:existenceofzeros}
    Assume Setting \ref{set:HF} and let $v_2,\dots , v_N\in C(\R^3)$ linearly independent and $\psi\in C_a(\R^{3\times N})$. 
        Then, for $\mathbf{R}=(R_1,\dots , R_m)\in \R^{3\times M}$ there exist  $x_1,\dots , x_N\in \R^{3\times N}\setminus \Sigma(\mathbf{R})$ with $x_1,x_2$ non-collinear and
    \begin{equation}\label{eq:noequalabsagain}
        x_{1,1}\neq 0\quad \land \quad |x_1|=|x_2|\quad \land \quad \forall i\in \{3,\dots , N\},\ j\in [N]\setminus \{i\}:\ |x_i|\neq |x_j|
    \end{equation}
    such that 
    \begin{enumerate}[label=(\roman*), ref=(\roman*)]
        \item $\psi(x_1,\dots , x_N) = 0$, and
        \item $|v_2,\dots , v_N\rangle(x_2,\dots , x_N)\neq 0$.
    \end{enumerate}
\end{lemma}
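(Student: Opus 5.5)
First I fix the last $N-1$ nodes. The determinant $D(y_2,\dots,y_N):=|v_2,\dots,v_N\rangle(y_2,\dots,y_N)$ is continuous, and because $v_2,\dots,v_N$ are linearly independent it is not identically zero. (Linear independence gives points with an invertible evaluation matrix, by the usual induction.) The set $\{D\neq 0\}$ is therefore a nonempty open subset of $\R^{3\times(N-1)}$. The conditions we want on $x_2,\dots,x_N$ are: avoid $\Sigma(\mathbf{R})$ (pairwise distinct, away from the $R_j$), have pairwise distinct norms among $x_3,\dots,x_N$ and different from $|x_2|$, satisfy $x_2\neq 0$, and have $x_2$ not collinear with any $R_j$. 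Each of these removes only a closed null set, so a generic point of $\{D\neq 0\}$ satisfies all of them. I fix such $(x_2,\dots,x_N)$ with $D\neq 0$ and set $r:=|x_2|>0$.

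Next I find $x_1$ on the sphere $S_r:=\{|x|=r\}$ with $\psi(x_1,x_2,\dots,x_N)=0$, using antisymmetry through a topological (Mairhuber–Curtis type) argument. Let $f(y):=\psi(y,x_2,x_3,\dots,x_N)$ for $y\in S_r$. We have $f(x_2)=0$, but that is a collision point. The idea is to move $x_1$ and $x_2$ continuously along $S_r$ so that they exchange places. Since $S_r$ is a sphere with $r>0$ in $\R^3$, I can choose continuous paths $\gamma_1,\gamma_2:[0,1]\to S_r$ with $\gamma_1(0)=\gamma_2(1)=a$, $\gamma_1(1)=\gamma_2(0)=x_2$, and $\gamma_1(t)\neq\gamma_2(t)$ for all $t$. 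Here $a\in S_r$ is an auxiliary point, and two antipodal points rotating by angle $\pi$ in a generic plane will do. I also require the paths to avoid the finitely many points $x_3,\dots,x_N$ (which lie off $S_r$ by the distinct-norm condition) and the $R_j$.

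For $t\in[0,1]$ define
\[
g(t):=\psi(\gamma_1(t),\gamma_2(t),x_3,\dots,x_N).
\]
Antisymmetry gives $g(1)=-g(0)$, so by the intermediate value theorem there is $t^\ast$ with $g(t^\ast)=0$. However, the second node is then $\gamma_2(t^\ast)$ instead of $x_2$, so the choices should be reordered. I will pick the whole tuple first: take $(x_2,\dots,x_N)$ generic as above, then choose the exchange path to be a rotation $Q_t$ about an axis through $0$, acting on the pair $(y,Q_\pi y)$ and staying inside a small neighbourhood where $D\neq 0$. Concretely, let $\gamma_2(t)=Q_{\pi t}x_2'$ and $\gamma_1(t)=Q_{\pi t}Q_\pi x_2'$, where $x_2'$ is a point close to $x_2$. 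The tuple $(\gamma_2(t),x_3,\dots,x_N)$ must keep $D\neq 0$ along the whole path; this needs the path to stay in the open set $\{D(\cdot,x_3,\dots,x_N)\neq0\}\cap S_r$.

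This is the main obstacle: the exchange loop for the second node must stay in the nonvanishing region of $D$ restricted to the sphere. I would circumvent it by using $D$ only at the zero itself. For each $t$, set $x_1=\gamma_1(t)$ and $x_2=\gamma_2(t)$. The zero set of $g$ contains some $t^\ast$, and if $D(\gamma_2(t^\ast),x_3,\dots)=0$ I perturb $(x_3,\dots,x_N)$ and the rotation axis slightly. Zeros of $g$ produced by the sign change persist under small perturbations, while $\{D=0\}$ is closed and has empty interior because $D\not\equiv 0$ is a determinant of continuous functions. If $D$ were merely continuous this would be insufficient, so as a fallback I would pass to generic orbitals via the analyticity of the $v_i$ used in applications.

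The remaining conditions follow from the genericity of the chosen data. Non-collinearity of $x_1,x_2$ holds because the rotation angle stays in $(0,\pi]$ and the pair is antipodal only at the endpoints, where $g\neq 0$ can be arranged. The condition $x_{1,1}\neq 0$ holds after a generic choice of axis, and equal norms $|x_1|=|x_2|=r$ hold by construction.
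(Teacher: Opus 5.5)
\textbf{There is a genuine gap.} Your mechanism is the right one: swap $x_1$ and $x_2$ along a loop on a sphere and apply the intermediate value theorem to $t\mapsto\psi(\gamma_1(t),\gamma_2(t),x_3,\dots,x_N)$, using $g(1)=-g(0)$. However, the step you yourself call the main obstacle is never resolved, and the proposed workaround fails.

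Because you cannot control where the zero $t^\ast$ lands, condition (ii) requires $D(\gamma_2(t),x_3,\dots,x_N)\neq 0$ along the \emph{whole} loop. Your perturbation argument rests on $\{D=0\}$ having empty interior. This is false for merely continuous $v_i$: compactly supported orbitals make $D$ vanish on open sets. Retreating to analytic orbitals changes the hypotheses of the lemma. Even granting empty interior, the persistence of a zero of $g$ under perturbation does not show that some zero lands off $\{D=0\}$, since the zero moves with the data and may stay on that set.

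The same lack of control over $t^\ast$ undermines your last paragraph as well:
\begin{itemize}
\item A generic axis does not stop a large loop from meeting the plane $\{y_1=0\}$ exactly at $t^\ast$, so $x_{1,1}\neq 0$ is not secured.
\item In your rotation construction $\gamma_1(t)=Q_\pi\gamma_2(t)$ for all $t$. So the pair is antipodal either never or throughout, not ``only at the endpoints''. Your first construction with $a=-x_2$ is collinear for every $t$.
\end{itemize}

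\textbf{The missing idea is to make the loop small enough to lie inside the open set where $D\neq 0$.} Choose $(z_\ast,z_3,\dots,z_N)$ with $D\neq 0$ on a neighbourhood $U$, with pairwise distinct norms not in $\{|R_l|\}$, and with $z_{\ast,1}\neq 0$. Then take $\gamma(t)=z_\ast+\varepsilon(\sin t\,a+\cos t\,b)$, where $a,b$ is an orthonormal basis of $\{z_\ast\}^\perp$ and $\varepsilon$ is so small that $(\gamma(t),z_3,\dots,z_N)\in U$ for all $t$. In your language, this means rotating about the axis through $z_\ast$ itself, starting from a point very close to $z_\ast$.

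This circle has the following properties:
\begin{itemize}
\item All its points have norm $\sqrt{|z_\ast|^2+\varepsilon^2}$, which for small $\varepsilon$ differs from every $|z_i|$ and $|R_l|$.
\item All its points have nonzero first coordinate for small $\varepsilon$.
\item Any two distinct points on it are non-collinear: they have equal norms and the same component $z_\ast$ along $z_\ast$.
\end{itemize}

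Now apply the intermediate value theorem to $\tau(t)=\psi(\gamma(t),\gamma(t+\pi),z_3,\dots,z_N)$ on $[0,\pi]$, using $\tau(\pi)=-\tau(0)$. This gives $t_0$ with $\tau(t_0)=0$. Since $\gamma(t_0+\pi)$ lies on the circle, (ii) and all the side conditions hold automatically, whatever $t_0$ turns out to be. This is the paper's proof, and it makes your genericity and perturbation steps unnecessary.
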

\begin{proof}
    Since $v_2,\dots , v_N$ are linearly independent, there exist
    $\mathbf{z}=(z_\ast , z_3,\dots , z_N)\in \R^{3\times (N-1)}$ so that $$|v_2,\dots , v_N\rangle(z_\ast,z_3,\dots , z_N)\neq 0.$$  Due to continuity, there exists an open neighborhood $U$ of $\mathbf{z}$ in $\R^{3\times (N-1)}$ so that $|v_1,\dots , v_N\rangle$ vanishes nowhere on $U$. In particular, we can perturb $\mathbf{z}$ so that 
    $$
        |z_i|\neq |z_j|\quad \forall i\in \{\ast, 3,\dots , N\},\ j\in \{\ast, 3,\dots , N\}\setminus\{i\}\mbox{ and } |z_i|\notin \bigcup_{l=1}^M\{|R_l|\} \quad \forall i  \in \{\ast, 3,\dots , N\},
    $$
    as well as $z_{\ast,1}\neq 0$.
    Now, pick a circle $C=\{\gamma(t) = z_\ast + \varepsilon\sin(t)a + \varepsilon\cos(t)b : t\in [0,2\pi)\}$ with unit vectors $a,b \in \{z_\ast\}^\bot$ and $C\times \{z_3\}\times\dots \{z_N\}\subset U$.
    By making $\varepsilon$ sufficiently small, and using that $z_{\ast,1}\neq 0$, we can ensure that 
    $$
        \forall x\in C:\ x_1\neq 0\quad \mbox{and}\quad \forall x,y\in C:\ (x\neq y)\Rightarrow x,y\textrm{ not collinear} .
    $$
    Note that
    \begin{equation}\label{eq:samenorm}
            \forall t\in [0,2\pi):\ |\gamma(t)|^2 = |z_\ast|^2 + \varepsilon^2.  
    \end{equation}
    Now, consider the function 
    $$
        \tau(t):=\psi(\gamma(t),\gamma(t+\pi),z_3,\dots , z_N\rangle
    $$
    Due to antisymmetry of $\psi$, it holds that $\tau(0) = -\tau(\pi)$. Since $\tau$ is continuous, there must exist $t_0\in [0,\pi)$ such that 
    $$
        \psi (\gamma(t_0),\gamma(t_0+\pi),z_3,\dots , z_N)=0. 
    $$
    In summary, $(x_1,x_2,x_3,\dots , x_N) = (\gamma(t_0),\gamma(t_0+\pi),z_3,\dots , z_N)$ satisfies the claimed properties. 
\end{proof}
The following lemma constitutes the main technical result of this section and establishes the existence of perturbations with regular zeros. 
\begin{lemma}\label{lem:perturbation}
    Assume Settings \ref{set:HF} and \ref{set:slaterorbs}. Suppose that $V_r\subset C^2(0,\infty)$ satisfies the $(N-1)$-HIP and let $V$ be a vector space with
    \begin{equation*}
        V\supseteq\{\R^3\ni x\mapsto (w\cdot x)v_r(|x|):\ w\in \R^3\land v_r\in V_r\}.
    \end{equation*}
    Then, for any linearly independent functions $v_1,\dots , v_N\in C(\R^3)$ with $v_1\in V$, any $F\in C^2_a(\R^{3\times N}\setminus \Sigma)$, any symmetric $\mathcal{J}\in C^2(\R^{3\times N}\setminus \Sigma)$ with $\mathcal{J}>0$, there exist $\mathbf{x}\in \R^{3\times N}\setminus \Sigma$, $v_\delta  , u\in V\setminus\{0\}$ and $\lambda_0>0$ such that for all $\lambda \in (0,\lambda_0) $ and $\psi_{\delta,\lambda}:=F+\mathcal{J}\cdot|v_1+\lambda v_\delta,v_2,\dots , v_N\rangle$ we have that
    \begin{equation}\label{eq:regularvalueagain}
         \psi_{\delta,\lambda}(\mathbf{x})=0,\quad \nabla \psi_{\delta,\lambda}(\mathbf{x})\neq 0,\quad \mbox{and}\quad \Delta\psi_{\delta,\lambda}(\mathbf{x})\neq 0
    \end{equation}
    and
    \begin{equation}\label{eq:nonzerograd}
        \lim_{h\to 0}\frac{1}{h}\left(F(\mathbf{x})+\mathcal{J}(\mathbf{x})\cdot |v_1+\lambda v_\delta + hu,v_2,\dots , v_n\rangle(\mathbf{x}) - \psi_{\delta,\lambda}(\mathbf{x})\right) = \mathcal{J}(\mathbf{x})\cdot |u,v_2,\dots , v_n\rangle(\mathbf{x})\neq 0.
    \end{equation}
\end{lemma}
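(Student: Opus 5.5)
Set $\psi_0:=F+\mathcal{J}\cdot|v_1,v_2,\dots,v_N\rangle\in C^2_a(\R^{3\times N}\setminus\Sigma)$; it is antisymmetric because $\mathcal{J}$ is symmetric. The plan has three steps. First, pick a node configuration $\mathbf{x}$ with $\psi_0(\mathbf{x})=0$. Second, use Lemma \ref{lem:interpolation} to build a perturbation $v_\delta\in V$ that vanishes to second order at every particle except $x_1$. Third, prescribe the first and second derivatives of $v_\delta$ at $x_1$ so that the gradient and Laplacian of $\psi_{\delta,\lambda}$ at $\mathbf{x}$ are nonzero for all small $\lambda$. Throughout, write $D(\mathbf{y}):=|v_2,\dots,v_N\rangle(y_2,\dots,y_N)$.

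\textbf{Choice of $\mathbf{x}$.} Apply Lemma \ref{lem:existenceofzeros} to $\psi_0$, to the linearly independent functions $v_2,\dots,v_N$, and to $\mathbf{R}=0$. This gives $\mathbf{x}=(x_1,\dots,x_N)\notin\Sigma$ satisfying \eqref{eq:noequalabsagain}, with $\psi_0(\mathbf{x})=0$ and $D(\mathbf{x})\neq0$. If $\psi_0$ is only continuous off $\Sigma$, the circle argument must stay inside $\R^{3\times N}\setminus\Sigma$. This is arranged by taking the circle small and the $|z_i|$ distinct and nonzero.

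\textbf{Construction of $v_\delta$.} Expand the determinant along its first column, so that $|v_1+\lambda v_\delta,v_2,\dots\rangle=|v_1,\dots\rangle+\lambda|v_\delta,v_2,\dots\rangle$ and
\[
|v_\delta,v_2,\dots,v_N\rangle(\mathbf{y})=\sum_{i}(-1)^{i+1}v_\delta(y_i)\,D_i(\hat{\mathbf{y}}_i),
\]
where $D_i$ is the minor with particle $i$ removed and $D_1=D$. Take $v_\delta$ from Lemma \ref{lem:interpolation}, which applies because $V_r$ has the $(N-1)$-HIP. Then $v_\delta$ vanishes at every $x_i$, and its gradient and Laplacian vanish at $x_i$ for $i\ge2$. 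At $x_1$ it has gradient $\frac{\nu^{(1)}}{x_{1,1}}x_1$ and Laplacian $\nu^{(2)}$. Set $G(\mathbf{y}):=\mathcal{J}(\mathbf{y})\,|v_\delta,v_2,\dots\rangle(\mathbf{y})$.

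We now evaluate $G$ at $\mathbf{x}$ using the product rule. Every term in the derivatives of $G$ that involves $v_\delta(x_i)$, $\nabla v_\delta(x_i)$ or $\Delta v_\delta(x_i)$ for $i\ge2$ vanishes. The term with $v_\delta(x_1)=0$ vanishes as well. The only cross term left is $2\nabla_{x_1}v_\delta(x_1)\cdot\nabla_{x_1}\bigl(\mathcal{J}D\bigr)$. Since $D$ does not depend on $x_1$, this equals $2D\,\nabla v_\delta(x_1)\cdot\nabla_{x_1}\mathcal{J}$. Hence
\[
G(\mathbf{x})=0,\qquad \nabla_{x_1}G(\mathbf{x})=\mathcal{J}D\,\frac{\nu^{(1)}}{x_{1,1}}x_1,\qquad \Delta G(\mathbf{x})=\mathcal{J}D\,\nu^{(2)}+2D\,\frac{\nu^{(1)}}{x_{1,1}}\,x_1\cdot\nabla_{x_1}\mathcal{J}(\mathbf{x}).
\]

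\textbf{Choice of $\nu^{(1)},\nu^{(2)}$ and $\lambda_0$.} Since $\psi_{\delta,\lambda}=\psi_0+\lambda G$, we get $\psi_{\delta,\lambda}(\mathbf{x})=0$ for every $\lambda$. Choose $\nu^{(1)}\neq0$; then $\nabla G(\mathbf{x})\neq0$, so $\nabla\psi_0(\mathbf{x})+\lambda\nabla G(\mathbf{x})$ vanishes for at most one value of $\lambda$. Next choose $\nu^{(2)}$ so that $\Delta G(\mathbf{x})\neq0$; this is possible because the coefficient $\mathcal{J}D$ of $\nu^{(2)}$ is nonzero. Then $\Delta\psi_0(\mathbf{x})+\lambda\Delta G(\mathbf{x})$ also vanishes for at most one $\lambda$. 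Let $\lambda_0$ be the smallest positive exceptional value, or $\lambda_0=1$ if there is none. Then \eqref{eq:regularvalueagain} holds for all $\lambda\in(0,\lambda_0)$.

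\textbf{Choice of $u$ for \eqref{eq:nonzerograd}.} By linearity of the determinant in its first column, the limit equals $\mathcal{J}(\mathbf{x})|u,v_2,\dots\rangle(\mathbf{x})$. Choose $u\in V$ with $u(x_1)=1$ and $u(x_i)=0$ for $i\ge2$; then this quantity equals $\mathcal{J}(\mathbf{x})D(\mathbf{x})\neq0$. Such a $u=(w\cdot x)v_r(|x|)$ exists: impose $v_r(|x_i|)=0$ for $i\ge3$ and take $w\cdot x_2=0$, $w\cdot x_1=1$. This uses non-collinearity of $x_1,x_2$ together with $|x_1|\neq|x_i|$ for $i\ge3$, and then $v_r(|x_1|)=1$ is allowed by the HIP.

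\textbf{Main obstacle.} The hardest step is the derivative bookkeeping. The construction works only because $v_\delta$ vanishes to second order at the other particles, and because $x_1$ and $x_2$ have equal radius, so one radial factor serves both while the linear factor $w\cdot x$ separates them. This is precisely what Lemmas \ref{lem:interpolation} and \ref{lem:existenceofzeros} were designed to provide.
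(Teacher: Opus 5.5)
Your proof is correct and follows the paper's argument essentially step for step. You take the node $\mathbf{x}$ from Lemma \ref{lem:existenceofzeros}, build $v_\delta$ via Lemma \ref{lem:interpolation} so that only the $x_1$-terms survive in the gradient and Laplacian at $\mathbf{x}$, choose $\nu^{(1)},\nu^{(2)}$ freely, and construct $u=(w\cdot x)v_r(|x|)$ in the same way, with $w\cdot x_2=0$ and $v_r$ vanishing at $|x_3|,\dots,|x_N|$.

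Your write-up is slightly more careful than the paper's in three places:
\begin{itemize}
\item You keep the factor $D(\mathbf{x})$ in the cross term $2D\frac{\nu^{(1)}}{x_{1,1}}\,x_1\cdot\nabla_{x_1}\mathcal{J}$, which the paper's formula for the Laplacian drops.
\item You replace the paper's loose ``scaling'' step with the observation that an affine function of $\lambda$ with nonzero slope vanishes at most once.
\item You check that the circle argument of Lemma \ref{lem:existenceofzeros} stays off $\Sigma$. This is needed because $\psi_0$ is only continuous on $\R^{3\times N}\setminus\Sigma$.
\end{itemize}
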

\begin{proof}
    Write $\psi:=F+\mathcal{J}\cdot |v_1,\dots , v_N\rangle$ and $\psi_\delta:=\psi_{\delta,1}$.
    For $i\in [N]$ and $(x_1,\dots , x_N)\in\R^{3\times N}$ let $\hat x_i:=(x_1,\dots , x_{i-1},x_{i+1},\dots , x_N)\in \R^{3\times (N-1)}$. Then, for all $\mathbf{x}=(x_1,\dots , x_N)\in \R^{3\times N}$ it holds that
    \begin{align}
        \psi_\delta(\mathbf{x}) &= \psi(\mathbf{x}) + \mathcal{J}(\mathbf{x})\sum_{i=1}^N(-1)^{i+1} v_\delta(x_i)|v_2,\dots , v_N\rangle (\hat x_i)\label{eq:perturb} \\
        \frac{\partial}{\partial x_{1,1}}\psi_\delta(\mathbf{x}) &= \frac{\partial}{\partial x_{1,1}}\psi(\mathbf{x}) + \mathcal{J}(\mathbf{x})\left(\partial_1\ v_\delta(x_1)|v_2,\dots , v_N\rangle (\hat x_1) + \sum_{i=2}^N(-1)^{i+1} v_\delta(x_i)\frac{\partial}{\partial x_{1,1}}|v_2,\dots , v_N\rangle (\hat x_i)\right)\nonumber\\
        &+\frac{\partial}{\partial x_{1,1}}\mathcal{J}(\mathbf{x})\sum_{i=1}^N(-1)^{i+1} v_\delta(x_i)|v_2,\dots , v_N\rangle (\hat x_i)\label{eq:perturbgrad}\\
        \Delta_i\delta\psi(\mathbf{x})&=\Delta_i\psi(\mathbf{x})+ \mathcal{J}(\mathbf{x})\left(\sum_{j\neq i}(-1)^{j+1}v_\delta(x_j)\Delta_i|v_2,\dots, v_N\rangle (\hat x_j) + (-1)^{i+1} \Delta v_\delta(x_i)|v_2,\dots , v_N\rangle (\hat x_i)\right)\nonumber \\
        &+ \Delta_i\mathcal{J}(\mathbf{x})\left(\sum_{i=1}^N(-1)^{i+1} v_\delta(x_i)|v_2,\dots , v_N\rangle (\hat x_i)\right)\nonumber \\
        &+ 2 \nabla_i\mathcal{J}(\mathbf{x})\cdot \left((-1)^{i+1}\nabla v_\delta(x_i)|v_2,\dots , v_N\rangle (\hat x_i) + \sum_{j\neq i}(-1)^{j+1} v_\delta(x_j)\nabla_i|v_2,\dots , v_N\rangle (\hat x_j)\right)\label{eq:perturblap}.
    \end{align}
    Now, pick $\mathbf{x}=(x_1,\dots , x_N)$ according to Lemma \ref{lem:existenceofzeros} and, using these points, $v_\delta$ according to Lemma \ref{lem:interpolation} with $\nu^{(1)}, \nu^{(2)}$ to be determined. Then, with $\tau:=|v_2,\dots, v_N\rangle (\hat x_1)\neq 0$, $\mathcal{J}(\mathbf{x})\neq 0$ and noting that $\nabla v_\delta(x_1) = \frac{\nu^{(1)} x_1}{x_{1,1}}$ the formulas \eqref{eq:perturb}, \eqref{eq:perturbgrad}, \eqref{eq:perturblap} simplify to 
    \begin{align}
        \psi_\delta(\mathbf{x}) &= 0 \\
        \frac{\partial}{\partial x_{1,1}}\psi_\delta(\mathbf{x}) &= \frac{\partial}{\partial x_{1,1}}\psi(\mathbf{x}) + \nu^{(1)} \mathcal{J}(\mathbf{x})\tau \label{eq:gradperturb}\\
        \Delta\psi_\delta(\mathbf{x})&=\Delta\psi(\mathbf{x}) + \nu^{(2)} \mathcal{J}(\mathbf{x})\tau +\frac{2\nu^{(1)}}{x_{1,1}}\left(\nabla_1\mathcal{J}(\mathbf{x})\cdot x_1\right). \label{eq:lapperturb}
    \end{align}
    Since we can choose $\nu^{(1)}, \nu^{(2)}$ freely, it is easy to ensure that $ \frac{\partial}{\partial x_{1,1}}\psi_\delta(\mathbf{x})\neq 0$ and $\Delta\psi_\delta(\mathbf{x})\neq 0$. By scaling $v_\delta$  (and hence $(\nu^{(1)},\nu^{(2)})$), we can ensure via \eqref{eq:gradperturb}, \eqref{eq:lapperturb} that $\frac{\partial}{\partial x_{1,1}}\psi_{\delta,\lambda}(\mathbf{x})\neq 0$ and $\Delta\delta\psi_\lambda(\mathbf{x})\neq 0$, whenever $\lambda>0$ is sufficiently small. This proves \eqref{eq:regularvalueagain}.

    It remains to prove \eqref{eq:nonzerograd}. To this end, let $u\in V$ with $u(x_1)\neq 0$ and $u(x_i) = 0$ for all $i=2,\dots , N$ (such $u$ can be constructed by picking $v_r$ to vanish at $|x_3|,\dots , |x_N|$ but not on $|x_1|=|x_2|$, and $w\in \R^3$ with $w\cdot x_2 = 0$ and $w\cdot x_1\neq 0$. Then, $u(x):=(w\cdot x)v_r(|x|)$ satisfies the desired properties). Hence, it holds that
    $$
        |u,v_2,\dots , v_N\rangle(\mathbf{x}) = \sum_{i=1}^N (-1)^{1+i}u(x_i)|v_2,\dots , v_N\rangle (\hat x_i) = u(x_1)\tau\neq 0.
    $$
    Since $\mathcal{J}(\mathbf{x})\neq 0$ and $\tau\neq 0$, this proves the inequality in \eqref{eq:nonzerograd}. The equality in \eqref{eq:nonzerograd} is immediate from the multilinearity of the determinant. 
\end{proof}
Using Lemma \ref{lem:perturbation}, we are finally in a position to prove Theorem \ref{thm:mainmomentslater}.
\begin{proof}[Proof of Theorem \ref{thm:mainmomentslater}]
    Since the map $\theta\mapsto (\psi_\theta,\nabla_\theta\psi_\theta)$ is continuous with respect to the local topology, by Lemma \ref{lem:regularopen} the set 
    $$
        \mathcal{O}':=\{\theta\in \R^J\times [0,\infty)^{NBI}\times \R^{NB(L+1)^2KL}:\ \exists \mathbf{x}\in \R^{3\times N}:\  \psi_\theta(\mathbf{x})=0,\  \nabla \psi_\theta(\mathbf{x})\neq 0,\   \Delta\psi_\theta(\mathbf{x})\neq 0, \mbox{and}\  \nabla_\theta\psi_\theta(\mathbf{x})\neq 0\}
    $$
    is open in $\R^J\times [0,\infty)^{NBI}\times \R^{NB(L+1)^2KL}$.
    We will show that $\mathcal{O}'$ is also dense in $\R^J\times [0,\infty)^{NBI}\times \R^{NB(L+1)^2KL}$.
    To this end, let 
    $$
        \psi_\theta = \mathcal{J}_\beta \cdot \left(\sum_{b=1}^B |h_{(\bm{\alpha}_1^b,\mathbf{c}_1^b)},\dots ,h_{(\bm{\alpha}_N^b,\mathbf{c}_N^b)}\rangle\right).
    $$
    By removing a nowhere dense set of parameters we may assume that
    the functions $v_i:=h_{\bm{\alpha}_i^B,\mathbf{c}_i^B}$ are linearly independent in $C(\R^3)$ (to see this we note that for $(x_1,\dots, x_N)\in \R^{3\times N}$ the mapping $\Phi:(\bm{\alpha}_i^B,\mathbf{c}_i^B)_{i=1}^N\mapsto \det\left(v_i(x_j)\right)$ is real analytic and not identically zero, therefore the set of parameters
    $(\bm{\alpha}_i^B,\mathbf{c}_i^B)_{i=1}^N$ with $\Phi\left((\bm{\alpha}_i^B,\mathbf{c}_i^B)_{i=1}^N\right)=0$ is nowhere dense) and that $\bm{\alpha}_1^B=(\alpha_1,\dots , \alpha_I) \in [0,\infty)^I$ has pairwise distinct entries.
    
    Consider the finite dimensional vector space
    $$
        V:=\textrm{span}\{h_{\alpha_{1,i}^B,l,m,j}:\ l=0,\dots,L,\ m=-l,\dots , l,\ k=0,\dots, K-1,\ i=1,\dots , I\} 
    $$
    and observe that, since $\mathfrak{h}_1=\{\R^3 \ni x\mapsto w\cdot x:\ w\in \R^3\}$ and $L\geq 1$ it holds that
    $$
        V\supseteq \{\R^3\ni x\mapsto (w\cdot x)v_r(|x|):\ w\in \R^3\land v_r\in V_r\},
    $$
    where 
    $$
        V_r = \mathrm{span}\{r^k e^{-\alpha_{1,i}^B r}:\ i=1,\dots I, k=0,\dots , K-1\}.
    $$
    Since by assumption $KI\geq 3(N-1)$, it follows from Lemma \ref{lem:chebsys} that $V_r$ satisfies the $(N-1)$-HIP. 
    Therefore, using the linearity of $h_{\mathbf{\alpha}_1^B,\mathbf{c}_1^B}$ in the parameter $\mathbf{c}_1^B$, we can apply Lemma \ref{lem:perturbation} with 
    $\mathcal{J}=\mathcal{J}_\beta$ and $F=G_\beta\cdot\sum_{b=1}^{B-1} |h_{\mathbf{\alpha}_1^b,\mathbf{c}_1^b},\dots ,h_{\mathbf{\alpha}_N^b,\mathbf{c}_N^b}\rangle$, to conclude that an arbitrarily small perturbation $\theta'$ of $\theta$ produces $\mathbf{x}\in \R^{3\times N}$ with 
    \begin{equation}\label{eq:nicezeropsi}
        \psi_{\theta'}(\mathbf{x}) = 0,\ \nabla \psi_{\theta'}(\mathbf{x})\neq 0\quad \mbox{and}\quad \Delta \psi_{\theta'}(\mathbf{x})\neq 0.
    \end{equation}
    Moreover, by \eqref{eq:nonzerograd} there exists $u\in V$ with $\mathcal{J}_\beta(\mathbf{x})\cdot|u,v_2,\dots , v_N\rangle(\mathbf{x})\neq 0$. 
    By noting that 
    $$
        \nabla_{\mathbf{c}_1^B}\psi_{\theta'}(\mathbf{x})\neq 0\Leftrightarrow \exists u\in V: |u,v_2,\dots , v_N\rangle(\mathbf{x})\neq 0
    $$
    we conclude that also 
    \begin{equation}\label{eq:nicezerograd}
        \nabla_{\theta}\psi_{\theta'}(\mathbf{x})\neq 0
    \end{equation}
    holds true. 
    Together, \eqref{eq:nicezeropsi} and \eqref{eq:nicezerograd} establish  that $\theta'\in \mathcal{O}'$ and therefore $\mathcal{O}'$ is dense.  
    Finally, by Lemma \ref{lem:regularvalue} it holds that
    $$
        \mathcal{O}'\subset \left\{\theta\in \R^J\times [0,\infty)^{NBI}\times \R^{NB(L+1)^2KL}: \mathbb{E}\left[|E_{\theta}|^3\right]=\infty \quad \land \quad \mathbb{E}\left[|Z_\theta|^{3/2}\right]=\infty\right\}. 
    $$
    The theorem is proven.
\end{proof}
\section{Existence of Moments}\label{sec: Existence of Moments}
In this section, we establish the key fact that, whenever the functions $\psi_\theta$ are real-analytic, a guaranteed minimal number of moments exists for the random variables $E_\theta$ and $Z_\theta$. 
\subsection{Formulation of the Main Results}
Before stating and proving our main result, we begin by noting the following trivial moment properties. They will be needed in the analysis of the robust optimization methods introduced in Section \ref{sec:optimization}. 
\begin{theorem}\label{thm:momentexistenceenergy}
    Assume Setting \ref{set:Hamiltonian}. Then, for every $\psi\in H^2(\Omega)\setminus \{0\}$ it holds that
    \begin{equation}\label{eq:energyvar}
        \mathbb{E}\left[|E_\psi|^2\right]=\frac{\|\mathcal{H}\psi\|_{L^2}^2}{\|\psi\|^2_{L^2}}<\infty.
    \end{equation}
    For $\theta\mapsto \psi_\theta\in H^2(\Omega)\setminus\{0\}$ with $\theta\mapsto \psi_\theta$ differentiable in $L^2(\Omega)$ it holds that
    \begin{equation}\label{eq:logdervar}
        \mathbb{E}\left[|W_\theta|^2\right]=\frac{\|\nabla_\theta \psi_\theta\|_{L^2}^2}{\|\psi\|^2_{L^2}}<\infty.
    \end{equation}
    %
    %As a consequence, for every $p\le 2$ it holds that
    %%
    %\begin{equation}\label{eq:gradnocentering}
    %    \mathbb{E}\left[|Z_\theta|^p\right]<\infty \Leftrightarrow
    %    \mathbb{E}\left[\left| E_{L,\theta}(X) \nabla_\theta\log(|\psi_\theta(X)|) \right|^p\right]<\infty.
    %\end{equation}
    %%
\end{theorem}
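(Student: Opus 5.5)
The plan is a direct computation with the density $p_\psi = |\psi|^2/\|\psi\|_{L^2}^2$, in which the factor $|\psi|^2$ in the density cancels the squared denominators of the estimators. For \eqref{eq:energyvar} I will write
\begin{equation*}
\mathbb{E}\left[|E_\psi|^2\right]=\int_{\Omega}\left|\frac{\mathcal{H}\psi(\mathbf{y})}{\psi(\mathbf{y})}\right|^2\frac{|\psi(\mathbf{y})|^2}{\|\psi\|_{L^2}^2}\,d\mathbf{y}=\frac{1}{\|\psi\|_{L^2}^2}\int_{\{\psi\neq 0\}}|\mathcal{H}\psi(\mathbf{y})|^2\,d\mathbf{y}.
\end{equation*}
This uses the fact that $X_\psi$ almost surely lies in $\{\psi\neq 0\}$, so $E_\psi$ is almost surely well defined.

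The one point needing care is replacing $\int_{\{\psi\neq 0\}}$ by $\int_\Omega$. I will argue that $\mathcal{H}\psi=-\Delta\psi+\mathcal{V}\psi$ vanishes almost everywhere on the zero set $\{\psi=0\}$. The term $\mathcal{V}\psi$ vanishes there trivially. For $\Delta\psi$, since $\psi\in H^2$, the standard Sobolev fact (Stampacchia) gives $\nabla\psi=0$ a.e. on $\{\psi=0\}$. Applying the same fact to each $\partial_k\psi\in H^1$ gives $\nabla\partial_k\psi=0$ a.e. on $\{\partial_k\psi=0\}\supseteq\{\psi=0\}$ up to a null set. Hence all second derivatives, and so $\Delta\psi$, vanish a.e. on $\{\psi=0\}$.

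Consequently the integral equals $\|\mathcal{H}\psi\|_{L^2}^2/\|\psi\|_{L^2}^2$. This is finite because $\mathcal{H}:H^2\to L^2$ is bounded by Setting \ref{set:Hamiltonian} and $\psi\neq 0$.

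For \eqref{eq:logdervar} I will run the identical computation with $W_\theta=\nabla_\theta\psi_\theta(X_\theta)/\psi_\theta(X_\theta)$. Taking $\nabla_\theta\psi_\theta$ to be the $L^2$-derivative, which lies in $L^2(\Omega)^P$, this gives
\begin{equation*}
\mathbb{E}\left[|W_\theta|^2\right]=\frac{1}{\|\psi_\theta\|_{L^2}^2}\int_{\{\psi_\theta\neq 0\}}|\nabla_\theta\psi_\theta|^2\,d\mathbf{y}\le\frac{\|\nabla_\theta\psi_\theta\|_{L^2}^2}{\|\psi_\theta\|_{L^2}^2}<\infty.
\end{equation*}
Here I expect the main obstacle, namely equality rather than inequality. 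In general the $\theta$-derivative need not vanish on the nodal set, as Lemma \ref{lem:perturbation} shows. So the stated equality should be read with the integral restricted to $\{\psi_\theta\neq0\}$. The intended content is finiteness, and for that the inequality suffices, so I will state the bound in that form.
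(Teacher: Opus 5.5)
Your proposal is correct and is the same direct computation from the definition that the paper's one-line proof (``follows directly from the definition'') has in mind. The Stampacchia step showing $\mathcal{H}\psi=0$ a.e.\ on $\{\psi=0\}$ is a real detail the paper leaves implicit, and your caution about \eqref{eq:logdervar} is justified, since only the inequality holds in general.

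One correction to your justification: a regular zero as in Lemma \ref{lem:perturbation} does not break the equality, because near it the nodal set is a null hypersurface. What breaks it is a positive-measure part of $\{\psi_\theta=0\}$ on which $\nabla_\theta\psi_\theta\neq 0$. For example, take $\psi_\theta=\phi+\theta\chi$ with $\phi,\chi$ of disjoint compact supports, at $\theta=0$; then $\mathbb{E}[|W_0|^2]=0<\|\chi\|_{L^2}^2/\|\phi\|_{L^2}^2$.
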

\begin{proof}
    This follows directly from the definition.
\end{proof}
We now state the main result of this section.
\begin{theorem}\label{thm:momentexistencegrad}Assume Setting \ref{set:Hamiltonian}.
    Let $K\subset \Omega$ compact with smooth boundary, $\varepsilon >0$, $P\in\N$ and $\mathcal{P}\subset \R^P$ open so that the mapping $\mathcal{P}\times \Omega \ni (\theta,\mathbf{x})\mapsto \varphi_\theta(\mathbf{x})$ is in $C^\omega(\mathcal{P}\times \Omega)$ and $Q\subset \mathcal{P}$ compact. Assume further that $\mathcal{V}\in L^{5/4}_{\mathrm{loc}}$. Then, for any $\eta\in C^\infty(\Omega)$ with $\mbox{supp}(\eta)=K$ 
    %and $\forall\,  x\in K:\, \textup{dist}(x,\partial K)>\varepsilon\Rightarrow \eta(x) = 1$  such that 
     it holds for the parametrized family $\theta\mapsto \psi_\theta:=\eta \varphi_\theta$ that 
    \begin{equation}\label{eq:gradmomentthres}
        \sup\{p\in (0,\infty):\ \sup_{\theta\in Q}  \mathbb{E}[|Z_{\theta}|^p]<\infty \}\geq \frac{5}{4}.
    \end{equation}
\end{theorem}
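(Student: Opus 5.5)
The plan is to show that for every fixed $p<\frac54$ the quantity $\sup_{\theta\in Q}\mathbb{E}[|Z_\theta|^p]$ is finite; \eqref{eq:gradmomentthres} then follows. Since $Z_\theta=E_\theta W_\theta-\mathbb{E}[E_\theta]\mathbb{E}[W_\theta]$, the centering term is handled by Theorem \ref{thm:momentexistenceenergy}. Indeed, $\|\mathcal{H}\psi_\theta\|_{L^2}$, $\|\nabla_\theta\psi_\theta\|_{L^2}$ and $\|\psi_\theta\|_{L^2}$ are continuous in $\theta$. Moreover $\|\psi_\theta\|_{L^2}>0$ on $Q$, assuming $\varphi_\theta\not\equiv 0$ on the interior of $K$, which analyticity makes the natural standing assumption. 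Hence this term is uniformly bounded on $Q$. It remains to bound
\[
\mathbb{E}[|E_\theta W_\theta|^p]=\frac{1}{\|\psi_\theta\|_{L^2}^2}\int_K |\mathcal{H}\psi_\theta|^p\,|\nabla_\theta\psi_\theta|^p\,|\psi_\theta|^{2-2p}\,d\mathbf{x}.
\]
The factor $|\nabla_\theta\psi_\theta|=|\eta\nabla_\theta\varphi_\theta|$ is bounded uniformly on $Q\times K$ by compactness and analyticity, so I simply drop it. I split $\mathcal{H}\psi_\theta=-\Delta\psi_\theta+\mathcal{V}\psi_\theta$.

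The potential part gives $\int_K|\mathcal{V}|^p|\psi_\theta|^{2-p}$. Since $p\le 5/4$ and $2-p>0$, this is bounded by $\|\psi_\theta\|_\infty^{2-p}\|\mathcal{V}\|_{L^{p}(K)}^p$, which is finite because $\mathcal{V}\in L^{5/4}_{\mathrm{loc}}\subset L^p_{\mathrm{loc}}$. For the kinetic part, expand $\Delta(\eta\varphi_\theta)$. The terms involving $\nabla\eta$ or $\Delta\eta$ are bounded by $C(|\varphi_\theta|+|\nabla\varphi_\theta|)$, so everything reduces to uniform bounds on
\[
J_\theta:=\int_K \big(|D^2\varphi_\theta|+|\nabla\varphi_\theta|+|\varphi_\theta|\big)^p\,|\varphi_\theta|^{2-2p}\,d\mathbf{x}.
\]
For $p\le 1$ this is trivially bounded, so assume $1<p<\frac54$, where the exponent $2-2p\in(-\frac12,0)$ is negative.

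The core step is a local analysis near points $(\theta_0,\mathbf{x}_0)\in Q\times K$ with $\varphi_{\theta_0}(\mathbf{x}_0)=0$; near nonzeros the integrand is trivially bounded. At such a point $\varphi$ is not identically zero as a function of $(\theta,\mathbf{x})$. After a linear change of the $\mathbf{x}$-variables (this is where I expect Lemma \ref{lem:orderalign} to be used), I choose a direction $t=x_d$ in which $t\mapsto\varphi_{\theta_0}(\mathbf{x}_0+te_d)$ has a zero of some finite order $k$. The Weierstrass preparation theorem in the joint variables $(\theta,y,t)$ then gives, on a neighbourhood,
\[
\varphi_\theta(y,t)=u(\theta,y,t)\prod_{j=1}^k\big(t-r_j(\theta,y)\big),
\]
with $u$ analytic and nonvanishing and with complex roots $r_j$ depending continuously on $(\theta,y)$. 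Fubini reduces the estimate to one-dimensional integrals in $t$. For instance, $t$-derivatives satisfy
\[
|\partial_t^2\varphi|\lesssim |\varphi|\sum_{i\neq j}\frac{1}{|t-r_i||t-r_j|}+|\varphi|\sum_j\frac1{|t-r_j|}+|\varphi|.
\]
The worst term gives $\int|t-r_i|^{-p}|t-r_j|^{-p}\prod_l|t-r_l|^{2-p}\,dt$, whose integrand behaves like $|t-r|^{k(2-p)-2p}$ when roots coalesce. This is integrable exactly when $k(2-p)-2p>-1$. The exponent is increasing in $k$, with worst case $k=2$, which gives precisely $p<\frac54$. Simple zeros ($k=1$), where only the $|\varphi|^{2-2p}$ singularity appears, give the weaker requirement $p<\frac32$. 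These one-dimensional integrals are bounded uniformly in the root locations, since $\int_{-1}^1\prod_j|t-r_j|^{a_j}\,dt$ is uniformly bounded for $|r_j|\le 1$ whenever every partial sum of coalescing exponents exceeds $-1$. Covering the compact set $Q\times K$ by finitely many such neighbourhoods then gives $\sup_{\theta\in Q}J_\theta<\infty$.

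The main obstacle is the control of the transverse and mixed second derivatives $\partial_{y}\varphi$, $\partial_y\partial_t\varphi$ and $\partial_y^2\varphi$. The roots $r_j(\theta,y)$ are in general only Hölder, not Lipschitz, in $y$ (for example $t^2-y$), so a naive chain rule on the factorization fails. My first attempt is to differentiate the prepared form $u\cdot P$ with $P=t^k+a_1(\theta,y)t^{k-1}+\dots+a_k(\theta,y)$, where the coefficients $a_i$ are analytic. Then $\partial_y^\alpha P$ is a polynomial in $t$ of degree $<k$ with bounded analytic coefficients. I would combine this with the root-free estimate $|Q(t)|\lesssim \sum_{S}\prod_{j\notin S}|t-r_j|$ over subsets $S$ of size $k-\deg Q$... and more precisely I would try to show $|D^2\varphi|\lesssim |\varphi|^{1-2/k}$ up to bounded factors, possibly after rotating coordinates generically in the manner of Lemma \ref{lem:orderalign}, so that the order $k$ is realised in every direction near $\mathbf{x}_0$. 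Plugging this into $J_\theta$ leaves $\int|\varphi|^{2-p-2p/k}$, whose local integrability follows from the factorization when $k(2-p)-2p>-1$. If that pointwise Łojasiewicz-type bound turns out to be false in degenerate configurations, I would fall back to treating each monomial of $\partial_y^\alpha P$ separately against the product of $|t-r_j|^{2-2p}$.
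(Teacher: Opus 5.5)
There is a genuine gap. Your handling of the centering term, the potential term and the pure $t$-derivatives is fine. The step you yourself call the main obstacle, however, is left open, and neither of your routes closes it. That step is controlling $\partial_y\varphi$, $\partial_y\partial_t\varphi$ and $\partial_y^2\varphi$ inside $J_\theta$.

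The pointwise bound $|D^2\varphi|\lesssim|\varphi|^{1-2/k}$ is false for $k\ge 3$. Take $\varphi(x,y)=x(x^2-y^2)$, which has $k=3$ at the origin. On the nodal line $x=y$ we have $\varphi=0$, yet the Hessian entries are $6x,-2x,-2x$. The order at $\mathbf{x}_0$ says nothing about nearby zeros of lower order, and no rotation changes this.

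Your monomial-by-monomial fallback discards exactly the cancellation that is needed. Take $\varphi=(x_d-x_1)^4$ with $t=x_d$, $y=x_1$. Then $\partial_y^2P=12t^2-24yt+12y^2$, and each monomial is nonzero at the fourfold root $t=y\neq 0$. The resulting bound $\int|t-y|^{4(2-2p)}\,dt$ diverges for $p\ge 9/8$. The true integrand $|\partial_y^2P|^p|P|^{2-2p}\sim|t-y|^{8-6p}$ is harmless.

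The missing idea, which is how the paper proceeds, is that transverse and mixed derivatives never need to appear. Replacing $\Delta\varphi_\theta$ by $|D^2\varphi_\theta|$ is what creates the obstacle. Since $\Delta$ is invariant under orthogonal changes of variables, take the orthonormal frame $\mathbf{r}_1,\dots,\mathbf{r}_d$ from Lemma \ref{lem:orderalign}. In this frame, $t\mapsto\varphi_{\theta_0}(\mathbf{x}_0+t\mathbf{r}_i)$ has order exactly $k$ for every $i$. Now apply the Weierstrass preparation theorem, jointly in $\theta$, separately with each $y_i$ as distinguished variable. This gives $\varphi=u_i b_i$ with $u_i$ monic of degree $k$ in $y_i$ and $|b_i|\ge c>0$. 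Then $\Delta\varphi=\sum_i\partial_{y_i}^2\varphi$ and $|\nabla\varphi|\le\sum_i|\partial_{y_i}\varphi|$. Each summand is a pure derivative in the distinguished variable of its own preparation, so
\[
|\partial_{y_i}^2\varphi|^p|\varphi|^{2-2p}\lesssim\big(|u_i''|^p+|u_i'|^p+|u_i|^p\big)|u_i|^{2-2p},
\]
which Fubini and your one-dimensional root estimate (Lemma \ref{lem:polynomint}) bound uniformly. If you insist on the full Hessian, polarize over $d(d+1)/2$ such good directions $v$ whose matrices $vv^{\top}$ span the symmetric matrices.

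A smaller slip: do not discard the factor $\eta$ in $\nabla_\theta\psi_\theta=\eta\nabla_\theta\varphi_\theta$. If you keep it, $|\eta\varphi_\theta|^{2-2p}$ becomes $|\eta|^{2-p}|\varphi_\theta|^{2-2p}$, and the $\eta$-factor is bounded. If you drop it, the cross terms $\nabla\eta\cdot\nabla\varphi_\theta$ and $(\Delta\eta)\varphi_\theta$ carry a factor $|\eta|^{2-2p}$ that blows up near $\partial K$. Your reduction to $J_\theta$ silently ignores that factor.
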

\begin{remark}
    A function $\eta \in C^{\infty}(\Omega)$ satisfying the properties in the statement of Theorem \ref{thm:momentexistencegrad} can for example be constructed as follows. First, note that for $\varepsilon>0$ sufficiently small and for $\tau>0$ defining $A_{\tau}:=\{\mathbf{x}\in \R^d:\ \mathrm{dist}(\mathbf{x},\partial K)\le \tau\}\subset \R^d$  the distance function $\mathrm{dist}(\mathbf{x},\partial K)$ is in $C^\infty(A_{\varepsilon/2}\cap K)$. Now, let $s\in C^\infty(K)$ with 
    $s|_{K\setminus A_\varepsilon}\equiv 1$ and $s|_{K\cap A_{\varepsilon/2}} = \mathrm{dist}(\cdot , \partial K)$ and let $\eta(\mathbf{x}):=\exp(1)\cdot\exp\left(-\frac{1}{s(\mathbf{x})}\right)\cdot \1_K(\mathbf{x})$. It is a standard fact that $\eta \in C^\infty(\Omega)$ (this can be seen from the fact that the function $t\mapsto e^{-1/t}\cdot \1_{(0,\infty)}$ is in $C^\infty(\R)$). Furthermore, it holds that $\mbox{supp}(\eta)=K$ and $\forall x\in K:\ \mbox{dist}(x,\partial K)>\varepsilon\Rightarrow \eta(x) = 1$.
\end{remark}
We finally note that the energy gradient of real-analytic parametrizations are Lipschitz continuous. This property will be needed to prove convergence of the robust optimization methods introduced in Section \ref{sec:optimization}.
\begin{theorem}\label{thm:GradLip}Assume Setting \ref{set:Hamiltonian}
    and consider the mapping $\theta\mapsto \psi_\theta$ from Theorem \ref{thm:momentexistencegrad}. Assume additionally that \(\psi_\theta\neq 0\) for every \(\theta\in Q\).
    Then, the mapping $\theta\mapsto \psi_\theta$ is globally Lipschitz continuous as a map from $Q$ to $H^2(\Omega)$ and continuously differentiable as a map from $Q$ to $L^2(\Omega)$, with globally Lipschitz continuous derivative. Moreover, there exists a constant $C\in (0,\infty)$ such that
    \begin{equation}\label{eq:gradlipschitz}
        \forall \theta_1,\theta_2\in Q:\quad \left|\mathbb{E}[Z_{\theta_1}] -\mathbb{E}[Z_{\theta_2}]\right| \le C|\theta_1 - \theta_2|.
    \end{equation}
\end{theorem}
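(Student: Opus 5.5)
The plan is to deduce everything from smoothness and compactness. The map $(\theta,\mathbf{x})\mapsto \varphi_\theta(\mathbf{x})$ is real-analytic on $\mathcal{P}\times\Omega$, and $\eta\in C^\infty$ has compact support $K$. Hence $\Psi(\theta,\mathbf{x}):=\eta(\mathbf{x})\varphi_\theta(\mathbf{x})$ is $C^\infty$ on $\mathcal{P}\times\Omega$ and vanishes for $\mathbf{x}\notin K$.

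Since $Q$ is compact in the open set $\mathcal{P}$, there is a compact neighbourhood $\tilde Q\subset\mathcal{P}$ of $Q$. On it all mixed derivatives $\partial_\theta^\alpha\partial_{\mathbf{x}}^\beta\Psi$ with $|\alpha|\le 2$ and $|\beta|\le 2$ are uniformly bounded on $\tilde Q\times K$. To handle a possibly nonconvex $Q$ I would either work on its convex hull if it lies in $\mathcal{P}$, or argue locally and chain along segments: a finite cover of $Q$ by small balls plus boundedness of all quantities gives the global constant. Alternatively, one can reduce to the paper's later unconstrained extension.

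For the first claim I would use the mean value inequality in $\theta$:
\[
\|\partial_{\mathbf{x}}^\beta\psi_{\theta_1}-\partial_{\mathbf{x}}^\beta\psi_{\theta_2}\|_{L^2}\le |K|^{1/2}\sup_{\tilde Q\times K}|\nabla_\theta\partial^\beta_{\mathbf{x}}\Psi|\,|\theta_1-\theta_2| .
\]
This gives Lipschitz continuity into $H^2$. Differentiability into $L^2$ comes from Taylor's theorem with a uniformly bounded second $\theta$-derivative. The derivative is $\nabla_\theta\psi_\theta=\eta\nabla_\theta\varphi_\theta$, and it is Lipschitz by the same estimate applied one order higher.

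For \eqref{eq:gradlipschitz} I would use the identity from Setting \ref{set:Hamiltonian}: $2\mathbb{E}[Z_\theta]=\nabla_\theta L(\theta)$ with $L(\theta)=\langle\mathcal{H}\psi_\theta,\psi_\theta\rangle/\|\psi_\theta\|^2$. Writing $N(\theta)=\|\psi_\theta\|^2$ and $A(\theta)=\langle\mathcal{H}\psi_\theta,\psi_\theta\rangle$, self-adjointness gives
\[
\nabla A=2\langle \mathcal{H}\psi_\theta,\nabla_\theta\psi_\theta\rangle,\qquad \nabla N=2\langle\psi_\theta,\nabla_\theta\psi_\theta\rangle .
\]
Then
\[
\nabla L=\frac{\nabla A}{N}-\frac{A\,\nabla N}{N^2}.
\]
Each factor is Lipschitz and bounded on $Q$. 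This uses boundedness of $\mathcal{H}:H^2\to L^2$ (assumed in Setting \ref{set:Hamiltonian}), the $H^2$-Lipschitz property of $\theta\mapsto\psi_\theta$, and the $L^2$-Lipschitz property of $\nabla_\theta\psi_\theta$. Products and quotients of bounded Lipschitz functions are Lipschitz, provided the denominator stays away from zero.

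The main obstacle is a uniform lower bound $N(\theta)\ge c>0$ on $Q$. It follows because $\theta\mapsto\|\psi_\theta\|_{L^2}$ is continuous and $Q$ is compact, together with the assumption $\psi_\theta\neq0$ for all $\theta\in Q$. A further subtlety is that $\psi_\theta$ vanishing near $\partial\Omega$ must not spoil $H^2$ membership; this holds because $\eta$ is smooth with compact support in $\Omega$.
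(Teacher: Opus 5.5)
Your proposal is correct and follows essentially the same route as the paper: uniform bounds on the mixed $\theta$/$\mathbf{x}$-derivatives of $\eta\varphi_\theta$ over a compact set give Lipschitz continuity into $H^2$ and a Lipschitz $L^2$-derivative. Then $\mathbb{E}[Z_\theta]=\langle\mathcal{H}\psi_\theta,\nabla_\theta\psi_\theta\rangle/N(\theta)-A(\theta)\langle\psi_\theta,\nabla_\theta\psi_\theta\rangle/N(\theta)^2$ is Lipschitz as a combination of bounded Lipschitz maps with denominator bounded away from zero. You are in fact slightly more careful than the paper, which leaves implicit both the possible nonconvexity of $Q$ (handled by your local-Lipschitz-plus-boundedness argument on a compact neighbourhood) and the uniform lower bound $\min_{Q}\|\psi_\theta\|_{L^2}^2>0$ that comes from the hypothesis $\psi_\theta\neq 0$.
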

\begin{proof}
We first show (using the notation of Theorem \ref{thm:momentexistencegrad}) that $\theta\mapsto \psi_\theta$ is globally Lipschitz continuous as a map from $Q$ to $H^2(K)\subset H^2(\Omega)$. 
Let $f(\theta,\mathbf{x}):=\varphi_\theta(\mathbf{x})$. By assumption, this is a real-analytic function and consequently all its derivative, including $\nabla f, \ \nabla^2f, \nabla^3f$, are also real-analytic on $\mathcal{P}\times\Omega$ (and therefore bounded on compact sets). Since $Q\times K$ is compact this directly implies that there exists a constant $C\in (0,\infty)$ with
$$
    \|f(\theta_1,\cdot) - f(\theta_2,\cdot)\|_{L^\infty(K)} +
    \|\nabla f(\theta_1,\cdot) - \nabla f(\theta_2,\cdot)\|_{L^\infty(K)}
    +
    \|\nabla^2 f(\theta_1,\cdot) - \nabla^2 f(\theta_2,\cdot)\|_{L^\infty(K)}
    \le C|\theta_1 - \theta_2|\quad \forall \theta_1,\theta_2\in Q.
$$
Since $\eta\in C^\infty(K)$ this directly implies the existence of a (possibly different) constant $C\in (0,\infty)$ with 
$$
    \|\psi_{\theta_1}-\psi_{\theta_2}\|_{H^2(K)}\le C|\theta_1 - \theta_2|\quad \forall \theta_1,\theta_2\in Q.
$$
This shows that the map $\theta\mapsto \psi_\theta$ is globally Lipschitz as a map from $Q$ to $H^2(K)$.
Using similar arguments one can show that the mapping $\theta\mapsto \psi_\theta$ is infinitely often continuously differentiable as a map from $Q$ to $L^2(K)$, which in particular implies that
there exists $C\in (0,\infty)$ with
$$
    \|\psi_{\theta_1}- \psi_{\theta_2}\|_{L^2(K)} + \|\nabla_\theta\psi_{\theta_1}- \nabla_\theta\psi_{\theta_2}\|_{L^2(K)}\le C|\theta_1-\theta_2|\quad \forall \theta_1,\theta_2\in Q.
$$
It remains to show \eqref{eq:gradlipschitz}. To this end note that 
$$
    \mathbb{E}[Z_\theta]=\frac{1}{\|\psi_\theta\|_{L^2}^2}\int_K\Delta\psi_\theta(\mathbf{x})\cdot \nabla_\theta \psi_\theta(\mathbf{x})
    -\frac{1}{\|\psi_\theta\|_{L^2}^4}\langle \mathcal{H}\psi_\theta,\psi_\theta\rangle\cdot  \langle \nabla_\theta\psi_\theta,\psi_\theta\rangle.
$$
By the previous considerations (and using that $\mathcal{H}:L^2 \to H^2$ is bounded), this is Lipschitz continuous (in fact, it is easy to see that it is $C^\infty$) in $\theta$. Equation \eqref{eq:gradlipschitz} follows. 
\end{proof}
\subsection{Proof of the Main Result}
We now prove Theorem \ref{thm:momentexistencegrad}. We will make extensive use of the well-known structure on the local geometry of nodal sets of analytic functions, as exhibited by the Weierstrass preparation theorem.
\begin{theorem}[Weierstrass Preparation Theorem]\label{thm:weierstrassprep}
    Let $\Omega \subset \R^d$ open. Then, for each $f\in C^\omega(\Omega)$, and every $\mathbf{x}=(x_1,\dots , x_d)\in \Omega$ %with $\hat{\mathbf{x}}_{1}:=(x_2,\dots , x_d)$ 
    such that 
    $$
        \forall l=0,\dots , k-1:\ \frac{\partial^l}{\partial x_1^l}f(\mathbf{x})=0\quad \land \quad 
        \frac{\partial^k}{\partial x_1^k}f(\mathbf{x})\neq 0
    $$
    there is an interval $I$ with $x_1\in I$ and an open set $V\subset \R^{d-1}$ with $\hat{\mathbf{x}}_{1}\in V$ and $I\times V\subset \Omega$ and unique functions $a_i\in C^\omega(V)$, $i=0,\dots , k-1$, $b\in C^\omega(I\times V)$ with 
    \begin{multline}
        \forall l=0,\dots , k-1:\ a_l(\hat{\mathbf{x}}_{1}) = 0\ \land \ \\ \forall \mathbf{y}=(y_1,\hat{\mathbf{y}}_{1})\in I\times V:\ b(\mathbf{y})\neq 0\ \land \ f(\mathbf{y}) = \left((y_1-x_1)^k + \sum_{l=0}^{k-1}a_l(\hat{\mathbf{y}}_{1})(y_1-x_1)^l\right)\cdot b(\mathbf{y}).
    \end{multline}
\end{theorem}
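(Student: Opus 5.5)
The plan is to deduce the statement from the classical holomorphic Weierstrass preparation theorem by complexification, and then to use uniqueness to show that the resulting coefficients are real. After translating, we may assume $\mathbf{x}=0$. We write $\mathbf{y}=(y_1,\hat{\mathbf{y}}_1)$, with $z$ the complex variable replacing $y_1$ and $w$ the complex variable replacing $\hat{\mathbf{y}}_1$.

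\textbf{Step 1 (complexification and root counting).} Since $f$ is real-analytic, its power series at $0$ converges on a small complex polydisc. It therefore defines a holomorphic $F(z,w)$ with $F=f$ on real points and with $\overline{F(\bar z,\bar w)}=F(z,w)$. The one-variable function $z\mapsto F(z,0)$ has a zero of exact order $k$ at $0$, so we choose $r>0$ with $F(z,0)\neq 0$ for $0<|z|\le r$. By continuity and compactness of the circle $|z|=r$, there is a polydisc $D'$ around $0$ in $\C^{d-1}$ with $F(z,w)\neq 0$ for $|z|=r$ and $w\in D'$.

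By the argument principle, for each $w\in D'$ the function $F(\cdot,w)$ has exactly $k$ zeros $\alpha_1(w),\dots,\alpha_k(w)$ in $|z|<r$, counted with multiplicity. The power sums
$$s_m(w)=\frac{1}{2\pi i}\oint_{|z|=r} z^m\,\frac{\partial_z F(z,w)}{F(z,w)}\,dz$$
are holomorphic in $w$. Newton's identities then make the elementary symmetric functions of the $\alpha_j(w)$ holomorphic. Hence
$$P(z,w)=\prod_{j}(z-\alpha_j(w))=z^k+\sum_{l<k}a_l(w)z^l$$
has holomorphic coefficients $a_l$. Since all roots equal $0$ when $w=0$, we get $a_l(0)=0$.

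\textbf{Step 2 (the unit $b$).} Define $b=F/P$. For fixed $w$, $b(\cdot,w)$ has only removable singularities, because $F$ and $P$ share the same zeros with the same multiplicities. To obtain joint holomorphy, represent $b$ by the Cauchy integral
$$b(z,w)=\frac{1}{2\pi i}\oint_{|\zeta|=r}\frac{F(\zeta,w)}{P(\zeta,w)(\zeta-z)}\,d\zeta.$$
The integrand is holomorphic in $(z,w)$ for $|z|<r$, since $P$ does not vanish on $|\zeta|=r$. By construction, $b$ has no zeros in $\{|z|<r\}\times D'$.

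I expect this step, together with uniqueness, to be the main technical point.

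\textbf{Step 3 (uniqueness and reality).} Suppose $F=P'b'$ with $P'$ a monic polynomial of degree $k$ in $z$ whose roots lie in $|z|<r$, and with $b'$ nonvanishing. Then, for each $w$, the roots of $P'(\cdot,w)$ are exactly the zeros of $F(\cdot,w)$ in the disc, with multiplicity. So $P'=P$, and then $b'=b$.

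Now apply this to $\overline{P(\bar z,\bar w)}$ and $\overline{b(\bar z,\bar w)}$. They give another such factorization of $F$, because of the conjugation symmetry of $F$. Uniqueness then forces $\overline{a_l(\bar w)}=a_l(w)$, so the $a_l$ are real on real $w$, and likewise $b$ is real on real points.

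Restricting to $I=(-r,r)$ and $V=D'\cap\R^{d-1}$ (shrunk so that $I\times V\subset\Omega$) gives real-analytic $a_l$ and $b$ with $b\neq 0$ and $f=(y_1^k+\sum_{l<k} a_l y_1^l)\,b$. Uniqueness in the real setting follows from the complex uniqueness, since any real factorization complexifies to one on a possibly smaller polydisc.
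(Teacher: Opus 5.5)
Your proposal is correct and takes essentially the same route as the paper, whose proof cites the holomorphic preparation theorem and notes that a real-analytic function extends locally to a holomorphic one; the difference is that you prove the holomorphic version yourself (argument principle, Newton's identities, Cauchy integral for $b$). Your uniqueness-plus-conjugation argument also fills a step the paper's one-line proof leaves implicit: it shows that the $a_l$ and $b$ are real on real points, and it reduces real uniqueness to complex uniqueness using that $V=D'\cap\R^{d-1}$ is connected.
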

\begin{proof}
    A proof for holomorphic functions can be found in \cite[Theorem 4.41]{korevaar2017several}, the real-analytic statement follows by noting that every real-analytic function can be locally extended to a holomorphic function. 
\end{proof}
With the Weierstrass preparation theorem at hand, we can now reduce the local integrability of the random variables $E_\theta, Z_\theta$ to studying integrability properties of the \emph{Weierstrass polynomials} $y_1 + \sum_{l=0}^{k-1}a_l(\mathbf{y}_{d-1})y_1^l$. This will involve quotients between gradients and Laplacians of the Weierstrass polynomial and the Weierstrass polynomial itself. Since applying the Laplacian requires us to differentiate the Weierstrass polynomial in every coordinate $y_i$ and since the root functions $a_l$ are in general not analytic, we need to make sure that a Weierstrass preparation is possible in every coordinate. This is ensured by the following Lemma. 
\begin{lemma}\label{lem:orderalign}
    Let $\Omega \subset \R^d$ open. Let $f\in C^\omega(\Omega)$, $\mathbf{x}=(x_1,\dots , x_d)\in \Omega$ 
    and 
    $$
        \mathrm{ord}_f(\mathbf{x}):=\min\{m\in \N_0:\ \exists \bm{\alpha}:=(\alpha_1,\dots , \alpha_d)\in \mathbb{N}_0^d:\ \sum_{i=1}^d\alpha_i = m\ \land \frac{\partial^{\alpha_1}}{\partial x_d^{\alpha_1}}\cdots \frac{\partial^{\alpha_d}}{\partial x_d^{\alpha_d}}f(\mathbf{x})\neq 0\}.
    $$
    Then, there exist orthonormal vectors $\mathbf{r}_1,\dots , \mathbf{r}_d\in \R^d$ such that 
    $$
        \forall i\in [d]:\ \forall l=0,\dots , \mathrm{ord}_f(\mathbf{x})-1:\ \frac{d^l}{dt^l}f(\mathbf{x}+t\mathbf{r}_i)|_{t=0}=0 \quad \land \quad \frac{d^{\mathrm{ord}_f(\mathbf{x})}}{dt^{\mathrm{ord}_f(\mathbf{x})}}f(\mathbf{x}+t\mathbf{r}_i)|_{t=0}\neq0 \;.
    $$
\end{lemma}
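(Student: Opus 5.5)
Write $m:=\mathrm{ord}_f(\mathbf{x})$ and let $P_m$ be the leading homogeneous Taylor polynomial of $f$ at $\mathbf{x}$:
$$
P_m(\mathbf{h}) := \sum_{|\bm{\alpha}|=m}\frac{\partial^{\bm{\alpha}}f(\mathbf{x})}{\bm{\alpha}!}\,\mathbf{h}^{\bm{\alpha}} .
$$
By definition of the order, $P_m$ is not the zero polynomial. For a unit vector $\mathbf{r}$, the one-variable function $t\mapsto f(\mathbf{x}+t\mathbf{r})$ is real-analytic near $0$. Its Taylor coefficients are
$$
\frac{1}{l!}\frac{d^l}{dt^l}f(\mathbf{x}+t\mathbf{r})\Big|_{t=0} = P_l(\mathbf{r}),
$$
where $P_l$ is the degree-$l$ homogeneous part of the Taylor expansion; this is just the chain rule, i.e.\ $\frac{d^l}{dt^l}f(\mathbf{x}+t\mathbf{r})|_{t=0}=\sum_{|\bm{\alpha}|=l}\frac{l!}{\bm{\alpha}!}\partial^{\bm{\alpha}}f(\mathbf{x})\mathbf{r}^{\bm{\alpha}}$. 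For $l<m$ all partial derivatives of order $l$ vanish by minimality of $m$, so the first set of conditions holds for \emph{every} direction $\mathbf{r}$. The only thing left to show is that there is an orthonormal basis $\mathbf{r}_1,\dots,\mathbf{r}_d$ with $P_m(\mathbf{r}_i)\neq 0$ for all $i$.

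For this I use a genericity argument on the orthogonal group. Consider the map
$$
F : O(d)\to\R,\qquad F(R) := \prod_{i=1}^d P_m(R\mathbf{e}_i),
$$
whose argument $R$ has columns $\mathbf{r}_i = R\mathbf{e}_i$. Since $P_m$ is a polynomial, $F$ is a polynomial in the matrix entries. Its restriction to the connected component $SO(d)$ is therefore real-analytic on a connected real-analytic manifold. If I can show $F\not\equiv 0$ on $SO(d)$, then its zero set there is a proper analytic subset, hence nowhere dense, and any $R$ in the (dense, open) complement gives the required basis. (For $d=1$ the claim is trivial, since then $P_m(\pm 1)\neq 0$.)

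The main point is to show that $F$ is not identically zero, and for this I consider each factor separately. The zero set $Z:=\{\mathbf{r}\in\mathbb{S}^{d-1}: P_m(\mathbf{r})=0\}$ is a proper analytic subset of the sphere, because $P_m\not\equiv 0$ and $P_m$ is homogeneous. Hence $Z$ is closed with empty interior, and in fact has measure zero. For fixed $i$, the map $SO(d)\ni R\mapsto R\mathbf{e}_i\in\mathbb{S}^{d-1}$ is a surjective submersion. Therefore the preimage of $Z$ under it is a closed set with empty interior in $SO(d)$ (equivalently, it has Haar measure zero, since the pushforward of Haar measure is the uniform measure on the sphere). A finite union of $d$ closed nowhere dense sets is nowhere dense, so some $R\in SO(d)$ avoids all $d$ preimages. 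Then $P_m(R\mathbf{e}_i)\neq 0$ for all $i$, which gives the claim. With this argument the analyticity of $F$ is not even needed.

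I expect the only delicate step to be justifying that the zero set of the nonzero homogeneous polynomial $P_m$ on the sphere has empty interior. This follows because, if $P_m$ vanished on an open subset of the sphere, homogeneity would make it vanish on an open cone in $\R^d$, and a polynomial vanishing on a nonempty open set is identically zero, a contradiction. Finally, for the full statement I should make sure the order is attained along each $\mathbf{r}_i$ at exactly the level $m$. This is the identity above: the $m$-th derivative along $\mathbf{r}_i$ equals $m!\,P_m(\mathbf{r}_i)\neq 0$. The typo in the definition (all derivatives written with respect to $x_d$) is to be read as the multi-index derivative $\partial^{\bm{\alpha}}$.
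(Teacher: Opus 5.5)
Your proposal is correct and takes essentially the same route as the paper. All directional derivatives of order below $m=\mathrm{ord}_f(\mathbf{x})$ vanish, and the $m$-th one is (a multiple of) the nonzero homogeneous form at $\mathbf{r}$; its zero set on $\mathbb{S}^{d-1}$ is closed with empty interior, so its preimages under the open maps $R\mapsto R\mathbf{e}_i$ are closed and nowhere dense, and some orthogonal $R$ avoids all $d$ of them (the paper works on $O(d)$, avoiding your separate $d=1$ case and the analytic detour via $F$, while you justify the empty-interior claim that the paper only asserts).
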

\begin{proof}
    Since for $l<\mathrm{ord}_f(\mathbf{x})$ all derivatives of $f$ at $\mathbf{x}$ of order $l$ vanish, it follows that for any $\mathbf{r}\in \R^d$ and $l<\mathrm{ord}_f(\mathbf{x})$ it holds that
    $$
        \frac{d^l}{dt^l}f(\mathbf{x}+t\mathbf{r})|_{t=0}=0.
    $$
    Furthermore, we note that with $H(\mathbf{r}):=\sum_{\bm{\alpha}=(\alpha_1,\dots , \alpha_d)\in \mathbb{N}_0^d:\ \sum_{i=1}^d\alpha_i = \mathrm{ord}_f(\mathbf{x})}\frac{\mathrm{ord}_f(\mathbf{x})!}{\bm{\alpha}!}\frac{\partial^{\alpha_1}}{\partial x_d^{\alpha_1}}\cdots \frac{\partial^{\alpha_d}}{\partial x_d^{\alpha_d}}f(\mathbf{x})\cdot \mathbf{r}^{\bm{\alpha}}$, we have that 
    \begin{equation}\label{eq:orderderivative}
        \frac{d^{\mathrm{ord}_f(\mathbf{x})}}{dt^{\mathrm{ord}_f(\mathbf{x})}}f(\mathbf{x}+t\mathbf{r})|_{t=0}=H(\mathbf{r}).
    \end{equation}
    Since this is a homogeneous polynomial, it follows that 
    $$
        B:=\{\mathbf{r}\in \mathbb{S}^{d-1}:\ H(\mathbf{r})=0\}
    $$
    is closed and has empty interior in $\mathbb{S}^{d-1}$.
    For $i\in [d]$, let $\pi_i:\left\{\begin{array}{ccc}O(d)& \to & \mathbb{S}^{d-1}\\ \mathbf{R}=(\mathbf{r}_1,\dots, \mathbf{r}_d)& \mapsto & \mathbf{r}_i\end{array}\right.$ and note that, since $\pi_i$ is continuous and open, the set $\pi_i^{-1}\left(B\right)$ is closed and has empty interior in $O(d)$. Therefore, there exists $\mathbf{R}=(\mathbf{r}_1,\dots , \mathbf{r}_d)\in  O(d)\setminus \bigcup_{i\in [d]}\pi_i^{-1}\left(B\right)$ which by 
    \eqref{eq:orderderivative} satisfies that $\frac{d^{\mathrm{ord}_f(\mathbf{x})}}{dt^{\mathrm{ord}_f(\mathbf{x})}}f(\mathbf{x}+t\mathbf{r}_i)|_{t=0}\neq0$ for all $i\in [d]$.
\end{proof}
Roughly speaking, the local integrability of $E_\theta, Z_\theta$ is determined by integrability properties of quantities related to Weierstrass polynomials. The following lemma establishes the necessary integrability properties for polynomials that will later be used in the proof of Theorem \ref{thm:momentexistencegrad}.
\begin{lemma}\label{lem:polynomint}
    Let $u(t) = t^k + \sum_{l=0}^{k-1}a_lt^l$, $t\in I$, and $I\subset \R$ a bounded interval. Then, for each $p<3/2$ there is a constant $C_1=C_1(I,\max_{l=0}^{k-1}|a_l|,p)$ with 
    \begin{equation}\label{eq:polynomint1}
        \int_I|u'(t)|^p|u(t)|^{2-2p}dt \le C_1
    \end{equation}
    and for each $p<5/4$ there is a constant $C_2=C_2(I,\max_{l=0}^{k-1}|a_l|,p)$ with 
    \begin{equation}\label{eq:polynomint2}
        \int_I|u''(t)|^p|u(t)|^{2-2p}dt \le C_2.
    \end{equation}
\end{lemma}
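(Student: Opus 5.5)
Since $u$ is a monic polynomial of degree $k$, I will factor it over $\mathbb{C}$ as $u(t)=\prod_{j=1}^k (t-z_j)$. The roots satisfy $|z_j|\le 1+\max_l|a_l|=:R$ by the Cauchy bound, and all constants will depend on $I$, $R$, $p$ and $k$. The idea is to bound $u'/u$ and $u''/u$ pointwise by sums of inverse powers of the distances $|t-z_j|$. Each term is then integrated separately.

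For the first bound I write $|u'|^p|u|^{2-2p}=|u'/u|^p\,|u|^2$. The logarithmic derivative gives $u'/u=\sum_j (t-z_j)^{-1}$, so
\[
|u'(t)/u(t)|^p\le C\sum_j |t-z_j|^{-p}.
\]
I keep one factor $|t-z_j|^{2}$ from $|u|^2$ and bound the remaining factors by the constant $(|I|+2R)^{2(k-1)}$, uniformly on $I$. This reduces each summand to $C\int_I |t-z_j|^{2-p}\,dt$. For $p<3/2$ the exponent $2-p$ is positive, so this integral is bounded uniformly in $z_j$ with $|z_j|\le R$. Since $|t-z_j|\ge |t-\mathrm{Re}\,z_j|$, complex roots are no worse than real ones, and this case is even more generous than needed: any $p<3$ works. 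Nothing about degenerate roots is lost here, because the factors $|t-z_j|^{-p}$ and $|t-z_j|^{2}$ are paired root by root.

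For the second bound I use
\[
u''/u=\Big(\sum_j (t-z_j)^{-1}\Big)^2-\sum_j (t-z_j)^{-2}=\sum_{i\ne j}(t-z_i)^{-1}(t-z_j)^{-1}.
\]
Then $|u''/u|^p\le C\sum_{i\ne j}|t-z_i|^{-p}|t-z_j|^{-p}$. From $|u|^2$ I keep the factors $|t-z_i|^2|t-z_j|^2$ and bound the rest by a constant. Each term becomes $\int_I |t-z_i|^{2-p}|t-z_j|^{2-p}\,dt$, which is bounded for $p<2$ and so in particular for $p<5/4$. The cancellation of the diagonal terms is what makes this work; without it one would get $|t-z_j|^{2-2p}$, which is still integrable for $p<3/2$.

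So both claims follow by elementary pointwise bounds, with constants depending only on $|I|$, $R$, $p$ and $k$. The only care needed is the uniformity in the root locations, which is handled by the estimate $|t-z|\ge|t-\mathrm{Re}\,z|$ together with the convexity inequality $(\sum_{m=1}^{n} x_m)^p\le n^{\max(p,1)} \sum_m x_m^p$. I expect no real obstacle. The thresholds $3/2$ and $5/4$ stated in the lemma are weaker than what this argument yields, because the sharp thresholds of Theorem \ref{thm:momentexistencegrad} arise only later, when these one-dimensional bounds are combined with the $\mathcal{V}$ and cross-coordinate terms. The step I would double-check is that the stated dependence of $C_1$ and $C_2$ on $\max_l|a_l|$ covers the degree $k$, which it implicitly does since $k$ is fixed.
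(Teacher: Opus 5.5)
There is a genuine gap in your very first reduction. You write $|u'|^p|u|^{2-2p}=|u'/u|^p\,|u|^2$, but $|u'/u|^p|u|^2=|u'|^p|u|^{2-p}$. The correct identity is $|u'|^p|u|^{2-2p}=|u'/u|^p\,|u|^{2-p}$, and likewise with $u''$. The integrand you actually estimate, $|u'|^p|u|^{2-p}$, is bounded and continuous on $I$ for $0<p\le 2$. So your argument never meets the singularity the lemma is about.

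The conclusions you draw from it are false. For $u(t)=t$ and $I=(-1,1)$,
\[
\int_I|u'|^p|u|^{2-2p}\,dt=\int_{-1}^1|t|^{2-2p}\,dt=\infty \quad\text{for every } p\ge 3/2 .
\]
For $u(t)=t^2$,
\[
\int_I|u''|^p|u|^{2-2p}\,dt=2^p\int_{-1}^1|t|^{4-4p}\,dt=\infty \quad\text{for every } p\ge 5/4 .
\]
So the thresholds $3/2$ and $5/4$ are already sharp for this one-dimensional lemma; this is exactly the example $x_1^2$ of Remark \ref{rem:sharpmomentsgrad}. They are not artifacts of $\mathcal{V}$ or of cross-coordinate terms.

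With the correct exponent, your root-by-root pairing still settles the first bound. It produces $|t-z_j|^{-p}\,|t-z_j|^{2-p}=|t-z_j|^{2-2p}$, which is integrable uniformly in $|z_j|\le R$ precisely when $p<3/2$. Here $|t-z|\ge|t-\mathrm{Re}\,z|$ now works in your favour, because the exponent is negative for $p>1$.

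For the second bound, the off-diagonal form of $u''/u$ gives terms $|t-z_i|^{2-2p}|t-z_j|^{2-2p}$ with $i\neq j$. These can have two singular factors at the same point: for a double root the term is $|t-z|^{4-4p}$. The missing step is to decouple the two factors. The paper uses Cauchy--Schwarz,
\[
\int_I|t-z_i|^{2-2p}|t-z_j|^{2-2p}\,dt\le\Big(\int_I|t-z_i|^{4-4p}dt\Big)^{1/2}\Big(\int_I|t-z_j|^{4-4p}dt\Big)^{1/2},
\]
and $ab\le\tfrac12(a^2+b^2)$ would also do. Either bound is uniform in the roots if and only if $4-4p>-1$, i.e. $p<5/4$; this is where $5/4$ comes from.

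The diagonal cancellation also matters more than you say. Without it the diagonal terms would give $|t-z_j|^{2-3p}$, which is integrable only for $p<1$.

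A minor point: the remaining factors should be bounded by $(\sup_{t\in I}|t|+R)^{(2-p)(k-1)}$, not $(|I|+2R)^{2(k-1)}$, since $I$ need not contain the origin.
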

\begin{proof}
    Consider the factorization 
    $$
        u(t) = \prod_{l=1}^{k}(t-\alpha_l),\quad \alpha_l\in \C.
    $$
    Then, we have that 
    $$
        u'(t) = \sum_{i=1}^k\prod_{l\in [k]\setminus\{i\}}(t-\alpha_l),\quad u''(t)=\sum_{i\in [k],\ j\in [k]\setminus \{i\}}\prod_{l\in [k]\setminus\{i,j\}}(t-\alpha_l)
    $$
    and therefore
    $$
        \frac{u'(t)}{u(t)} = \sum_{i=1}^k(t-\alpha_i)^{-1},\quad \frac{u''(t)}{u(t)}=\sum_{i\in [k],\ j\in [k]\setminus \{i\}}(t-\alpha_i)^{-1}(t-\alpha_j)^{-1}.
    $$
    By the Cauchy root bound (together with the fact that the highest degree term in $p$ has coefficient $1$), it holds that
    $$
        |\alpha_l|\le 1 + \max_{l=0}^{k-1}|a_l|.
    $$
    Hence, we obtain that
    $$
        \left|\frac{u'(t)}{u(t)}\right|^p|u(t)|^{2-p}\le k^{p-1}\sum_{i=1}^k |t-\alpha_i|^{2-2p}\prod_{l\in [k]\setminus\{i\}}|t-\alpha_l|^{2-p}\le
        k^{p-1}(|t|+1+\max_{l=0}^{k-1}|a_l|)^{(2-p)(k-1)}\sum_{i=1}^k |t-\alpha_i|^{2-2p},
    $$
    which is integrable on $I$ if $2-2p>-1$ or equivalently $p<\frac{3}{2}$. This proves \eqref{eq:polynomint1}.

    To establish \eqref{eq:polynomint2} we note that with some constant $C_{k,p}$
\[
\begin{aligned}
    \left|\frac{u''(t)}{u(t)}\right|^p|u(t)|^{2-p}
    &\le
    C_{k,p}
    \sum_{i\in [k]}\sum_{j\in [k]\setminus\{i\}}
    |t-\alpha_i|^{2-2p}|t-\alpha_j|^{2-2p}
    \prod_{l\in [k]\setminus\{i,j\}}|t-\alpha_l|^{2-p}.
\end{aligned}
\]
Hence, it is enough to bound terms of the form
\[
    \int_I |t-\alpha_i|^{2-2p}|t-\alpha_j|^{2-2p}dt.
\]
By Cauchy-Schwarz,
\[
\begin{aligned}
    \int_I |t-\alpha_i|^{2-2p}|t-\alpha_j|^{2-2p}\,dt
    &\le
    \left(\int_I |t-\alpha_i|^{4-4p}\,dt\right)^{1/2}
    \left(\int_I |t-\alpha_j|^{4-4p}\,dt\right)^{1/2},
\end{aligned}
\]
which is integrable on I if 
\(4-4p>-1\), or equivalently \(p<\frac54\). 
This, together with the estimate for $|\alpha_l|$ above, proves \eqref{eq:polynomint2}.

\end{proof}
We are now ready to prove Theorem \ref{thm:momentexistencegrad}.
\begin{proof}[Proof of Theorem \ref{thm:momentexistencegrad}]

    In view of the definition of $Z_\theta$ we need to show that for $p<\frac54$ it holds that
    $$
        \sup_{\mu\in Q} \int_{\Omega}\left|\frac{\Delta \psi_\mu(\mathbf{y})\cdot\nabla_\mu \psi_\mu(\mathbf{y})}{\psi_\mu(\mathbf{y})^2}\right|^p|\psi_\mu(\mathbf{y})|^2d\mathbf{y}<\infty.
    $$
    Denote $f:(\theta,\mathbf{x})\mapsto \varphi_\theta(\mathbf{x})$, and for $(\theta,\mathbf{x})\in Q\times K$ fixed, let $k=\mathrm{ord}_{\varphi_\theta}(\mathbf{x})$.
    By Lemma \ref{lem:orderalign}, there are orthonormal vectors $\mathbf{r}_1,\dots , \mathbf{r}_d\in \R^d$ with 
    $$
        \forall i\in [d]:\ \forall l=0,\dots , k-1:\ \frac{d^l}{dt^l}f(\theta,\mathbf{x}+t\mathbf{r}_i)|_{t=0}=0 \quad \land \quad \frac{d^{k}}{dt^{k}}f(\theta,\mathbf{x}+t\mathbf{r}_i)|_{t=0}\neq 0 \;.
    $$
    For ease of notation from now on we assume that $\mathbf{r}_1,\dots , \mathbf{r}_d$ is the standard Euclidean basis and consequently that
    \begin{equation}\label{eq:prepprep}
        \forall i\in [d]:\ \forall l=0,\dots , k-1:\ \frac{\partial^l}{\partial x_i^l}f(\theta,\mathbf{x})=0 \quad \land \quad \frac{\partial^{k}}{\partial x_i^{k}}f(\theta,\mathbf{x})\neq 0.
    \end{equation}
    Important Remark: Since ultimately we will develop estimates for $\Delta \varphi_\theta$ and since $\Delta$ is invariant under orthogonal coordinate transforms this reduction will be harmless. 
    
    Let us introduce some convenient notation: For $i\in [d]$ and $\mathbf{y}\in \R^d$ we denote $\hat{\mathbf{y}}_i:=(y_1,\dots , y_{i-1},y_{i+1},\dots , y_d)\in \R^{d-1}$ and for $I\subset \R$ and a product set $V=\bigtimes_{j=1}^{d-1}V_j \subset \R^{d-1}$ denote $I\times_i V:= \bigtimes_{j=1}^{i-1}V_j\times I\times \bigtimes_{j=i+1}^{d-1}V_j$. Due to \eqref{eq:prepprep}, we can now apply Theorem \ref{thm:weierstrassprep} simultaneously in each coordinate of $\mathbf{x}$ (and, for ease of notation, assuming that $x_1=0$), yielding for each $i\in [d]$ an interval $I_i$ with $x_i\in I_i$, an open product set $V_i\subset \R^{d-1}$ with $\hat{\mathbf{x}}_{i}\in V_i$ and $I_i\times_i V_i\subset \Omega$, an open set $W_i\subset \mathcal{P}$ with $\theta\in W_i$, as well as unique functions $a_l^i\in C^\omega(W_i\times V_i)$, $l=0,\dots , k-1$, $b^i\in C^\omega(W_i\times I_i\times_i V_i)$ with 
    \begin{align}
        \forall l=0,\dots , k-1:\ a_l^i(\theta,\hat{\mathbf{x}}_{i}) = 0\label{eq:prep1}\\
        \forall (\mu,\mathbf{y})\in W_i\times I_i\times_i V_i:\ b^i(\mu,\mathbf{y})\neq 0\label{eq:prep2}\\
        \forall (\mu,\mathbf{y})\in W_i\times I_i\times_i V_i:\ f(\mu,\mathbf{y}) = \left(y_i^k + \sum_{l=0}^{k-1}a_l^i(\mu,\hat{\mathbf{y}_{i}})y_i^l\right)\cdot b^i(\mu,\mathbf{y}).
        \label{eq:prep3}
    \end{align}
    By shrinking the sets $W_i,I_i,V_i$ if necessary, we can in particular assume that $\ |b^i(\mu,\mathbf{y})|>\epsilon> 0$ for all $(\mu,\mathbf{y})\in W_i\times I_i\times_i V_i$ and $i\in[d]$. We now aim to show that 
    \begin{multline}\label{eq:integrability}
        \sup_{\mu\in \bigcap_{i=1}^dW_i} \int_{\bigcap_{i=1}^dI_i\times_iV_i}\left|\frac{\Delta f(\mu,\mathbf{y})\cdot \nabla_\mu f(\mu,\mathbf{y})}{f(\mu,\mathbf{y})^2}\right|^p|f(\mu,\mathbf{y})|^2d\mathbf{y}<\infty \ \land 
\\        \sup_{\mu\in \bigcap_{i=1}^dW_i} \int_{\bigcap_{i=1}^dI_i\times_iV_i}\left|\frac{|\nabla_{\mathbf{y}} f(\mu,\mathbf{y})|\cdot |\nabla_\mu f(\mu,\mathbf{y})|}{f(\mu,\mathbf{y})^2}\right|^p|f(\mu,\mathbf{y})|^2d\mathbf{y}<\infty.
    \end{multline}
    Since due to analyticity $\nabla_\mu f(\mu,\mathbf{y})$ is locally bounded, \eqref{eq:integrability} follows if we can establish that
    \begin{equation}\label{eq:integrability2}
        \forall i\in[d],\ l\in \{1,2\}:\ \sup_{\mu\in W_i} \int_{I_i\times_iV_i}\left|\frac{\frac{\partial^l}{\partial y_i^l} f(\mu,\mathbf{y})}{f(\mu,\mathbf{y})^2}\right|^p|f(\mu,\mathbf{y})|^2d\mathbf{y}<\infty.
    \end{equation}
    In order to prove \eqref{eq:integrability2}, we use the representations \eqref{eq:prep1}, \eqref{eq:prep2}, and \eqref{eq:prep3}. Setting $u_{i,\mu,\hat{\mathbf{y}}_i}(y_i):=y_i^k + \sum_{l=0}^{k-1}a_l^i(\mu,\hat{\mathbf{y}_{i}})y_i^l$, it follows from \eqref{eq:prep2} that \eqref{eq:integrability2} holds true if
    for all $i\in [d]$:
    \begin{equation}\label{eq:integrability3}
         \sup_{\mu\in W_i} \int_{I_i\times_iV_i}\left|\frac{u_{i,\mu,\hat{\mathbf{y}}_i}'(y_i)}{u_{i,\mu,\hat{\mathbf{y}}_i}(y_i)^2}\right|^p|u_{i,\mu,\hat{\mathbf{y}}_i}(y_i)|^2d\mathbf{y}<\infty\ \land \ \sup_{\mu\in W_i} \int_{I_i\times_iV_i}\left|\frac{u_{i,\mu,\hat{\mathbf{y}}_i}''(y_i)}{u_{i,\mu,\hat{\mathbf{y}}_i}(y_i)^2}\right|^p|u_{i,\mu,\hat{\mathbf{y}}_i}(y_i)|^2d\mathbf{y}<\infty\
    \end{equation}
    Finally, we note that, if $p<\frac54$, \eqref{eq:integrability3} is now a direct consequence of Lemma \ref{lem:polynomint} using the fact that the coefficient functions $a_l^i(\mu,\hat{\mathbf{y}}_i)$ are bounded on $I_i\times_iV_i$. We have thus established \eqref{eq:integrability}.

    Next, we note that 
    \begin{multline}\label{eq:integrability4}
    \sup_{\mu\in \bigcap_{i=1}^dW_i} \int_{\bigcap_{i=1}^dI_i\times_iV_i}\left|\frac{\Delta \psi_\mu(\mathbf{y})\cdot\nabla_\mu \psi_\mu(\mathbf{y})}{\psi_\mu(\mathbf{y})^2}\right|^p|\psi_\mu(\mathbf{y})|^2d\mathbf{y} =\\
        \sup_{\mu\in \bigcap_{i=1}^dW_i} \int_{\bigcap_{i=1}^dI_i\times_iV_i}\left|\frac{\Delta (\eta(\mathbf{y})f(\mu,\mathbf{y}))\cdot \eta(\mathbf{y})\nabla_\mu f(\mu,\mathbf{y})}{\eta(\mathbf{y})^2f(\mu,\mathbf{y})^2}\right|^p|\eta(\mathbf{y})f(\mu,\mathbf{y})|^2d\mathbf{y}<\infty
    \end{multline}
    is a direct consequence of \eqref{eq:integrability} and properties of $\eta$:
    Define
    $D:=\bigcap_{i=1}^d(I_i\times_iV_i)$ and
    $W:=\bigcap_{i=1}^d W_i$.
    Shrinking the sets \(I_i,V_i,W_i\), if necessary, we may assume that
    \(f,\nabla_\mu f,\eta,\nabla\eta,\Delta\eta\) are uniformly
    bounded on \(W\times D\).

Since \(\psi_\mu=\eta f(\mu,\cdot)\) and
\(\nabla_\mu\psi_\mu=\eta\nabla_\mu f\), the integrand in
\eqref{eq:integrability4} is
equal to
\[
    |\Delta(\eta f)|^p\,|\nabla_\mu f|^p\,
    |\eta|^{2-p}|f|^{2-2p}.
\]
By the product rule,
\[
    \Delta(\eta f)=\eta\,\Delta f
    +2\nabla\eta\cdot\nabla f
    +(\Delta\eta)f .
\]
Hence, for \(p<5/4\), there exists a constant \(C<\infty\), independent of
\(\mu\in W\), such that on \(D\)
\begin{align*}
    &|\Delta(\eta f)|^p|\nabla_\mu f|^p|\eta|^{2-p}|f|^{2-2p}  \\
    &\qquad\le C\left(
        |\Delta f|^p|\nabla_\mu f|^p|f|^{2-2p}
        +
        |\nabla f|^p|\nabla_\mu f|^p|f|^{2-2p}
        +
        |f|^{2-p}
    \right).
\end{align*}
The first term is controlled by the \(l=2\) case of
\eqref{eq:integrability2}, since
\(\Delta f=\sum_{j=1}^d\partial_j^2 f\) and \(\nabla_\mu f\) is uniformly
bounded. The second term is controlled by the \(l=1\) case of
\eqref{eq:integrability2}, again using the uniform boundedness of
\(\nabla_\mu f\). Finally, the third term is uniformly integrable on \(D\),
because \(f\) is bounded on \(W\times D\), \(D\) has finite measure, and
\(2-p>0\). Therefore
\[
    \sup_{\mu\in W}
    \int_D
    \left|
        \frac{\Delta\psi_\mu(\mathbf y)\cdot\nabla_\mu\psi_\mu(\mathbf y)}
             {\psi_\mu(\mathbf y)^2}
    \right|^p
    |\psi_\mu(\mathbf y)|^2\,d\mathbf y
    <\infty,
\]
which proves \eqref{eq:integrability4}.
    
    In summary, \eqref{eq:integrability4} yields an open covering of $Q\times K$ by sets $W\times V$ with 
    $$
        \sup_{\mu\in W} \int_{V}\left|\frac{\Delta \psi_\mu(\mathbf{y})\cdot\nabla_\mu \psi_\mu(\mathbf{y})}{\psi_\mu(\mathbf{y})^2}\right|^p|\psi_\mu(\mathbf{y})|^2d\mathbf{y}<\infty
    $$
    By compactness, there is a global constant with 
    $$
        \sup_{\mu\in Q} \int_{\Omega}\left|\frac{\Delta \psi_\mu(\mathbf{y})\cdot\nabla_\mu \psi_\mu(\mathbf{y})}{\psi_\mu(\mathbf{y})^2}\right|^p|\psi_\mu(\mathbf{y})|^2d\mathbf{y}\le C<\infty.
    $$
    Since $\mathcal{H}=-\Delta + \mathcal{V}$, this, the compactness of $\Omega$, the analyticity of $f$ and the assumption that $\mathcal{V}\in L^{5/4}_{\mathrm{loc}}$ imply that there is a global constant with
    $$
        \sup_{\mu\in Q} \int_{\Omega}\left|\frac{\mathcal{H} \psi_\mu(\mathbf{y})\cdot\nabla_\mu \psi_\mu(\mathbf{y})}{\psi_\mu(\mathbf{y})^2}\right|^p|\psi_\mu(\mathbf{y})|^2d\mathbf{y}\le C<\infty,
    $$
    which yields \eqref{eq:gradmomentthres}.
    
\end{proof}

\section{PS-Clip-VMC: Provably Convergent Stochastic Optimization}\label{sec:optimization}

In this section, we propose a robust gradient estimator based on per-sample clipping of both local energies and gradients. We show that, under weak moment assumptions, the resulting optimization method converges to a stationary point both in expectation and with high probability. The estimator and some useful notation are introduced in the following setting.

\begin{setting}[PS-Clip-VMC]\label{set:optimization}
    Assume Setting \ref{set:Hamiltonian} and consider a parametrized model $\mathbb{R}^P\ni \theta \mapsto \psi_\theta\in H^2(\Omega)$. For $\theta\in \mathbb{R}^P$ define the \emph{loss} $L(\theta):= \frac{\langle \mathcal{H}\psi_\theta,\psi_\theta\rangle}{\|\psi_\theta\|_{L^2}^2}=\mathbb{E}[E_\theta]$.\\
    Let $L^\ast:=\inf_{\theta\in \mathbb{R}^P}L(\theta)$ and assume that $L^\ast >-\infty$ (this holds if the bilinear form associated with $\mathcal{H}$ is bounded below; see for example \cite[Theorem 4.3]{kato:76:perturbation} for suitable conditions for this to hold).\\
    For $n\in \mathbb{N}$, $\alpha, \beta \in (0,\infty)$, $\theta\in \mathbb{R}^P$, and i.i.d samples $X_1, \ldots X_n \sim X_\theta$, set $E_i = E_{L,\theta}(X_i)$, $W_i = W_{\theta}(X_i)$ and let the \emph{clipped gradient estimator} be defined as
    \begin{equation}\label{eq:clipped_grad_est}
        \begin{aligned}
        G_n(\theta,\alpha,\beta )&=\frac{2}{n} \sum_{k=1}^n \gamma_{\alpha,p,k}\left(E_kW_k\right)-\frac{2}{n^2-n} \sum_{k \neq j} \gamma_{\beta,2,k}\left(E_k\right) \gamma_{\beta,2,k}\left(W_j\right) \;,
        \end{aligned}
    \end{equation}
    where $\gamma_{\alpha,p,k}(\cdot)$ is the clipping function given by
    \begin{equation}\label{eq:clip_factor}
        \gamma_{\alpha,p,k}(v) \coloneqq v\cdot \min \left\{1, \frac{\alpha k^{\frac{1}{p}}}{\left|v\right|} \right\} \quad \text{for any } v \in \R^d.
    \end{equation}
    For $M,n\in \mathbb{N}$, $\theta_1\in \mathbb{R}^P$, $\alpha, \beta\in \mathbb{R}$, $\eta_1,\dots , \eta_{M-1}\in (0,\infty)$ define the \emph{PS-Clip-VMC parameter update rule}
    $$
        \theta_{m+1}:=\theta_m - \eta_m G_n(\theta,\alpha,\beta ),\quad m=1,\dots , M-1,
    $$
    where for each parameter update $\theta_m\mapsto \theta_{m+1}$, the clipped gradient estimator $G_n(\theta,\alpha,\beta )$ is evaluated on i.i.d. samples $E_1,\dots , E_n \sim E_{\theta_m}$ and $W_1,\dots , W_n \sim W_{\theta_m}$.
    Finally, we let $\Delta_1:= L(\theta_1) - L^\ast>0$ the \emph{initialization gap}.
\end{setting}
We now state the assumptions under which the convergence results in this section will be proved.

\begin{assumptions}\label{set: SGD assumptions}
Assume Settings \ref{set:optimization} and \ref{set:Hamiltonian}, and that there are $p\in(1,2]$ and $u, \sigma, C \in (0,\infty)$ such that for any $\theta \in \mathbb{R}^P$,
\begin{itemize}
\item[(i)] $\E \left[|E_\theta|^2\right]\leq \sigma^2$
\item[(ii)] $\E\left[\left|W_\theta\right|^2\right]\leq \sigma^2$
\item[(iii)] $\E\left[\left|E_\theta W_\theta\right|^p \right]\leq u^p$
\item[(iv)] $|\nabla_\theta L(\theta_1)-\nabla_\theta L(\theta_2)|\leq C|\theta_1-\theta_2|$ for any $\theta_1, \theta_2\in \mathbb{R}^P$.
\end{itemize}
\end{assumptions}
\begin{remark}
    Instead of Item (iv) in Assumptions \ref{set: SGD assumptions}, one could also assume
\begin{itemize}
\item[(v)] $\E\left[\left|\mathrm{H}_\theta \psi_\theta / \psi_\theta\right|^2\right] \leq \sigma^2$
\item[(vi)] $\E\left[\left|\nabla_\theta H \psi_\theta / \psi_\theta\right|^2\right] \leq \sigma^2$,
\end{itemize}
which together with (i)-(iii) imply (iv); see \cite{abrahamsen2024convergence}. In (v), $\mathrm{H}_\theta\psi_\theta$ is the Hessian of $\psi_\theta$.
%We also assume that $L$ is bounded from below by some $L^* \in \mathbb{R}$, and we denote the initialization gap by $\Delta_1 := L(\theta_1) - L^*$.
\end{remark}
\begin{remark}\label{rem: energy clipping}
    It is common practice in variational Monte Carlo methods for solving the Schr\"odinger equation to clip the local energies during training \cite{hermann2023abinitioReview,hermann2020deep,gao2022pesnet, gao2023samplingFree, vonGlehn2023psiformer, gerard2022goldStandard}. Specifically, for some fixed multiplier $\alpha \in (0,\infty)$ and samples $E_1,\dots , E_n\sim E_{\theta_t}$, $W_1,\dots , W_n\sim W_{\theta_t}$, the gradient estimator used for the parameter update $\theta_t \mapsto \theta_{t+1}$ is given by
\begin{equation*}
G(\theta_t,\alpha) = \frac{1}{n}\sum_{k=1}^n \left(\clip_{[\mu-\alpha\sigma,\mu+\alpha\sigma]}\left(E_k\right)-\frac{1}{n}\sum_{i=1}^n \clip_{[\mu-\alpha\sigma,\mu+\alpha\sigma]}\left(E_i\right)\right)W_k\;,
\end{equation*}
where $\mu = \frac{1}{n}\sum_{i=1}^nE_i$ and $\sigma = \frac{1}{n}\sum_{i=1}^n |E_i-\mu|$. Our analysis suggests that the empirical success of this clipping strategy is rooted in the heavy-tailed nature of the underlying optimization problem. Moreover, our results show that local-energy clipping is sufficient to guarantee convergence in expectation, but not with high probability; see Appendix \ref{sec: In expectation convergence of Energy Clipping}. PS-Clip-VMC addresses this issue by additionally clipping the per-sample logarithmic gradients of the wave function. In Section \ref{sec: Experiments}, we provide empirical evidence that this method is significantly more robust than the common practice of only clipping the local energy.
\end{remark}

\subsection{Convergence in Expectation}\label{sec: Convergence in Expectation}
In this section we prove that under Assumptions \ref{set: SGD assumptions} PS-Clip-VMC converges in expectation to a stationary point. The result is formulated in the following theorem.
\begin{theorem}\label{thm: convergence in exp of vartiational monte carlo}
      Assume Settings \ref{set:optimization} and Assumptions \ref{set: SGD assumptions}. Let $\alpha = u$ and $\beta\in (0,\infty]$. Then, there is a constant $K_{u,\sigma,\beta}>0$ that only depends on $u$, $\sigma$ and $\beta$, such that for any $\eta_m <1/C$ the iterates generated by PS-Clip-VMC satisfy
\begin{align*}
        \sum_{m=1}^M\frac{\eta_m\E(|\nabla_\theta L(\theta_m)|^2)}{\sum_{m=1}^M\eta_m} \leq \frac{2\Delta_1}{\sum_{m=1}^M\eta_m} + \frac{K_{u,\sigma,\beta}}{n^{\frac{2(p-1)}{p}}}\;.
    \end{align*}
    In particular, if $\eta_m = 1/(2C)$ is constant and $n \geq M^{\frac{p}{2(p-1)}}$ we obtain that 
    \begin{align*}
        \frac{1}{M}\sum_{m=1}^M\E(|\nabla_\theta L(\theta_m)|^2)\leq \frac{\Delta_1 C +K_{u,\sigma,\beta}}{M}\;.
    \end{align*}
\end{theorem}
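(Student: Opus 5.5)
The plan is the standard descent-lemma argument for $C$-smooth nonconvex objectives, with all heavy-tail effects packed into a bias/variance bound for the clipped estimator $G_n$. First I fix $m$ and condition on $\theta_m$. By Assumption \ref{set: SGD assumptions}(iv) and $\theta_{m+1}=\theta_m-\eta_m G_n(\theta_m,\alpha,\beta)$,
\[
L(\theta_{m+1})\le L(\theta_m)-\eta_m\langle\nabla L(\theta_m),G_n\rangle+\tfrac{C\eta_m^2}{2}|G_n|^2 .
\]
Write $G_n=\nabla L(\theta_m)+b_m+\xi_m$, where $b_m:=\E[G_n\mid\theta_m]-\nabla L(\theta_m)$ is the bias and $\xi_m$ is the centered noise. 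Taking conditional expectations and using $\eta_m<1/C$, Young's inequality gives
\[
\E[L(\theta_{m+1})\mid\theta_m]\le L(\theta_m)-\tfrac{\eta_m}{2}|\nabla L(\theta_m)|^2+\eta_m\bigl(c_1|b_m|^2+c_2\E[|\xi_m|^2\mid\theta_m]\bigr).
\]
Next I telescope over $m$, take total expectations, use $L\ge L^\ast$, and divide by $\sum\eta_m$. This yields the first claim, provided I can show $|b_m|^2+\E|\xi_m|^2\le K_{u,\sigma,\beta}\,n^{-2(p-1)/p}$ uniformly in $\theta$. The second claim then follows by inserting $\eta_m=1/(2C)$ and $n\ge M^{p/(2(p-1))}$.

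The core is therefore a moment lemma for $G_n$, which I prove term by term.

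\textbf{First term.} Set $A_k:=\gamma_{u,p,k}(E_kW_k)$. The bias uses $|v-\gamma(v)|\le |v|\1_{|v|>uk^{1/p}}\le |v|^p(uk^{1/p})^{1-p}$, which with (iii) gives $|\E A_k-\E E W|\le u\,k^{-(p-1)/p}$. Averaging over $k$ gives order $n^{-(p-1)/p}$, whose square is exactly $n^{-2(p-1)/p}$.

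The variance uses independence, so it equals $\tfrac{4}{n^2}\sum_k \mathrm{Var}(A_k)$. Here $\E|A_k|^2\le \E[|EW|^p]\,(uk^{1/p})^{2-p}\le u^2k^{(2-p)/p}$, and summing gives $\tfrac{1}{n^2}\cdot n^{2/p}=n^{-2(p-1)/p}$. This is where the choice $\alpha=u$ and the $k^{1/p}$ growth of the threshold are tuned, and I expect it to be the main quantitative step.

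\textbf{Second term (U-statistic).} Since $E_k$ and $W_j$ are independent for $k\ne j$, the expectation of the unclipped product is $\E E\,\E W$. Clipping at $\beta k^{1/2}$ with only second moments from (i),(ii) gives bias $\le \sigma^2/(\beta\sqrt k)$, which averages to $O(n^{-1/2})\le O(n^{-(p-1)/p})$ because $p\le 2$.

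For the variance, all clipped quantities have second moments bounded by $\sigma^2$. The standard variance bound for a degree-two U-statistic with kernel of finite second moment then gives $O(\sigma^4/n)$. The clipping level depends on the index $k$, so the kernel is not symmetric, but the Hoeffding-decomposition argument still applies since the summands are independent across disjoint index pairs. Again $1/n\le n^{-2(p-1)/p}$.

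\textbf{Obstacle.} I expect the main obstacle to be this U-statistic variance bound for the index-dependent, nonsymmetric kernel. I would handle it by directly expanding $\E\bigl|\sum_{k\ne j}(\cdot)\bigr|^2$ and counting the pairs of pairs that share an index: there are $O(n^3)$ such terms, each bounded by $\sigma^4$. Dividing by $(n^2-n)^2$ gives $O(1/n)$. Finally, the case $\beta=\infty$ (no clipping) is handled the same way with zero bias.
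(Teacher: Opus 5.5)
Your proposal is correct and is essentially the paper's proof. The descent lemma reduces everything to the mean-squared error of $G_n$; your conditional bias/noise split is equivalent to the paper's pathwise polarization identity, since $\E[|G_n-\nabla_\theta L(\theta_m)|^2\mid\theta_m]=|b_m|^2+\E[|\xi_m|^2\mid\theta_m]$. That error is then bounded as in Lemmas \ref{lemm: generic convergence in espectation} and \ref{lem: L2 conv of covariance estimator}: a bias/variance split with threshold $uk^{1/p}$ for the first term, and for the second an expansion of the square over overlapping index pairs plus an $O(n^{-1/2})$ clipping bias.

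The remaining differences are minor:
\begin{itemize}
\item The per-pair clipping bias in the second term carries an extra factor of $\sigma$; it is $O(\sigma^3/(\beta\sqrt{k}))$, not $\sigma^2/(\beta\sqrt{k})$.
\item Your count of all $O(n^3)$ overlapping pairs is more complete than the paper's explicit expansion, which omits mixed overlaps such as $(k,j),(j,j')$.
\item Inserting $\eta_m=1/(2C)$ into $2\Delta_1/\sum_m\eta_m$ gives $4C\Delta_1/M$. So the constant $\Delta_1 C$ in the stated corollary should be $4\Delta_1 C$, in your argument and in the paper's.
\end{itemize}
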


To prove Theorem \ref{thm: convergence in exp of vartiational monte carlo} we begin with the following proposition which provides a standard estimate of the average squared gradient norm of the iterates generated by variational Monte Carlo SGD.
\begin{proposition}\label{prop: gradient bound for iterates}
Assume Settings \ref{set:optimization} and Assumptions \ref{set: SGD assumptions}(iv). Then, for any $\eta_m< 1/C$, the iterates of variational Monte-Carlo SGD satisfy
\begin{align}
            \sum_{m=1}^M\eta_m|\nabla_\theta L(\theta_m)|^2 &\leq 2\Delta_1+\sum_{m=1}^M\eta_m|\nabla_\theta L(\theta_m)-G_n(\theta_m,\alpha,\beta)|^2.
        \end{align}
\end{proposition}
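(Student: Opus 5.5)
The plan is to use the standard descent lemma for $C$-smooth functions. By Assumptions \ref{set: SGD assumptions}(iv), $\nabla_\theta L$ is $C$-Lipschitz, so for each update $\theta_{m+1}=\theta_m-\eta_m G_m$ with $G_m:=G_n(\theta_m,\alpha,\beta)$ we have
\begin{equation*}
L(\theta_{m+1})\le L(\theta_m)-\eta_m\langle \nabla_\theta L(\theta_m),G_m\rangle+\frac{C\eta_m^2}{2}|G_m|^2 .
\end{equation*}
This bound holds pathwise, for every realization of the samples. No expectation is taken, so the statement is deterministic and holds for any estimator $G_m$.

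Next, write $g_m:=\nabla_\theta L(\theta_m)$ and $e_m:=g_m-G_m$. The polarization identity gives $-\langle g_m,G_m\rangle=\frac12\left(|e_m|^2-|g_m|^2-|G_m|^2\right)$. Substituting this yields
\begin{equation*}
L(\theta_{m+1})\le L(\theta_m)-\frac{\eta_m}{2}|g_m|^2+\frac{\eta_m}{2}|e_m|^2-\frac{\eta_m}{2}(1-C\eta_m)|G_m|^2 .
\end{equation*}
Since $\eta_m<1/C$, the last term is nonpositive and can be dropped. This is the only place where the step-size restriction enters, and it is why the polarization form is preferable to a Young-type split, which would lose constants.

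Finally, rearrange to $\eta_m|g_m|^2\le 2\bigl(L(\theta_m)-L(\theta_{m+1})\bigr)+\eta_m|e_m|^2$ and sum over $m=1,\dots,M$. The sum telescopes to $2\bigl(L(\theta_1)-L(\theta_{M+1})\bigr)\le 2(L(\theta_1)-L^\ast)=2\Delta_1$, using $L^\ast>-\infty$ from Setting \ref{set:optimization}, and this gives the claim.

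One small indexing point needs care. Setting \ref{set:optimization} only defines updates up to $\theta_M$, while the sum includes $m=M$. Performing one additional update defines $\theta_{M+1}$, and the bound $L(\theta_{M+1})\ge L^\ast$ still holds, so the argument is unaffected. There is no real obstacle here; the only point requiring attention is keeping the factor $2$ and the sign of the $|G_m|^2$ term correct.
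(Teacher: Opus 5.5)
Your proof is correct and matches the paper's argument step for step: the descent lemma from the $C$-Lipschitz gradient, the polarization identity for $-\langle \nabla_\theta L(\theta_m), G_m\rangle$, dropping the $-\frac{\eta_m}{2}(1-C\eta_m)|G_m|^2$ term using $\eta_m<1/C$, and telescoping against $L^\ast$. Your remark that one extra iterate $\theta_{M+1}$ is needed to cover the $m=M$ term is a valid fix of an off-by-one that the paper's proof passes over, since it states the one-step bound only for $m\le M-1$.
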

\begin{proof}
For simplicity we write $G_n(\theta_m)$ for $G_n(\theta_m,\alpha,\beta)$. Then, for every $m = 1, \ldots, M-1$ we have that
    \begin{align*}
        L(\theta_{m+1})-L(\theta_m) &\leq -\eta_m\langle\nabla_\theta L(\theta_m),G_n(\theta_m)\rangle+\frac{C\eta_m^2}{2}|G_n(\theta_m)|^2\\
        &=-\frac{\eta_m}{2}\left(|\nabla_\theta L(\theta_m)|^2+|G_n(\theta_m)|^2-|\nabla_\theta L(\theta_m)-G_n(\theta_m)|^2\right)+\frac{C\eta_m^2}{2}|G_n(\theta_m)|^2\\
        &\leq -\frac{\eta_m}{2}\left(|\nabla_\theta L(\theta_m)|^2-|\nabla_\theta L(\theta_m)-G_n(\theta_m)|^2\right) \;,
    \end{align*}
    where the equality in the second line is simply the polarization identity for inner products. Summing over $m$ and telescoping we obtain that
    \begin{align*}
            \sum_{m=1}^M\eta_m|\nabla_\theta L(\theta_m)|^2 &\leq 2\Delta_1+\sum_{m=1}^M\eta_m|\nabla_\theta L(\theta_m)-G_n(\theta_m)|^2.
        \end{align*}
   \end{proof}
By Proposition \ref{prop: gradient bound for iterates}, we see that, to prove convergence of the algorithm, we need to find an upper bound for the expected squared error $\E\!\left[\left|\nabla_\theta L(\theta_m) - G_n(\theta_m,\alpha,\beta)\right|^2\right]$. Note that, since we do not assume $|\nabla_\theta L(\theta)|$ to have bounded variance, this would not be possible if, instead of $G_n$, one were to use a non-robust gradient estimator such as the empirical mean.

We begin with the following lemma, which provides a generic in-expectation error bound for random variables with a finite $p$-th moment. In particular, when applied to the random variables $Y_k = E_kW_k$, it yields an estimate of the error generated by the first term in (\ref{eq:clipped_grad_est}).
\begin{lemma}\label{lemm: generic convergence in espectation}
        Let $Y_1, \ldots, Y_n$ be i.i.d. random variables such that $\E\left|Y_1\right|^{p}\leq u^p<\infty$ for some $p\in (1,2]$. Then, for any $\alpha\geq u$ we have that
    \begin{equation*}
    \E\left|\frac{1}{n}\sum_{k= 1}^n\gamma_{\alpha,p,k}(Y_k)-\E(Y_1)\right|^2 \leq 8\alpha^2n^{-\frac{2(p-1)}{p}}\;,
    \end{equation*}
    where $\gamma_{\alpha,p,k}(\cdot)$ is the clipping function defined in (\ref{eq:clip_factor}).
\end{lemma}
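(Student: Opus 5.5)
We use a bias–variance decomposition. Write $\mu:=\E(Y_1)$ and $\bar Y_k:=\gamma_{\alpha,p,k}(Y_k)$. Since the clipping levels $\alpha k^{1/p}$ depend on $k$, the $\bar Y_k$ are independent but not identically distributed. We decompose
\[
\frac1n\sum_{k=1}^n \bar Y_k-\mu=\frac1n\sum_{k=1}^n\bigl(\bar Y_k-\E \bar Y_k\bigr)+\frac1n\sum_{k=1}^n\bigl(\E\bar Y_k-\mu\bigr),
\]
and use $|a+b|^2\le 2|a|^2+2|b|^2$. By independence, the cross terms of the first (centered) sum vanish. Its second moment is therefore at most $n^{-2}\sum_k \E|\bar Y_k|^2$.

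\textbf{Variance term.} We have $|\bar Y_k|=\min\{|Y_k|,\alpha k^{1/p}\}$, so
\[
|\bar Y_k|^2\le |Y_k|^p(\alpha k^{1/p})^{2-p}.
\]
Taking expectations gives $\E|\bar Y_k|^2\le u^p\alpha^{2-p}k^{(2-p)/p}\le \alpha^2 k^{(2-p)/p}$, using $\alpha\ge u$. Summing, $\sum_{k\le n}k^{(2-p)/p}\le n^{(2-p)/p+1}=n^{2/p}$, and hence
\[
\E\Bigl|\tfrac1n\sum_{k=1}^n(\bar Y_k-\E\bar Y_k)\Bigr|^2\le \alpha^2 n^{2/p-2}=\alpha^2 n^{-2(p-1)/p}.
\]

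\textbf{Bias term.} Clipping only acts on the event $\{|Y_k|>\alpha k^{1/p}\}$, so
\[
|\E\bar Y_k-\mu|\le \E\bigl[|Y_k|\1_{\{|Y_k|>\alpha k^{1/p}\}}\bigr]\le \E|Y_k|^p(\alpha k^{1/p})^{1-p}\le \alpha k^{-(p-1)/p}.
\]
Averaging, $\frac1n\sum_{k\le n}k^{-(p-1)/p}\le \frac{1}{n}\,p\, n^{1/p}\le 2n^{-(p-1)/p}$, where we used $\int_0^n t^{-(p-1)/p}\,dt=p\,n^{1/p}$ and $p\le 2$. Squaring, the bias contribution is at most $4\alpha^2 n^{-2(p-1)/p}$.

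\textbf{Combining.} The two estimates give $2(\alpha^2+4\alpha^2)n^{-2(p-1)/p}=10\alpha^2 n^{-2(p-1)/p}$. This is slightly larger than the claimed constant $8$. To reach $8$ we can avoid the factor 2 in the splitting: since the centered sum has mean zero, the error is exactly
\[
\E\Bigl|\tfrac1n\sum_k(\bar Y_k-\E\bar Y_k)\Bigr|^2+\Bigl|\tfrac1n\sum_k(\E\bar Y_k-\mu)\Bigr|^2,
\]
so the cross term drops out and we get $\alpha^2+4\alpha^2=5\alpha^2\le 8\alpha^2$.

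The only delicate point is tracking these constants: the exact bias–variance identity and the bound on the harmonic-type sum. Everything else is standard truncation and Markov-type moment estimates.
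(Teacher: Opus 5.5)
Your proof is correct and follows the same route as the paper. Both arguments use the exact bias--variance split (the cross term vanishes because the centered sum has mean zero), the tail bound $|\E\bar Y_k-\mu|\le u^p\alpha^{1-p}k^{(1-p)/p}$, and the variance bound via $|\bar Y_k|^2\le |Y_k|^p(\alpha k^{1/p})^{2-p}$. Your estimate $\sum_{k\le n}k^{(2-p)/p}\le n^{2/p}$ is tighter than the paper's, which allows a factor $4$ there, so you obtain the constant $5$ in place of $8$.
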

\begin{proof}
    For simplicity, we set $\bary_k = \gamma_{\alpha,p,k}(Y_k)$ in the following.
    We set, $\mu = \E(Y_1)$ and $\mu_k = \E(\bary_k)$. Then,
    \begin{equation}\label{eq: expectation bias}
    \begin{aligned}
        |\mu_k-\mu| &= \left|\E\left(\left(\frac{\alpha k^{\frac{1}{p}}}{|Y_k|}-1\right)Y_k\1_{(|Y_k|>\alpha k^{1/p})}\right)\right|\leq \E\left(Y_k1_{(|Y_k|>\alpha k^{1/p})}\right)\\
        &\leq \E\left(|Y_k|^{p}\1_{(|Y_k|>\alpha k^{1/p})}\right)\alpha^{1-p} k^{\frac{1-p}{p}}\leq u^p\alpha^{1-p} k^{\frac{1-p}{p}} \;.
    \end{aligned}
    \end{equation}
    Hence, we obtain that
    \begin{equation}\label{eq: bias term}
    \begin{aligned}
        \E\left|\frac{1}{n}\sum_{k= 1}^n\left(\mu_k-\mu\right)\right|^2 &\leq \frac{1}{n^2}\left(\sum_{k= 1}^nu^p\alpha^{1-p} k^{\frac{1-p}{p}}\right)^2
         \leq \frac{u^{2p}\alpha^{2(1-p)}}{n^2}\left(2n^{\frac{1}{p}}\right)^2\\
         &= 4u^{2p}\alpha^{2(1-p)}n^{-\frac{2(p-1)}{p}} \leq 4\alpha^2n^{-\frac{2(p-1)}{p}}\;.
    \end{aligned}
    \end{equation}
    Further, we have that
    \begin{equation}\label{eq: lln term}
    \begin{aligned}
        \E\left|\frac{1}{n}\sum_{k= 1}^n\left(\bary_k-\mu_k\right)\right|^2 &=\frac{1}{n^2}\sum_{k= 1}^n\E\left|\bary_k-\mu_k\right|^2 \leq \frac{1}{n^2}\sum_{k= 1}^n\E\left|\bary_k\right|^2\\
        &\leq \frac{1}{n^2}\sum_{k= 1}^n\E\left|Y_k\right|^{p}\alpha^{2-p} k^\frac{2-p}{p} \leq \frac{4u^p}{n^2}\alpha^{{2-p}}n^\frac{2}{p}\\
        &=4u^{p}\alpha^{2-p}n^{-\frac{2(p-1)}{p}}\leq 4\alpha^2n^{-\frac{2(p-1)}{p}}\;,
    \end{aligned}
    \end{equation}
    where in the second inequality, we used the fact that by definition of $\gamma_k$ we have that $\left|\bary_k\right|^2 = \left|\bary_k\right|^p\left|\bary_k\right|^{2-p}\leq \left|\bary_k\right|^p\alpha^{2-p} k^\frac{2-p}{p}$.
    Combining (\ref{eq: bias term}) and (\ref{eq: lln term}), we obtain that
    \begin{align*}
        \E\left|\frac{1}{n}\sum_{k= 1}^n\bary_k-\mu\right|^2 &\leq \E\left|\frac{1}{n}\sum_{k= 1}^n\left(\bary_k-\mu_k\right)\right|^2 + \E\left|\frac{1}{n}\sum_{k= 1}^n\left(\mu_k-\mu\right)\right|^2\leq 8\alpha^2n^{-\frac{2(p-1)}{p}} \;,
    \end{align*}
    which concludes the proof.
    \end{proof}
We now prove a similar error bound for the second term in (\ref{eq:clipped_grad_est}). The proof follows a similar idea to the previous result. However, it requires some additional steps, since, due to the double sum, the random variables involved are not necessarily independent.
\begin{lemma}\label{lem: L2 conv of covariance estimator}
Assume Settings \ref{set:optimization} and Assumptions \ref{set: SGD assumptions}(i-iii). Then, for any $\beta>0$,
    \begin{align*}
        \E\left[\left|\frac{1}{n(n-1)} \sum_{k \neq j} \gamma_{\beta,2,k}(E_k)\gamma_{\beta,2,k}(W_j)- \E(E_1)\E(W_1)\right|^2\right] \leq \left(2+\frac{3}{n-1}+\left(1+\frac{2\sigma}{\beta\sqrt{n}}\right)^2\frac{16\sigma^2}{\beta^2}\right)\frac{\sigma^4}{n}\;.
    \end{align*}
    \end{lemma}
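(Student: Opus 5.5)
The plan is to split the error into a variance part and a bias part and bound each separately. For $k\neq j$ write $\bare_k:=\gamma_{\beta,2,k}(E_k)$ and $\barw_{j,k}:=\gamma_{\beta,2,k}(W_j)$. Since both clippings in a summand use the threshold $\beta\sqrt{k}$ of the first index, set $m^E_k:=\E[\bare_k]$ and $m^W_k:=\E[\barw_{j,k}]$; the latter does not depend on $j$. Let $T_n$ be the double average in the statement. Because $E_k$ and $W_j$ are independent for $k\neq j$, we have $\E[\bare_k\barw_{j,k}]=m^E_km^W_k$. This gives
\[
\E\left|T_n-\E(E_1)\E(W_1)\right|^2=\mathrm{Var}(T_n)+\left|\frac{1}{n}\sum_{k=1}^n m^E_km^W_k-\E(E_1)\E(W_1)\right|^2 .
\]
I expect the variance part to produce the terms $\bigl(2+\frac{3}{n-1}\bigr)\frac{\sigma^4}{n}$ and the bias part to produce the $\beta$-dependent term.

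\textbf{Bias.} I would copy the computation \eqref{eq: expectation bias} from Lemma \ref{lemm: generic convergence in espectation} with $p=2$. Using Assumptions \ref{set: SGD assumptions}(i),(ii) this gives
\[
|m^E_k-\E(E_1)|\le \E\bigl[|E_1|\1_{(|E_1|>\beta\sqrt{k})}\bigr]\le \frac{\sigma^2}{\beta\sqrt{k}},
\]
and the same bound for $|m^W_k-\E(W_1)|$. Moreover $|m^E_k|,|m^W_k|,|\E E_1|,|\E W_1|\le\sigma$. Writing
\[
m^E_km^W_k-\E(E_1)\E(W_1)=(m^E_k-\E E_1)\,\E W_1+m^E_k\,(m^W_k-\E W_1)
\]
bounds each summand by $\frac{2\sigma^3}{\beta\sqrt{k}}$. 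A sharper split keeps the product of the two deviations, which adds $\frac{\sigma^4}{\beta^2k}$. Averaging with $\sum_{k\le n}k^{-1/2}\le 2\sqrt{n}$ and $\sum_{k\le n}k^{-1}\lesssim$ the same bound then gives a bias of order $\frac{4\sigma^3}{\beta\sqrt{n}}\bigl(1+\frac{2\sigma}{\beta\sqrt n}\bigr)$. Squaring yields the term $\bigl(1+\frac{2\sigma}{\beta\sqrt{n}}\bigr)^2\frac{16\sigma^2}{\beta^2}\frac{\sigma^4}{n}$.

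\textbf{Variance.} Set $A_{k,j}:=\bare_k\barw_{j,k}-m^E_km^W_k$, so that
\[
\mathrm{Var}(T_n)=\frac{1}{n^2(n-1)^2}\sum_{k\neq j}\sum_{k'\neq j'}\mathrm{Cov}(A_{k,j},A_{k',j'}).
\]
The covariance vanishes when $\{k,j\}\cap\{k',j'\}=\emptyset$, by independence of the samples. The remaining ordered pairs of pairs fall into two groups.
\begin{itemize}
\item Pairs with both indices shared, i.e.\ $(k',j')\in\{(k,j),(j,k)\}$. There are $2n(n-1)$ of these. Each covariance is bounded by Cauchy--Schwarz and independence through $\E[\bare_k^2]\,\E[\barw_{j,k}^2]\le\sigma^4$. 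This contributes at most $\frac{2\sigma^4}{n(n-1)}$.
\item Pairs sharing exactly one index. There are at most $4n(n-1)(n-2)$ of these, and they must be sorted by which position is shared.
\end{itemize}
For the single-shared pairs I would treat the shared positions separately.
\begin{itemize}
\item $k=k'$, $j\neq j'$: here $\mathrm{Cov}=(m^W_k)^2\mathrm{Var}(\bare_k)\le\sigma^2\,\E[E_1^2]$.
\item $j=j'$, $k\neq k'$: the covariance is $m^E_km^E_{k'}\,\mathrm{Cov}(\barw_{j,k},\barw_{j,k'})$, which is at most $\sigma^4$.
\item $k=j'$ or $j=k'$: these mixed covariances involve $\E[\bare_k\barw_{k,k'}]$, i.e.\ clipped versions of $E_1$ and $W_1$ at the same sample. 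They are not bounded by variances directly, only by Cauchy--Schwarz through $\sigma^4$.
\end{itemize}

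\textbf{Main obstacle.} The hard step is to get the sharp leading constant $2$ rather than $4$ out of these single-shared terms, without giving up the $\frac{3}{n-1}$ correction. The crude count, four types each bounded by $\sigma^4$, yields $\frac{4\sigma^4}{n}$. My first attempt would be to rewrite
\[
T_n-\frac1n\sum_k m^E_km^W_k
\]
via a Hoeffding-type decomposition. Its linear part is
\[
\frac{1}{n}\sum_k\Bigl((\bare_k-m^E_k)\,\overline{m}^W+(\barw_{k,\cdot}-m^W)\,\overline{m}^E\Bigr),
\]
which has variance $\le 2\sigma^4/n$ after Cauchy--Schwarz on the two pieces. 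The remainder is a degenerate part of order $\frac{\sigma^4}{n(n-1)}$, which is absorbed into $\frac{3\sigma^4}{n(n-1)}$. The $k$-dependence of the threshold complicates this, since the projections are not identically distributed, so the bookkeeping must be redone by hand. If the constant comes out slightly worse, the structure of the bound is unaffected.
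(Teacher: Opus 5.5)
Your bias bound, your handling of the pairs with $\{k,j\}=\{k',j'\}$, and your count of four single-shared classes are all correct. Your simple split of the bias already gives $\frac{16\sigma^2}{\beta^2}\frac{\sigma^4}{n}$, which is inside the stated bound. The gap is the one you flag yourself: the leading constant $2$ is never established, and the reason you give for it does not work. The projection of $\sum_{k\neq j}A_{k,j}$ onto sample $s$ is $g_s=\sum_{k\neq s}h_{s,k}$, where $h_{s,k}=(\bare_s-m^E_s)\,m^W_s+m^E_k\,(\barw_{s,k}-m^W_k)$. Note that $m^W_s$, not an average, multiplies the first piece. Bounding the two pieces separately gives only $\|h_{s,k}\|_{L^2}\le\sqrt{V^E_s}\,|m^W_s|+|m^E_k|\sqrt{V^W_k}\le 2\sigma^2$, with $V^E_s=\mathrm{Var}(\gamma_{\beta,2,s}(E_1))$ and $V^W_k=\E|\gamma_{\beta,2,k}(W_1)-m^W_k|^2$. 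That yields a linear part of $4\sigma^4/n$, no better than the crude count. The constraints $V^E+(m^E)^2\le\sigma^2$ and $V^W+|m^W|^2\le\sigma^2$ cannot be paired, because the two pieces sit at different thresholds $\beta\sqrt s$ and $\beta\sqrt k$. Your closing remark that a worse constant is harmless concedes that the lemma as stated is not proved.

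The missing idea is a comparison between thresholds. For $t\le t'$ the clipping map satisfies $\gamma_t=\gamma_t\circ\gamma_{t'}$ and is $1$-Lipschitz, being the projection onto a ball. Hence $\mathrm{Var}(\gamma_t(X))\le\mathrm{Var}(\gamma_{t'}(X))$, using $\mathrm{Var}(Y)=\frac12\E|Y-Y'|^2$. For $s<k$ this gives $V^E_s\le V^E_k$, so
\[
\|h_{s,k}\|_{L^2}\le\sqrt{V^E_k+(m^E_k)^2}\,\sqrt{|m^W_s|^2+V^W_k}\le\sqrt2\,\sigma^2 .
\]
For $s>k$ it gives $V^W_k\le V^W_s$, so $\|h_{s,k}\|_{L^2}\le\sqrt{V^E_s+(m^E_k)^2}\,\sqrt{|m^W_s|^2+V^W_s}\le\sqrt2\,\sigma^2$. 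Thus $\E|g_s|^2\le 2(n-1)^2\sigma^4$. The degenerate remainder has second moment at most $2n(n-1)\sigma^4$, since only pairs with $\{k,j\}=\{k',j'\}$ correlate there, each at most $\E|A_{k,j}|^2\le\sigma^4$. Therefore $\mathrm{Var}(T_n)\le\bigl(2+\frac{2}{n-1}\bigr)\frac{\sigma^4}{n}$, and with your bias term the lemma follows.

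Do not look to the paper for this step. Its expansion of the variance lists the classes $k=k'$ and $j=j'$ but omits the crossed classes $k=j'$ and $j=k'$, which make up $2n(n-1)(n-2)$ terms and do not vanish. For example, $(k,j)=(a,b)$ and $(k',j')=(c,a)$ contribute $\E[\bare_c]\,\langle\E[\barw_b],\mathrm{Cov}(\bare_a,\barw_a)\rangle$, because $E_a$ and $W_a$ come from the same sample. The paper's count $3n(n-1)+2n(n-1)(n-2)$ is what produces its $2$. With the crossed classes restored, its term-by-term bounds give only $\bigl(4-\frac{1}{n-1}\bigr)\frac{\sigma^4}{n}$. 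The paper's proof also clips $W_j$ at $\beta\sqrt j$ instead of $\beta\sqrt k$ as in the statement; you correctly follow the statement.
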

\begin{proof}
    We set $\barz_k = E_k\gamma_{\beta,2,k}(E_k)$, $\barw_k = W_k\gamma_{\beta,2,k}(W_k)$ and $\mu_{k,j} = \E(\barz_k\barw_j)$. First, we note that since $E_k$ and $W_j$ are independent for any $k\neq j$ we have that
    \begin{equation}\label{eq: basic ineq}
    \begin{aligned}
    \E\left[\left|\sum_{k\neq j} \barz_k\barw_j -\E(E_1)\E(W_1)\right|^2\right] &\leq \E\!\left[\left| \sum_{k \ne j} \barz_k \barw_j - \mu_{k,j} \right|^2
\right]+ \left|\sum_{k\neq j}\E[E_kW_j]-\mu_{k,j}\right|^2\;.
    % &\leq\left(\frac{3}{n-1}+2\right)\frac{\sigma^4}{n}+\left(u+\frac{\sigma^2}{2\sqrt{\beta n}}\right)^2\frac{2\sigma^4}{n^3\beta}\\
    % &=\left(2+\frac{3}{n-1}+\frac{2}{n^2\beta}\left(u+\frac{\sigma^2}{2\sqrt{\beta n}}\right)^2\right)\frac{\sigma^4}{n} \;.
\end{aligned}
\end{equation}
We begin by bounding the first term on the right-hand side of (\ref{eq: basic ineq}).
\begin{equation}\label{eq: first term rhs}
\begin{aligned}
 \E\!\left[ 
\left| \sum_{k \ne j} \barz_k \barw_j - \mu_{kj} \right|^2
\right]
&= \sum_{k \ne j} 
\E[|\barz_k \barw_j- \mu_{kj}|^2] + \sum_{k \ne j} 
\E[(\barz_k \barw_j- \mu_{kj})(\barz_j \barw_k- \mu_{jk})]\\
&\quad+\sum_{k} \sum_{\substack{j_1 \ne j_2 \\ j_1,j_2 \ne k}}
\E\!\left[
(\barz_k \barw_{j_1}-\mu_{kj_1})
(\barz_k \barw_{j_2}-\mu_{kj_2})
\right] \\
&\quad +
\sum_{j} \sum_{\substack{k_1 \ne k_2 \\ k_1,k_2 \ne j}}
\E\!\left[
(\barz_{k_1}\barw_j-\mu_{k_1 j})
(\barz_{k_2}\barw_j-\mu_{k_2 j})
\right] \\
&\quad +
\sum_{\substack{k_1, k_2,j_1, j_2\\ \text{all different}}}
\E\!\left[
(\barz_{k_1}\barw_{j_1}-\mu_{k_1 j_1})
(\barz_{k_2}\barw_{j_2}-\mu_{k_2 j_2})
\right]\\
&=
\sum_{k \ne j}
\Big(
\E(\barz_k^2)\E(|\barw_j|^2)
- |\mu_{kj}|^2
\Big) + \sum_{k \ne j}
\E(|\barz_k \barw_{k}|)
\E(|\barz_j \barw_{j}|)
- \mu_{kj}\mu_{jk} \\
&\quad+ \sum_{k}
\sum_{\substack{j_1 \ne j_2 \\ j_1,j_2 \ne k}}
\Big(
\E(\barz_k^2)
\E(\barw_{j_1})
\E(\barw_{j_2})
- \mu_{kj_1}\mu_{kj_2}
\Big) \\
&\quad+ \sum_{j}
\sum_{\substack{k_1 \ne k_2 \\ k_1,k_2 \ne j}}
\Big(
\E(\barz_{k_1})
\E(\barz_{k_2})
\E(\barw_j^2)
- \mu_{k_1 j}\mu_{k_2 j}
\Big)\\
&\le
\sum_{k \ne j} \sigma^4
+ \sum_{k \ne j} 2\sigma^4
+ \sum_{k} \sum_{\substack{j_1 \ne j_2 \\ j_1,j_2 \ne k}}
\sigma^4
+ \sum_{j} \sum_{\substack{k_1 \ne k_2 \\ k_1,k_2 \ne j}}
\sigma^4\\
&=(3n(n-1)+2n(n-1)(n-2))\sigma^4\;.
\end{aligned}
\end{equation}
Where in the second equality we use that $\barz_k$ and $\barw_j$ are independent for $k\neq j$.
In the last inequality in (\ref{eq: first term rhs}) we bounded the terms of the form $\E(|\barz_k\barw_k|)$ by $\sigma^2$. A slightly tighter bound is given by $\min\{u,\sigma^2\}$, but we omit it here to avoid cluttering. 
For the second term on the right-hand side of (\ref{eq: basic ineq}), we have that
\begin{equation}\label{eq: second term rhs}
\begin{aligned}
   \left|\sum_{k\neq j}\E(E_kW_j)-\mu_{k,j}\right|
    &\leq\sum_{k\neq j} \E\left(|E_kW_j|\1_{\{|E_k|>\beta\sqrt{k}\;,\;|W_j|\leq\beta\sqrt{j}\}}\right)\\
    &\quad +\sum_{k\neq j} \E\left(|E_kW_j|\1_{\{|E_k|\leq\beta\sqrt{k},\;|W_j|>\beta\sqrt{j}\}}\right)\\
    &\quad +\sum_{k\neq j} \E\left(|E_kW_j|\1_{\{|E_k|>\beta\sqrt{k},\;|W_j|>\beta\sqrt{j}\}}\right)\\
    &\leq\sum_{k\neq j} \sigma\, \E\left(|E_k|\1_{\{|E_k|>\beta\sqrt{k}\}}\right) 
    +\sigma\,\E\left(|W_j|_{|W_j|>\beta\sqrt{j}\}}\right)\\
    & \quad+\sum_{k\neq j}\E\left(|E_kW_j|_{\{|E_k|>\beta\sqrt{k},\;|W_j|>\beta\sqrt{j}\}}\right)\\
    &\leq \sum_{k\neq j} \frac{\sigma\,\E\left(|E_k|^2\right)}{\beta\sqrt{k}} 
    +\frac{\sigma\,\E\left(|W_j|^2\right)}{\beta\sqrt{j}}
    +\frac{\E\left(|E_k|^2\right)\E\left(|W_j|^2\right)}{\beta^2\sqrt{kj}}\\
    &\leq \frac{4\sigma^3\sqrt{n}(n-1)}{{\beta}}
    +\frac{4\sigma^4\sqrt{n(n-1)}}{\beta^2} \leq \left(1+\frac{2\sigma}{\beta\sqrt{n}}\right)\frac{4\sigma^3\sqrt{n}(n-1)}{{\beta}}\;,
\end{aligned}
\end{equation}

where again we use that $E_k$ and $W_j$ are independent for $k\neq j$ and $\E(|E_k|), \E(|W_j|)<\sigma$.
Combining (\ref{eq: basic ineq}), (\ref{eq: first term rhs}), and (\ref{eq: second term rhs}), we obtain that
\begin{align*}
    \E\left|\frac{1}{n(n-1)}\sum_{k\neq j} \barz_k\barw_j -\E(E_1)\E(W_1)\right|^2 &\leq \frac{1}{n^2(n-1)^2}\E\!\left[\left| \sum_{k \ne j} \barz_k \barw_j - \mu_{k,j} \right|^2
\right]\\
&\quad + \frac{1}{n^2(n-1)^2}\left|\sum_{j\neq k}\E(E_jW_k)-\mu_{k,j}\right|^2 \\
&\leq \frac{(3+2(n-2))n(n-1)\sigma^4}{n^2(n-1)^2}+ \left(1+\frac{2\sigma}{{\beta}\sqrt{n}}\right)^2\frac{16\sigma^6n(n-1)^2}{\beta^2 n^2(n-1)^2}\\
&=\left(2+\frac{3}{n-1}+\left(1+\frac{2\sigma}{\beta\sqrt{n}}\right)^2\frac{16\sigma^2}{\beta^2}\right)\frac{\sigma^4}{n}\;.
\end{align*}
\end{proof}
We can now combine Lemma \ref{lemm: generic convergence in espectation} and Lemma \ref{lem: L2 conv of covariance estimator} to obtain an estimate for the expected squared error of the gradient estimator defined in (\ref{eq:clipped_grad_est}).
\begin{corollary}\label{cor: convergence in expectation of estimator}
    Assume Setting \ref{set:optimization} and Assumptions \ref{set: SGD assumptions}(i-iii). Then, for any $\beta>0$
    \begin{align*}
        \E\left(\left|G_n(\theta,u,\beta)-\nabla_\theta L(\theta)\right|^2\right)&\leq 64u^2n^{\frac{2(1-p)}{p}}+\left(2+\frac{3}{n-1}+\left(1+\frac{2\sigma}{\beta\sqrt{n}}\right)^2\frac{16\sigma^2}{\beta^2}\right)\frac{8\sigma^4}{n}\\
    &= O\left(n^{\frac{2(1-p)}{p}}\right)\;.
    \end{align*}
\end{corollary}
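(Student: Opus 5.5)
The plan is to split the error of $G_n(\theta,u,\beta)$ into its two components and bound each by the lemmas just proved. Since $\nabla_\theta L(\theta) = 2\mathbb{E}[Z_\theta] = 2\mathbb{E}[E_\theta W_\theta] - 2\mathbb{E}[E_\theta]\mathbb{E}[W_\theta]$, we can write
\begin{align*}
G_n(\theta,u,\beta)-\nabla_\theta L(\theta) &= 2\Bigl(\frac1n\sum_{k=1}^n\gamma_{u,p,k}(E_kW_k)-\mathbb{E}[E_1W_1]\Bigr)\\
&\quad - 2\Bigl(\frac{1}{n(n-1)}\sum_{k\neq j}\gamma_{\beta,2,k}(E_k)\gamma_{\beta,2,k}(W_j)-\mathbb{E}[E_1]\mathbb{E}[W_1]\Bigr).
\end{align*}
Here we use that $E_1$ and $W_1$ are functions of the same sample $X_1$, so $\mathbb{E}[E_\theta W_\theta]=\mathbb{E}[E_1W_1]$, and that $E_k$, $W_j$ are independent for $k\neq j$. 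Writing the two brackets as $A$ and $B$, the elementary inequality $|2A-2B|^2\le 8|A|^2+8|B|^2$ reduces the claim to bounding $\mathbb{E}|A|^2$ and $\mathbb{E}|B|^2$ separately.

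For $A$, apply Lemma \ref{lemm: generic convergence in espectation} with $Y_k=E_kW_k$. These are i.i.d. vector-valued random variables, and the lemma's argument uses only norms, so it carries over verbatim. Assumption \ref{set: SGD assumptions}(iii) gives $\mathbb{E}|Y_1|^p\le u^p$, and we take $\alpha=u$. This yields $\mathbb{E}|A|^2\le 8u^2 n^{-2(p-1)/p}$, so $8\,\mathbb{E}|A|^2\le 64u^2 n^{2(1-p)/p}$, which matches the first term.

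For $B$, Lemma \ref{lem: L2 conv of covariance estimator}, using Assumptions (i)--(iii), gives directly
\[
\mathbb{E}|B|^2\le \Bigl(2+\tfrac{3}{n-1}+\bigl(1+\tfrac{2\sigma}{\beta\sqrt n}\bigr)^2\tfrac{16\sigma^2}{\beta^2}\Bigr)\frac{\sigma^4}{n}.
\]
Multiplying by $8$ gives the second term.

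It remains to check the $O$-statement. Since $p\in(1,2]$, we have $\frac{2(p-1)}{p}\le 1$, so $n^{-1}\le n^{2(1-p)/p}$. The bracket in the second term is bounded uniformly in $n\ge 2$, so the whole bound is $O(n^{2(1-p)/p})$.

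There is no real obstacle. The only points needing care are the identification of the centering constants, namely that $\mathbb{E}[E_\theta W_\theta]$ and $\mathbb{E}[E_\theta]\mathbb{E}[W_\theta]$ are exactly the targets of the two lemmas, and the constant bookkeeping coming from the factor $2$ together with $(a+b)^2\le 2a^2+2b^2$.
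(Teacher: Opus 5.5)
Your proposal is correct and follows essentially the same route as the paper: you split the error into the clipped-mean term, handled by Lemma \ref{lemm: generic convergence in espectation} with $Y_k=E_kW_k$ and $\alpha=u$, and the clipped cross term, handled by Lemma \ref{lem: L2 conv of covariance estimator}, then combine them via $|2A-2B|^2\le 8|A|^2+8|B|^2$, which gives exactly the constants $64u^2$ and $8\sigma^4/n$. You also verify explicitly that $n^{-1}\le n^{2(1-p)/p}$ for $p\le 2$ and that the bracket is bounded uniformly in $n\ge 2$, which justifies the $O$-statement that the paper leaves implicit.
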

\begin{proof}
    Setting $Y_k = E_kW_k$ we have by Lemma \ref{lemm: generic convergence in espectation} and Lemma \ref{lem: L2 conv of covariance estimator} that
    \begin{align*}
        \E\left(\left|G_n(\theta,u,\beta)-\nabla_\theta L(\theta)\right|^2\right) &\leq 2 \E\left|\frac{2}{n}\sum_{k= 1}^n\left(\gamma_{\sigma,p,k}(Y_k)-\E(Y_1)\right)\right|^2\\
        &\quad +2\E\left|\frac{2}{n(n-1)} \sum_{k \neq j} \left(\gamma_{\beta,2,k}(E_k)\gamma_{\beta,2,k}(W_j)- \E(E_1)\E(W_1)\right)\right|^2\\
        &\leq 64u^2n^{\frac{2(1-p)}{p}}+\left(2+\frac{3}{n-1}+\left(1+\frac{2\sigma}{\sqrt{\beta}n}\right)^2\frac{16\sigma^2}{\beta^2}\right)\frac{8\sigma^4}{n} \;.
    \end{align*}
\end{proof}
\begin{remark}
    Note that if no clipping is applied to the second term in (\ref{eq:clipped_grad_est}), i.e. if ``$\beta = \infty$", then the error bound in Corollary \ref{cor: convergence in expectation of estimator} becomes
    \begin{align*}
        \E\left(\left|G_n(\theta,\alpha,\beta)-\nabla_\theta L(\theta)\right|^2\right)&\leq 64u^2n^{-\frac{2(1-p)}{p}}+\left(2+\frac{3}{n-1}\right)\frac{8\sigma^4}{n}\;.
    \end{align*}
    In particular, convergence in expectation can be established, even without clipping the second term in (\ref{eq:clipped_grad_est}). However, this is no longer the case when considering convergence with high-probability.
\end{remark}

We can now prove Theorem \ref{thm: convergence in exp of vartiational monte carlo}

\begin{proof}[Proof of Theorem \ref{thm: convergence in exp of vartiational monte carlo}]
By Proposition \ref{prop: gradient bound for iterates}, we know that the iterates generated by the PS-Clip-VMC satisfy
    \begin{align}\label{eq: telescoped sum}
            \sum_{m=1}^M\eta_m|\nabla_\theta L(\theta_m)|^2 &\leq 2\Delta_1+\sum_{m=1}^M\eta_m|\nabla_\theta L(\theta_m)-G_n(\theta_m,\alpha,\beta)|^2.
        \end{align}
        Further, by Corollary \ref{cor: convergence in expectation of estimator} there is a constant $K_{u,\sigma,\beta}>0$, which only depends on $u$, $\sigma$ and $\beta$, such that $\E(|\nabla_\theta L(\theta)-G_n(\theta)|^2\leq K_{u,\sigma,\beta} n^{\frac{2(1-p)}{p}})$, for any $\theta \in \R^d$. Hence, for any $m=1,\ldots,M$ we have that
        \begin{equation}\label{eq: total expectation upper bound}
            \E\left(|\nabla_\theta L(\theta_m)-G_n(\theta_m)|^2\right) = \E\left(\E_{\theta_m}\left(|\nabla_\theta L(\theta_m)-G_n(\theta_m)|^2\right)\right) \leq K_{u,\sigma,\beta}\, n^{\frac{2(1-p)}{p}} \;.
        \end{equation}
         Plugging (\ref{eq: total expectation upper bound}) into (\ref{eq: telescoped sum}), we obtain that
    \begin{align*}
        \sum_{m=1}^M\eta_m\E(|\nabla_\theta L(\theta_m)|^2) \leq 2\Delta_1 + K_{u,\sigma,\beta}\,{n^{\frac{2(1-p)}{p}}} \sum_{m=1}^M\eta_m\;,
    \end{align*}
    and dividing both sides by $\sum_{m=1}^M\eta_m$ concludes the proof.

\end{proof}
\subsection{High Probability Convergence}
Due to the high computational cost of each training run, it is generally desirable to establish \textit{high-probability} convergence guarantees for an optimization algorithm in the context of VMC. That is, one seeks to prove that, with probability at least $1-\delta$, the considered convergence measure depends at most polylogarithmically on $\log(1/\delta)$.

The following theorem provides a high-probability convergence guarantee for PS-Clip-VMC under Assumptions \ref{set: SGD assumptions}.

\begin{theorem}\label{thm: high prob conv of variational monte carlo}
    Assume Settings \ref{set:optimization} and Assumptions \ref{set: SGD assumptions}. Let $\delta \in (0,1)$ and set $\alpha = u/(\log(4/\delta)+1/4)^{1/p}$ and $\beta = \sigma/(\log(4/\delta)+1/4)^{1/2}$. Then, for any $\eta_m <1/C$ and $n \geq \log((4M)/\delta)+1/4$, we have that with probability at least $1-\delta$ the iterates generated by PS-Clip-VMC satisfy
\begin{align*}
        \frac{1}{\sum_{m=1}^M\eta_m}\sum_{m=1}^M\eta_m|\nabla_\theta L(\theta_m)|^2 \leq \frac{2\Delta_1}{\sum_{m=1}^M\eta_m} + 4(7u+94\sigma^2)^2\left(\frac{\log(4M/\delta)+1/4}{n}\right)^{\frac{2(p-1)}{p}}\;.
    \end{align*}
     In particular, if $\eta_m = 1/(2C)$ is constant and $n \geq M^{\frac{p}{2(p-1)}}$ we obtain that 
    \begin{align*}
        \frac{1}{M}\sum_{m=1}^M|\nabla_\theta L(\theta_m)|^2 \leq \frac{\Delta_1 C + 4(7u+94\sigma^2)^2\left({\log(4M/\delta)+1/4}\right)^{\frac{2(p-1)}{p}}}{M}\;.
    \end{align*}
\end{theorem}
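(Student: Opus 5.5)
The proof combines Proposition \ref{prop: gradient bound for iterates} with a high-probability bound on the estimator error $|\nabla_\theta L(\theta_m)-G_n(\theta_m,\alpha,\beta)|$ at each step, followed by a union bound over $m=1,\dots,M$. Proposition \ref{prop: gradient bound for iterates} holds pathwise, so it suffices to show the following. For each fixed $\theta$, with probability at least $1-\delta/M$ over the $n$ fresh samples,
\begin{equation*}
|\nabla_\theta L(\theta)-G_n(\theta,\alpha,\beta)|\le 2(7u+94\sigma^2)\left(\frac{\log(4M/\delta)+1/4}{n}\right)^{\frac{p-1}{p}}.
\end{equation*}
Applying this conditionally on $\theta_m$ is legitimate because the samples at step $m$ are independent of the past. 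Summing against $\eta_m$ and dividing by $\sum_m\eta_m$ then gives the first claim. The constant-step-size corollary follows by substituting $\eta_m=1/(2C)$ and $n\ge M^{p/(2(p-1))}$, exactly as in the proof of Theorem \ref{thm: convergence in exp of vartiational monte carlo}.

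Following Corollary \ref{cor: convergence in expectation of estimator}, I split the error into two parts: the clipped mean of $Y_k=E_kW_k$, and the clipped U-statistic. Write $\ell:=\log(4M/\delta)+1/4$; the thresholds in the statement use $\log(4/\delta)$, but after the union bound $\delta$ is replaced by $\delta/M$. Each part will be given failure probability at most $\delta/(2M)$.

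For the first part, $\bary_k=\gamma_{\alpha,p,k}(Y_k)$ satisfies $|\bary_k|\le\alpha k^{1/p}\le \alpha n^{1/p}$. The bias is controlled deterministically as in \eqref{eq: bias term}, giving $|\frac1n\sum(\mu_k-\mu)|\le 2u^p\alpha^{1-p}n^{-(p-1)/p}$. The centered sum $\sum_k(\bary_k-\mu_k)$ is a sum of independent bounded vectors. Its variance proxy is $\sum_k\E|\bary_k|^2\le 4u^p\alpha^{2-p}n^{2/p}$, as in \eqref{eq: lln term}, and its increments are bounded by $2\alpha n^{1/p}$. A vector Bernstein (or Freedman/Pinelis-type) inequality then gives a deviation of order $\sqrt{u^p\alpha^{2-p}n^{2/p}\ell}+\alpha n^{1/p}\ell$ with probability $1-\delta/(2M)$. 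With $\alpha=u\ell^{-1/p}$, the bias becomes $u\,\ell^{(p-1)/p}n^{-(p-1)/p}$, and the variance and range terms, after dividing by $n$, become $u(\ell/n)^{(p-1)/p}$ times constants. This is where the choice of $\alpha$ balances the three contributions, and where the condition $n\ge\ell$ is used to absorb lower-order terms.

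The second part is the main obstacle, because the summands $\gamma_{\beta,2,k}(E_k)\gamma_{\beta,2,k}(W_j)$ are dependent. My first attempt is to decouple using the near-product structure. Set $\barz_k=\gamma_{\beta,2,k}(E_k)$ and $\barw_j=\gamma_{\beta,2,k}(W_j)$; these are clipped at the index $k$, so I would treat the clipping level as $\beta\sqrt n$ uniformly. The U-statistic then equals $(\sum_k\barz_k)(\sum_j\barw_j)-\sum_k\barz_k\barw_k$, up to index-dependent clipping corrections that I bound crudely by the bias estimates of \eqref{eq: second term rhs}. Each of $\frac1n\sum\barz_k$ and $\frac1n\sum\barw_j$ concentrates by the same Bernstein argument with $p=2$, deviating by $O(\sigma\sqrt{\ell/n})$ once $\beta=\sigma\ell^{-1/2}$. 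The product error is then bounded via $ab-\bar a\bar b=(a-\bar a)b+\bar a(b-\bar b)$, using $|\E E|,|\E W|\le\sigma$. The diagonal term is at most $\beta^2 n/(n(n-1))\lesssim\sigma^2/n$ deterministically. All of these are $O(\sigma^2(\ell/n)^{1/2})\le O(\sigma^2(\ell/n)^{(p-1)/p})$, since $\ell/n\le1$ and $(p-1)/p\le1/2$. This produces the $\sigma^2$ contribution in the constant. If the index-dependent clipping spoils the factorization, the fallback is the Hoeffding decomposition of the U-statistic into a linear part plus a degenerate part, bounding the degenerate part with the variance computation of \eqref{eq: first term rhs} and a decoupling inequality.
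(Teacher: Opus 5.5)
Your architecture is the paper's. It consists of the pathwise inequality of Proposition \ref{prop: gradient bound for iterates}, a per-step high-probability bound on $|G_n(\theta_m,\alpha,\beta)-\nabla_\theta L(\theta_m)|$, and a union bound over $m$. Your Bernstein treatment of the clipped mean of $Y_k$, and your split of the U-statistic into full product minus diagonal, are also what the paper does (terms (II) and (III) in the proof of Lemma \ref{lemm: high prob bound for estimator}). Putting $\ell=\log(4M/\delta)+1/4$ into the thresholds is the right call, since that is what the union bound needs.

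\textbf{The diagonal term.} This step fails. You claim $\frac{1}{n(n-1)}|\sum_k\barz_k\barw_k|\le\beta^2 n/(n(n-1))$ deterministically. But each summand can have size $\beta^2 k$ (or $\beta^2 n$ at your uniform level), so the best deterministic bound is $\frac{1}{n(n-1)}\sum_k\beta^2 n=\beta^2\frac{n}{n-1}\approx\sigma^2/\ell$. You dropped a factor $n$. No deterministic argument can do better, since every sample may be clipped. And $\sigma^2/\ell$ does not decay in $n$, so it destroys the $(\ell/n)^{(p-1)/p}$ rate and both conclusions of the theorem.

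The diagonal has to be concentrated, as in the paper's term (III):
\begin{itemize}
\item By Cauchy--Schwarz, $|\E\barz_k\barw_k|\le\E|E_kW_k|\le\sigma^2$.
\item The centered summands have range at most $2\beta^2 n=2\sigma^2 n/\ell$.
\item Their second moment is at most $\beta^2 n\,\E|E_k|^2\le\sigma^4 n/\ell$.
\end{itemize}
Lemma \ref{lem: vector berstein inequality} with $\log(1/\rho)=\ell$ then gives $\frac1n|\sum_k(\barz_k\barw_k-\E\barz_k\barw_k)|\le(2+\sqrt8)\sigma^2$ with probability at least $1-\delta/(4M)$. On that event the diagonal is $O(\sigma^2/n)$.

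\textbf{The index-dependent clipping.} Your handling here is also incomplete. In the literal estimator, $W_j$ is clipped at level $\beta\sqrt k$, i.e.\ at the index of the other factor. The paper's proof simply works with $\barw_j=\gamma_{\beta,2,j}(W_j)$. Under that reading, $\sum_{k\ne j}\barz_k\barw_j=(\sum_k\barz_k)(\sum_j\barw_j)-\sum_k\barz_k\barw_k$ holds exactly and nothing needs correcting.

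If you keep the literal estimator, you propose bounding the correction terms ``crudely by the bias estimates of \eqref{eq: second term rhs}''. That conflates an expectation bound with a high-probability one. The corrections are random sums of the form $\frac{1}{n(n-1)}\sum_{k\ne j}\barz_k\bigl(\gamma_{\beta,2,k}(W_j)-\gamma_{\beta,2,j}(W_j)\bigr)$. Turning an expectation bound like \eqref{eq: second term rhs} into a tail bound via Markov costs a factor $1/\delta$ instead of $\log(1/\delta)$. These terms need their own concentration argument, and your Hoeffding/decoupling fallback is too unspecified to supply it. The simplest fix is to adopt the own-index reading, as the paper implicitly does.
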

\begin{remark}
        Note that, in general, $n \gg \log((4M)/\delta)$, so that the assumption $n \geq \log((4M)/\delta)+1/4$ in Theorem \ref{thm: high prob conv of variational monte carlo} is not a significant restriction.
    \end{remark}
Similarly as in the previous section, we begin by providing a generic high probability upper bound for random variables with a finite $p$-th moment.
\begin{lemma}\label{lemm: general high prob bound}
    Let $Y_1, \ldots, Y_n$ be i.i.d. random variables such that $\E\left|Y_1\right|^{p}\leq u^p<\infty$ for some $p\in (1,2]$, and for any $\delta\in(0,1)$ and $\tilde{\alpha}\geq \sigma$ set $\alpha = \frac{\tilde{\alpha}}{\log(1/\delta)^{1/p}}$. Then, with probability at least $1-\delta e^{1/4}$ we have that
    \begin{equation*}
        \left|\frac{1}{n}\sum_{k=1}^n \gamma_{{\alpha},p,k}(Y_k) - \E(Y_1)\right|\leq 7\tilde{\alpha}\left(\frac{\log(1/\delta)}{n}\right)^{\frac{p-1}{p}}\;.
    \end{equation*}
\end{lemma}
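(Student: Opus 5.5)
The plan is to follow the classical bias--variance argument for per-sample clipped means with increasing thresholds, upgraded to a concentration statement through a Bernstein/Freedman-type exponential moment bound. Write $\bary_k=\gamma_{\alpha,p,k}(Y_k)$, $\mu=\E(Y_1)$, $\mu_k=\E(\bary_k)$, and split
\[
\frac1n\sum_{k=1}^n\bary_k-\mu=\frac1n\sum_{k=1}^n(\bary_k-\mu_k)+\frac1n\sum_{k=1}^n(\mu_k-\mu).
\]
The second (deterministic bias) term is handled exactly as in \eqref{eq: expectation bias} and \eqref{eq: bias term} of Lemma \ref{lemm: generic convergence in espectation}. Using $|\mu_k-\mu|\le u^p\alpha^{1-p}k^{(1-p)/p}$ and $\sum_{k\le n}k^{(1-p)/p}\le p\,n^{1/p}\le 2n^{1/p}$, we get a bias of at most $2u^p\alpha^{1-p}n^{(1-p)/p}$. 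Substituting $\alpha=\tilde\alpha\log(1/\delta)^{-1/p}$ and $u\le\tilde\alpha$ (I read the hypothesis $\tilde\alpha\ge\sigma$ as $\tilde\alpha\ge u$) gives $2\tilde\alpha\,(\log(1/\delta)/n)^{(p-1)/p}$.

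For the fluctuation term, the centered variables $D_k=\bary_k-\mu_k$ are independent, mean zero, bounded by $|D_k|\le 2\alpha k^{1/p}\le 2\alpha n^{1/p}$, and have variance
\[
\E D_k^2\le \E|\bary_k|^2\le u^p\alpha^{2-p}k^{(2-p)/p},
\qquad\text{so}\qquad
\sum_{k\le n}\E D_k^2\le 2u^p\alpha^{2-p}n^{2/p}.
\]
I will apply a Chernoff argument with parameter $\lambda=1/(2\alpha n^{1/p})$, so that $\lambda|D_k|\le1$. The elementary inequality $e^x\le 1+x+x^2$ for $x\le1$ then gives
\[
\E e^{\lambda\sum_k D_k}\le \exp\Bigl(\lambda^2\sum_k\E D_k^2\Bigr)\le \exp\Bigl(\tfrac{u^p\alpha^{-p}}{2}\Bigr)\le \exp\bigl(\tfrac12\log(1/\delta)\bigr).
\]
Markov's inequality then yields, with probability at least $1-\delta e^{c}$ for a small absolute constant $c$, the bound $\sum_k D_k\le \lambda^{-1}(\log(1/\delta)+c')$. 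I will tune $\lambda$ by a constant factor so that the leftover constant is exactly the $e^{1/4}$ in the statement; for instance, choosing $\lambda$ so that the variance exponent equals $\tfrac14\log(1/\delta)\cdot$const, or absorbing a $+1/4$ as done in Theorem \ref{thm: high prob conv of variational monte carlo}. Since $\lambda^{-1}=2\alpha n^{1/p}=2\tilde\alpha n^{1/p}\log(1/\delta)^{-1/p}$, dividing by $n$ gives a fluctuation of order $\tilde\alpha\,\log(1/\delta)^{(p-1)/p}n^{(1-p)/p}$, which has exactly the claimed rate.

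For vector-valued $Y_k$, I will control $|\sum_k D_k|$ rather than a single coordinate, either through a vector Bernstein inequality (Pinelis) or by bounding $\E\cosh(\lambda|\sum_k D_k|)$ directly. This is the step I expect to be the main obstacle: keeping the dimension-free constant $7$ and the precise failure probability $\delta e^{1/4}$ requires a careful choice of $\lambda$ and of the exponential inequality, whereas the rate itself follows routinely. Adding the bias bound ($2\tilde\alpha$) and the fluctuation bound (about $4$--$5\,\tilde\alpha$ after tuning) gives the total $7\tilde\alpha(\log(1/\delta)/n)^{(p-1)/p}$. If the scalar two-sided version is used instead, the two tails contribute a factor of $2$ that is absorbed into the constants.
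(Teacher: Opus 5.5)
Your decomposition, the bias bound via \eqref{eq: expectation bias}, the range bound $|D_k|\le 2\alpha n^{1/p}$, the variance bound, and your reading of the hypothesis as $\tilde\alpha\ge u$ all match the paper's proof, which uses exactly $u\le\tilde\alpha$. The gap is the concentration step. It does not produce the stated constant $7$ and failure probability $\delta e^{1/4}$, and the mechanism you suggest for obtaining them cannot work.

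With $\lambda=1/(2\alpha n^{1/p})$, your moment generating function bound is $\exp(\tfrac12\log(1/\delta))$. The variance contribution is therefore proportional to $\log(1/\delta)$, because $\alpha$ is tied to $\delta$. For scalar $Y_k$, Markov then gives $\sum_k D_k\le \tfrac32\lambda^{-1}\log(1/\delta)$ with probability $1-\delta$, i.e.\ a one-sided fluctuation bound $3\tilde\alpha(\log(1/\delta)/n)^{(p-1)/p}$.

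It does not give $\lambda^{-1}(\log(1/\delta)+c')$ with probability $1-\delta e^{c}$: that threshold only yields failure probability $\delta^{1/2}e^{-c'}$. No rescaling of $\lambda$ turns the variance term into a constant prefactor such as $e^{1/4}$.

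The two-sided (or Pinelis-type) version costs a factor $2>e^{1/4}$ in the failure probability. This is not absorbed into the constant for free, since the clipping level is fixed by $\delta$. If each tail is to fail with probability $e^{1/4}\delta/2$, your estimates give the fluctuation bound $2\tilde\alpha\bigl(\tfrac32+\tfrac{\log 2-1/4}{\log(1/\delta)}\bigr)(\log(1/\delta)/n)^{(p-1)/p}$. Near $\log(1/\delta)=\tfrac14$, the smallest value for which the statement is non-vacuous, this is about $6.5$ times $\tilde\alpha(\log(1/\delta)/n)^{(p-1)/p}$. Adding the bias, you end at roughly $8.5\tilde\alpha$ rather than $7\tilde\alpha$ when $p=2$ and $u=\tilde\alpha$.

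The missing ingredient is the vector Bernstein inequality, Lemma~\ref{lem: vector berstein inequality}. The paper applies it directly to $D_k=\bar Y_k-\E\bar Y_k$ with $c=2\alpha n^{1/p}=2\tilde\alpha(n/\log(1/\delta))^{1/p}$ and per-summand variance bound $\sigma^2=u^p\alpha^{2-p}n^{(2-p)/p}$.

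This lemma controls the Euclidean norm in any dimension, so no union bound over tails or coordinates is needed. The $+\tfrac14$ in its exponent is exactly where $\delta e^{1/4}$ comes from. Its deviation $\tfrac{c\log(1/\delta)}{n}+\sqrt{8\sigma^2\log(1/\delta)/n}$ evaluates to $\tilde\alpha\bigl(2+\sqrt8\,(u/\tilde\alpha)^{p/2}\bigr)(\log(1/\delta)/n)^{(p-1)/p}<5\tilde\alpha(\log(1/\delta)/n)^{(p-1)/p}$. Adding your bias bound $2\tilde\alpha(\log(1/\delta)/n)^{(p-1)/p}$ gives the constant $7$. With that substitution your argument becomes the paper's.
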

\begin{proof}
For simplicity we set $\bary_k = \gamma_{\tilde{\alpha},p,k}(Y_k)$. The proof of this result follows the same idea as the one of \cite[Lemma~1]{bandits}.
First, note that for every $k=1, \ldots,n$ we have that 
\begin{align*}
    &\mathbb{E}\left(\bary_k -\E(\bary_k )\right) = 0 \;,\\
&|\bary_k -\E(\bary_k )| \leq 2 \tilde{\alpha}\left(\frac{n}{\log(1/\delta)} \right)^{\frac{1}{p}}
\quad \text{and} \\ 
&\mathbb{E}\big[|\bary_k -\E(\bary_k )|^2\big] \leq u^{p}\tilde{\alpha}^{2-p}\left(\frac{n}{\log(1/\delta)}\right)^\frac{2-p}{p}\;.
\end{align*}
Hence, using the second inequality in Lemma~\ref{lem: vector berstein inequality} we obtain that with probability at least $1-\delta e^{1/4}$
\begin{equation}\label{eq: generic high prob bound in proof}
    \begin{aligned}
\left|\frac{1}{n}\sum_{k=1}^n \bary_k - \E(\bary_k)\right|
&\leq
2 \tilde{\alpha}\left(\frac{n}{\log(1/\delta)} \right)^{\frac{1}{p}}\frac{ \log(1/\delta)}{n}+\sqrt{8u^{p}\tilde{\alpha}^{2-p}\left(\frac{n}{\log(1/\delta)}\right)^\frac{2-p}{p}\frac{\log(1/\delta)}{n}}\\
&=\tilde{\alpha}\left(2+\sqrt{8}(u\tilde{\alpha}^{-1})^{p/2}\right)\left(\frac{\log(1/\delta)}{n}\right)^{\frac{p-1}{p}}
<5\tilde{\alpha}\left(\frac{\log(1/\delta)}{n}\right)^{\frac{p-1}{p}}\;.
\end{aligned}
\end{equation}
Combining this, with (\ref{eq: expectation bias}) we obtain that with probability at least $1-\delta e^{1/4}$
\begin{align*}
\left|\E(Y_1) 
- \frac{1}{n}\sum_{k=1}^n \bary_k\right|
&\leq
\left|\frac{1}{n}\sum_{k=1}^n 
\Big(\E(Y_k) - \E(\bary_k)\Big)\right|
+\left|
\frac{1}{n}\sum_{k=1}^n 
\Big(\E(\bary_k)
- Y_k \gamma_k \Big)\right|
\\
&\leq
\frac{1}{n}\sum_{k=1}^n 
u^p\tilde{\alpha}^{1-p}
\log(1/\delta)^{\frac{p-1}{p}} k^{\frac{1-p}{p}}
+5\tilde{\alpha}\left(\frac{\log(1/\delta)}{n}\right)^{\frac{p-1}{p}}
\\
&\leq
\frac{2}{n} 
u^p\tilde{\alpha}^{1-p}\log(1/\delta)^{\frac{p-1}{p}} n^{\frac{1}{p}}
+5\tilde{\alpha}\left(\frac{\log(1/\delta)}{n}\right)^{\frac{p-1}{p}}
\\
&\leq
2u^p\tilde{\alpha}^{1-p}\log(1/\delta)^{\frac{p-1}{p}} n^{\frac{1-p}{p}}
+5\tilde{\alpha}\left(\frac{\log(1/\delta)}{n}\right)^{\frac{p-1}{p}}
\\
&\leq 7\tilde{\alpha}\left(\frac{\log(1/\delta)}{n}\right)^{\frac{p-1}{p}} \;,
\end{align*}
where in the second inequality we used (\ref{eq: expectation bias}) to estimate the first term and (\ref{eq: generic high prob bound in proof}) for
the second one.
\end{proof}

We can now provide a high probability error bound for the gradient estimator defined in Setting \ref{set:optimization}.

\begin{lemma}\label{lemm: high prob bound for estimator}
    Assume Settings \ref{set:optimization} and Assumptions \ref{set: SGD assumptions}(i-iii). For any $\delta \in (0,1)$, set $\alpha = u/\log(4/\delta)^{1/p}$ and $\beta = \sigma/\log(4/\delta)^{1/2}$. Then, with probability at least $1-e^{1/4}\delta$
    \begin{align*}
    \frac{1}{2}\left|G_n(\theta,\alpha,\beta)-\nabla_\theta L(\theta)\right| &\leq 7u\left(\frac{\log(4/\delta)}{n}\right)^{\frac{p-1}{p}} +24\sigma^2\sqrt{\frac{\log(4/\delta)}{n}} + 58\sigma^2\left(\frac{\log(4/\delta)}{n}\right)+\frac{12\sigma^2}{n}\\
    &=O\left(\left(\frac{\log(4/\delta)}{n}\right)^{\frac{p-1}{p}}\right) \;.
\end{align*}
\end{lemma}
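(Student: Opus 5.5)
We split $G_n(\theta,\alpha,\beta)-\nabla_\theta L(\theta)$ into the two terms of \eqref{eq:clipped_grad_est}:
\[
\tfrac12\bigl|G_n-\nabla_\theta L\bigr|\le \Bigl|\tfrac1n\sum_k\gamma_{\alpha,p,k}(E_kW_k)-\E[E_1W_1]\Bigr| + \Bigl|\tfrac{1}{n(n-1)}\sum_{k\neq j}\bar E_k\bar W_j-\E[E_1]\E[W_1]\Bigr|,
\]
where $\bar E_k=\gamma_{\beta,2,k}(E_k)$ and $\bar W_j=\gamma_{\beta,2,k}(W_j)$. We then allocate a failure probability of order $\delta/4$ to each of a small number of events and finish with a union bound; this is why $\log(4/\delta)$ appears.

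\textbf{First term.} We apply Lemma \ref{lemm: general high prob bound} with $Y_k=E_kW_k$, $\tilde\alpha=u$, and $\delta$ replaced by $\delta/4$. By Assumption \ref{set: SGD assumptions}(iii), with probability at least $1-e^{1/4}\delta/4$ this gives the contribution $7u(\log(4/\delta)/n)^{(p-1)/p}$.

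\textbf{Second term.} We decompose the U-statistic via
\[
\bar E_k\bar W_j-\mu\nu=(\bar E_k-\mu_k)(\bar W_j-\nu_j)+\nu_j(\bar E_k-\mu_k)+\mu_k(\bar W_j-\nu_j)+(\mu_k\nu_j-\mu\nu),
\]
with $\mu_k=\E\bar E_k$, $\nu_j=\E\bar W_j$, $\mu=\E E_1$ and $\nu=\E W_1$.

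The deterministic bias $\mu_k\nu_j-\mu\nu$ is controlled exactly as in \eqref{eq: expectation bias} with $p=2$. This yields a term of order $\sigma^2\sqrt{\log(4/\delta)/n}$, plus a product-of-biases term of order $\sigma^2\log(4/\delta)/n$.

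The linear terms reduce to weighted sums $\sum_k c_k(\bar E_k-\mu_k)$, where $|c_k|\le\sigma$ because clipping does not increase the first absolute moment. These are handled with the vector Bernstein inequality (Lemma \ref{lem: vector berstein inequality}), using the bound $|\bar E_k|\le\beta\sqrt n$ and variance at most $\sigma^2$. With $\beta=\sigma/\log(4/\delta)^{1/2}$ this yields $O(\sigma^2\sqrt{\log(4/\delta)/n}+\sigma^2\log(4/\delta)/n)$. The same argument applies to the $W$ sum, each on an event of probability $1-e^{1/4}\delta/4$.

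\textbf{Main obstacle: the degenerate cross term.} The term
\[
\frac{1}{n(n-1)}\sum_{k\ne j}(\bar E_k-\mu_k)(\bar W_j-\nu_j)
\]
is not a sum of independent variables. We rewrite it as
\[
\frac{1}{n(n-1)}\Bigl[\Bigl(\sum_k(\bar E_k-\mu_k)\Bigr)\Bigl(\sum_j(\bar W_j-\nu_j)\Bigr)-\sum_k(\bar E_k-\mu_k)(\bar W_k-\nu_k)\Bigr].
\]
The product part is bounded on the intersection of the two Bernstein events already obtained, which gives $O(\sigma^2\log(4/\delta)/n)$. The diagonal part is bounded deterministically through the clipping level, $|\bar E_k\bar W_k|\le\beta^2 k$, which gives a contribution of order $\sigma^2/n$; alternatively it is handled by one more Bernstein or Markov step. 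The diagonal term is the step needing the most care, since the clipping thresholds alone yield only a crude bound. Collecting constants then yields the four displayed terms. The final $O$-rate follows because $p\le2$ implies $(p-1)/p\le1/2$, so all remaining terms are dominated by $(\log(4/\delta)/n)^{(p-1)/p}$ whenever $\log(4/\delta)\le n$.
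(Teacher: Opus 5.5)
Your architecture is the paper's: the first term via Lemma~\ref{lemm: general high prob bound} at level $\delta/4$, a deterministic clipping bias, two Bernstein events for the centered sums of $\bar E_k$ and $\bar W_k$ (which also control their product), and a union bound over four events. The argument breaks at the diagonal term $\frac{1}{n(n-1)}\sum_k(\bar E_k-\mu_k)(\bar W_k-\nu_k)$.

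Your primary bound $|\bar E_k\bar W_k|\le\beta^2k$ gives
\[
\frac{1}{n(n-1)}\sum_{k=1}^n\beta^2k=\frac{\beta^2(n+1)}{2(n-1)}\approx\frac{\sigma^2}{2\log(4/\delta)},
\]
and up to a factor $4$ the same for the centered products. This does not decay in $n$ at all. It is not a crude version of $\sigma^2/n$; it destroys the claimed rate $O((\log(4/\delta)/n)^{(p-1)/p})$ outright. The Markov alternative also fails. It gives $\E\bigl|\frac{1}{n(n-1)}\sum_k(\bar E_k-\mu_k)(\bar W_k-\nu_k)\bigr|\le\sigma^2/(n-1)$, hence at failure probability $\delta/4$ only a bound of order $\sigma^2/(n\delta)$. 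That is a polynomial dependence on $1/\delta$, which the statement excludes.

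The missing step is a Bernstein argument with the right moment bounds; this is the paper's term (III).
\begin{enumerate}
\item Subtract the mean of each diagonal summand. This mean is a covariance, bounded by $\sigma^2$ via Cauchy--Schwarz, so it contributes at most $\sigma^2/(n-1)$.
\item Apply Lemma~\ref{lem: vector berstein inequality} to the centered summands. Use range $O(\beta^2 n)$ and the \emph{mixed} second-moment bound $\E\bigl[|\bar E_k|^2|\bar W_k|^2\bigr]\le\beta^2 n\,\E|W_k|^2\le\beta^2 n\sigma^2$: one factor is bounded by its clipping level, the other by its second moment.
\item Since $\beta^2\log(4/\delta)=\sigma^2$, the logarithm cancels in both Bernstein terms. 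After dividing by $n(n-1)$ you get $O(\sigma^2/(n-1))$ on a fourth event of probability at least $1-e^{1/4}\delta/4$, which is exactly the remaining slot in your union bound.
\end{enumerate}

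Three smaller points on how this fits your split.
\begin{itemize}
\item The paper writes the off-diagonal sum as the full double sum minus the uncentered diagonal $\bar E_j\bar W_j-\E\bar E_j\E\bar W_j$. Then the linear terms carry the common weight $\sum_{k=1}^n\nu_k$, so the same two Bernstein events for the unweighted sums control both the linear terms and the product.
\item In your split the weights $c_k=\frac{1}{n-1}\sum_{j\ne k}\nu_j$ depend on $k$. So ``the events already obtained'' are not literally the ones the product part needs, and the constants change. Neither issue affects the rate, but ``collecting constants'' will not automatically reproduce $24$, $58$, $12$ unless you follow the paper's split.
\item The factorization into a product of sums requires $\bar W_j=\gamma_{\beta,2,j}(W_j)$, not $\gamma_{\beta,2,k}(W_j)$ as you copied from \eqref{eq:clipped_grad_est}. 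The paper's proof tacitly uses the $j$-indexed clipping.
\end{itemize}
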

\begin{proof}
Let $Y_k = E_kW_k$ and define the clipped random variables $\barz_k =\gamma_{\beta,2,k}(E_k)$, $\barw_k = \gamma_{\beta,2,k}(W_k)$ and $\bary_k = \gamma_{\alpha,p,k}(Y_k)$. Then, with this notation we have that
    \begin{equation}\label{eq: rewrite of estimator with rv}
    \frac{1}{2}\left(G_n(\theta,\alpha,\beta)-\nabla_\theta L(\theta)\right)=\left(\frac{1}{n} \sum_{k=1}^n \bary_k-\E(Y_1)\right) +\left(\E(E_1)\E(W_1)-\frac{1}{n^2-n} \sum_{k \neq j} \barz_k\barw_j\right)\;.
\end{equation}

    Let $\rho = \delta/4$, so that $\alpha = u/\log(1/\rho)^{1/p}$ and $\beta = \sigma/\log(1/\rho)^{1/2}$. Then, for the first term on the right-hand side of (\ref{eq: rewrite of estimator with rv}), we know by Lemma \ref{lemm: general high prob bound} that with probability at least $1-e^{1/4}\rho$
    \begin{equation}\label{eq: concentration ineq. for p-rv}
        \left|\frac{1}{n}\sum_{k=1}^n Y_k \gamma_{{\alpha},p,k}(Y_k) - \E(Y)\right|\leq 7u\left(\frac{\log(1/\rho)}{n}\right)^{\frac{p-1}{p}}\;.
    \end{equation}

To estimate the second term in (\ref{eq: rewrite of estimator with rv}) we rewrite it as follows:
\begin{equation}\label{eq:decomposition of total error}
    \begin{aligned}
        \E[E_1]\E[W_1]-\frac{1}{n(n-1)} \sum_{j\neq k}\Bar{E}_j\Bar{W}_k
        &= \underbrace{\frac{1}{n(n-1)} \sum_{j\neq k}\E[E_1]\E[W_1]-\E[\Bar{E}_j]\E[\Bar{W}_k]}_{\mathrm{I}}\\
        &\quad-\underbrace{\frac{1}{n(n-1)} \sum_{j, k} \barz_j\barw_k-\E[\Bar{E}_j]\E[\Bar{W}_k]}_{\mathrm{II}}\\
        &\quad+\underbrace{\frac{1}{n(n-1)} \sum_{j}\barz_j\barw_j-\E[\Bar{Z}_j]\E[\Bar{W}_j]}_{\mathrm{III}}\;.
    \end{aligned}
\end{equation}
Note that by independence, we have that $\E[E_1]\E[W_1]  = \E(E_jW_k)$ for any $j\neq k$. Hence, using (\ref{eq: second term rhs}) we obtain the following bound for the term $\mathrm{I}$ on the right-hand side of (\ref{eq:decomposition of total error}):

    \begin{equation}\label{eq: bound for I}
    (\mathrm{I})= \frac{1}{n(n-1)}\sum_{j\neq k}\E[E_jW_k]-E[\Bar{E}_j\Bar{W}_k]\leq\left(1+\frac{2\sigma}{\beta\sqrt{n}}\right)\frac{4\sigma^3}{{\beta\sqrt{n}}} = 
    \left(1+2\sqrt{\frac{\log(1/\rho)}{n}}\right)\sqrt{\frac{\log(1/\rho)}{n}}4\sigma^2\;.
\end{equation}
    To estimate (II) we rewrite the sum as
    \begin{align*}
        \sum_{j, k =1}^n \barz_j\barw_k-\E[\Bar{E}_j\Bar{W}_k] &=
\sum_{k=1}^n \E[\Bar{W}_k]\sum_{j=1}^n (\Bar{E}_j - \E[\Bar{E}_j])
+ \sum_{j=1}^n \E[\Bar{E}_j]\sum_{k=1}^n (\Bar{W}_k - \E[\Bar{W}_k])\\
&\quad+\sum_{j=1}^n (\Bar{E}_j - \E[\Bar{E}_j]) 
  \sum_{k=1}^n (\Bar{W}_k - \E[\Bar{W}_k])\;.
    \end{align*}
Note that the random variables $\barz_j -\E[\barz_j]$ and $\barw_j -\E[\barw_j]$ satisfy
\begin{align*}
    &\mathbb{E}\left(\barz_j-\E(\barz_j)\right) = 0 \;,\quad 
    |\barz_j-\E(\barz_j)| \leq 2 \sigma\left(\frac{n}{\log(1/\rho)} \right)^{\frac{1}{2}} \;,
\quad \mathbb{E}\big[|\barz_j-\E(\barz_j)|^2\big] \leq \sigma^{2}\left(\frac{n}{\log(1/\rho)}\right)^\frac{1}{2}
\end{align*}
\begin{align*}
    &\mathbb{E}\left(\barw_j-\E(\barw_j)\right) = 0 \;,\quad 
    |\barw_j-\E(\barw_j)| \leq 2 \sigma\left(\frac{n}{\log(1/\rho)} \right)^{\frac{1}{2}} \;,
\quad \mathbb{E}\big[|\barw_j-\E(\barw_j)|^2\big] \leq \sigma^{2}\left(\frac{n}{\log(1/\rho)}\right)^\frac{1}{2}.
\end{align*}
Hence, using the same steps as in (\ref{eq: generic high prob bound in proof}), we obtain that with probability $1-2e^{1/4}\rho$
\begin{equation}\label{eq: bound for II}
\begin{aligned}
    (\text{II}) &=\frac{1}{n(n-1)}\Bigg(\sum_{k=1}^n \E(\Bar{W}_k)\sum_{j=1}^n (\Bar{E}_j - \E(\Bar{E}_j))
+ \sum_{j=1}^n \E(\Bar{E}_j)\sum_{k=1}^n (\Bar{W}_k - \E(\Bar{W}_k))\\
&\quad+\sum_{j=1}^n (\Bar{E}_j - \E(\Bar{E}_j)) 
  \sum_{k=1}^n (\Bar{W}_k - \E(\Bar{W}_k))\Bigg)\\
&\leq\frac{1}{(n-1)}\left(\sum_{k=1}^n E(|\Bar{W}_k|) 5\sigma\left(\frac{\log(1/\rho)}{n}\right)^{\frac{1}{2}}
+ \sum_{j=1}^n\E(|\Bar{E}_j|) 5\sigma\left(\frac{\log(1/\rho)}{n}\right)^{\frac{1}{2}}\right)\\
&\quad+\frac{n^2}{n(n-1)}25\sigma^2\left(\frac{\log(1/\rho)}{n}\right)\\
&\leq 20\sigma^2\left(\frac{\log(1/\rho)}{n}\right)^{1/2}+50\sigma^2\left(\frac{\log(1/\rho)}{n}\right).
\end{aligned}
\end{equation}
To estimate (III), we note that the term can be rewritten as follows
\begin{equation}
\begin{aligned}
    \mathrm{(III)} &= \frac{1}{n(n-1)} \sum_{j=1}^n 
    \barz_j\barw_j-\E(\Bar{E}_j\Bar{W}_j) + \frac{1}{n(n-1)} \sum_{j=1}^n \E(\Bar{E}_j\Bar{W}_j) -\E(\Bar{E}_j)\E(\Bar{W}_j).
\end{aligned}
\end{equation}
The second term in (\ref{eq: bound for III}) is deterministic and easily seen to be bounded by $\sigma^2/(n-1) \leq 2\sigma^2/n$.
For the first term, we note that it satisfies
\begin{align*}
    &\mathbb{E}\left(\barz_j\barw_j-\E(\barz_j\barw_j)\right) = 0 \\
    &|\barz_j\barw_j-\E(\barz_j\barw_j)| \leq 2 \sigma^2\left(\frac{n}{\log(1/\rho)} \right)\\
& \mathbb{E}\big[|\barz_j\barw_j-\E(\barz_j\barw_j)|^2\big] \leq \sigma^4\left(\frac{n}{\log(1/\rho)}\right)
\end{align*}
Hence, similarly as before, we obtain that with probability at least $1-\rho e^{1/4}$
\begin{equation}\label{eq: bound for III}
    \frac{1}{n(n-1)} \left|\sum_{j=1}^n 
    \barz_j\barw_j-\E(\Bar{E}_j\Bar{W}_j)\right| \leq \frac{1}{n-1}\left(2\sigma^2+\sqrt{8}\sigma^2\right)\leq \frac{10\sigma^2}{n}
\end{equation}
Hence, plugging (\ref{eq: concentration ineq. for p-rv}), (\ref{eq: bound for I}), (\ref{eq: bound for II}), and (\ref{eq: bound for III})  into (\ref{eq: rewrite of estimator with rv}) we obtain that with probability $1-4e^{1/4}\rho$
\begin{align*}
    \frac{1}{2}\left(G_n(\theta,\alpha,\beta)-\nabla_\theta L(\theta)\right) &\leq 7u\left(\frac{\log(1/\rho)}{n}\right)^{\frac{p-1}{p}} + \left(1+2\sqrt{\frac{\log(1/\rho)}{n}}\right)\sqrt{\frac{\log(1/\rho)}{n}}4\sigma^2\\
    &\quad+ 20\sigma^2\sqrt{\frac{\log(1/\rho)}{n}}+50\sigma^2\left(\frac{\log(1/\rho)}{n}\right) +\frac{12\sigma^2}{n}\\
    &= 7u\left(\frac{\log(1/\rho)}{n}\right)^{\frac{p-1}{p}} +24\sigma^2\sqrt{\frac{\log(1/\rho)}{n}} + 58\sigma^2\left(\frac{\log(1/\rho)}{n}\right)+\frac{12\sigma^2}{n}
\end{align*}
The statement in the Lemma follows by recalling that $\rho = \delta/4$.
\end{proof}

\begin{corollary}\label{cor: high prob bound of maximum}
    Let $\theta_1, \ldots, \theta_M$ be arbitrary weights,  $\delta \in (0,1)$ and $n \geq \log((4M)/\delta)+1/4$. Set $\alpha = u/(\log(4/\delta)+1/4)^{1/p}$ and $\beta = \sigma/(\log(4/\delta)+1/4)^{1/2}$. Then, under the same conditions of Lemma \ref{lemm: high prob bound for estimator}, we have that
    \begin{equation*}
        \max_{m=1,\ldots,M}\left|G_n(\theta_m,\alpha,\beta)-\nabla_\theta L(\theta)\right| \leq 2(7u+94\sigma^2)\left(\frac{\log(4M/\delta)+1/4}{n}\right)^{\frac{p-1}{p}}\;.
    \end{equation*}
    \begin{proof}
        Follows immediately by applying union bound to Lemma \ref{lemm: high prob bound for estimator}.
    \end{proof}
\end{corollary}

The proof of Theorem \ref{thm: high prob conv of variational monte carlo} now follows using similar steps to those of its in Expectation counterpart.
\begin{proof}[Proof of Theorem \ref{thm: high prob conv of variational monte carlo}]
    By Proposition \ref{prop: gradient bound for iterates} we know that the iterates of the algorithm satisfy
    \begin{align*}
            \sum_{m=1}^M\eta_m|\nabla_\theta L(\theta_m)|^2 \leq 2\Delta_1+\sum_{m=1}^M\eta_m|\nabla_\theta L(\theta_m)-G_n(\theta_m,\alpha,\beta)|^2 \;.
    \end{align*}
    Hence, applying Corollary \ref{cor: high prob bound of maximum} we obtain that with probability at least $1-\delta$
    \begin{equation*}
        \sum_{m=1}^M\eta_m|\nabla_\theta L(\theta_m)|^2 \leq 2\Delta_1+4(7u+94\sigma^2)^2\left(\frac{\log(4M/\delta)+1/4}{n}\right)^{\frac{2(p-1)}{p}}\sum_{m=1}^M\eta_m \;,
    \end{equation*}
    and the result follows by dividing both sides by $\sum_{m=1}^M\eta_m$.
\end{proof}
\section{Experiments}\label{sec: Experiments}

We empirically tested the robustness of PS-Clip-VMC and compared its performance to the standard practice of only clipping the local energy. To this end, we use the FermiNet implementation from \cite{googledeepmind_ferminet} with 256 hidden dimensions and 16 Slater determinants.

Although our theoretical results are based on a clipping threshold which increases with each sample, we found that a constant clipping threshold, analogue to the one commonly used for local energy clipping was more effective. Specifically, let $n \in \mathbb{N}$ be the batch size (in our case 2048), $E_1^\theta, \ldots, E_n^\theta$ be samples of the local energy and $W_1^\theta, \ldots, W_n^\theta$ corresponding samples of $\nabla_\theta\log|\psi_\theta|$. Further, let $\mu = \mu(E_1^\theta, \ldots, E_n^\theta) = \frac{1}{n}\sum_{i=1}^n E_i^\theta$, $\sigma = \sigma(E_1^\theta, \ldots, E_n^\theta) = \frac{1}{n}\sum_{i=1}^n |E_i^\theta-\mu|$ and
$\mu' = \mu'(W_1^\theta, \ldots, W_n^\theta) = \frac{1}{n}\sum_{i=1}^n E_i^\theta$, $\sigma' = \sigma'(W_1^\theta, \ldots, W_n^\theta) = \frac{1}{n}\sum_{i=1}^n |W_i^\theta-\mu'|$.
Then, the energy gradient estimator for PS-Clip-VMC used in our experiments is given by 
\begin{equation}\label{eq: empirical estimator}
    G(\theta,\alpha,\beta) = \frac{1}{n}\sum_{i=1}^n \left(\clip_{[\mu-\alpha\sigma,\mu+\alpha\sigma]}\left(E_i^\theta\right)-\frac{1}{n}\sum_{i=1}^n \clip_{[\mu-\alpha\sigma,\mu+\alpha\sigma]}\left(E_i^\theta\right)\right)\min \left\{1, \frac{\mu'+\beta\sigma'}{\left|W_i^\theta\right|} \right\}W_i^\theta \;,
\end{equation}
where 
\begin{equation*}
    \clip_{[a,b]}(x) =
\begin{cases}
a & \text{if } x < a \\
x & \text{if } a \le x \le b \\
b & \text{if } x > b
\end{cases} \;.
\end{equation*}
For both methods, we use the default hyperparameters from the FermiNet GitHub repository, except for the number of decorrelation MCMC steps, which we increase to 30. The model was trained for $2\cdot 10^5$ iterations with a batch size of 2048; note that this corresponds to half of the batch size normally used to train FermiNet. We refer to Table~\ref{tab:hyperparams} for a detailed list of the hyperparameters used. To efficiently compute per-sample gradient norms, we use Google's \textit{JAX Privacy} API (\url{https://github.com/google-deepmind/jax_privacy}). To facilitate reproducibility, we created a fork of Google's FermiNet repository (\url{https://github.com/mdsunivie/ps_clip_ferminet}), available at \url{https://github.com/nobiledavide/PS_Clip_ferminet}, in which we implemented the per-sample gradient clipping method used in our experiments.

Table~\ref{tab: final energies} compares the final energies obtained using only standard (centered) local energy clipping with those obtained using the additional PS-gradient clipping. We observe that PS-Clip-VMC outperforms the standard method in both instances, with the performance difference for Argon being particularly significant. The poor performance of the standard method on Argon can be explained by examining Figure~\ref{fig: Ferminet training trajectories}. While PS-Clip-VMC converges steadily, the training trajectory of the standard method is more unstable. In particular, it exhibits a cusp around step 55,000, after which it is unable to recover a lower energy.

\begin{table}[h]
\centering
\begin{tabular}{lll}
\toprule
\textbf{} & \textbf{Parameter} & \textbf{Value} \\
\midrule
\multirow{5}{*}{Training}
& Learning rate at time $t$ & $l_{r0}(1 + t/t_0)^{-1}$ \\
& Initial learning rate & $l_{r0} = 0.05$ \\
& Learning rate decay & $t_0 = 10^{5}$ \\
& Local energy clipping & $\alpha = 5.0$ \\
& PS-Gradient clipping & $\beta = 5.0$ or ``$\beta = \infty$"\\
\midrule
\multirow{3}{*}{Pretraining}
& Pretraining optimizer & Adam \\
& Pretraining iterations & $10^4$ \\
& Pretraining basis set & ccpvdz \\
\midrule
\multirow{2}{*}{MCMC}
& Batch size & 2048 \\
& Decorrelation steps & 30 \\
\midrule

\multirow{2}{*}{KFAC}
& Norm constraint & $10^{-3}$ \\
& Damping & $10^{-3}$ \\
\bottomrule
\end{tabular}
\caption{Default hyperparameters used to train the network. $\beta = \infty$ corresponds to the standard practice of only clipping the local energy. See also \eqref{eq: empirical estimator} for the definition of the parameters $\alpha$ and $\beta$.}
\label{tab:hyperparams}
\end{table}
\section{Conclusion}
In this work, we provided a rigorous analysis of the moments properties of the stochastic optimization problem arising in variational Monte Carlo for the electronic Schr\"odinger equation. We demonstrated that for commonly used ansatz classes, such as Slater--Jastrow wave functions with variable-exponent Slater-type orbitals, the random variable used to estimate the stochastic gradient generally does not possess moments of order greater than 3/2. On the other hand, we showed that for any analytic, compactly supported ansatz, this random variable does possesses moments of order at least 5/4.

Motivated by these observations, we proposed a robust optimization algorithm that guarantees convergence of VMC both in expectation and with high probability in the weak-moment regime. We validated our theoretical results through numerical experiments using FermiNet and demonstrated that our method can be implemented efficiently using a differential privacy API, allowing it to be incorporated into existing VMC implementations with little additional overhead.

\newpage
\printbibliography

@article{abrahamsen2024convergence,
  author       = {Abrahamsen, Nilin and Ding, Zhiyan and Goldshlager, Gil and Lin, Lin},
  title        = {Convergence of variational Monte Carlo simulation and scale‑invariant pre‑training},
  journal      = {Journal of Computational Physics},
  year         = {2024},
  volume       = {513},
  pages        = {113140},
  doi          = {10.1016/j.jcp.2024.113140},
}

@book{teschl2014mathematical,
  title={Mathematical methods in quantum mechanics},
  author={Teschl, Gerald},
  volume={157},
  year={2014},
  publisher={American Mathematical Soc.}
}

@book{wendland2004scattered,
  title={Scattered data approximation},
  author={Wendland, Holger},
  volume={17},
  year={2004},
  publisher={Cambridge university press}
}

@article{lin2023explicitly,
  title={Explicitly antisymmetrized neural network layers for variational Monte Carlo simulation},
  author={Lin, Jeffmin and Goldshlager, Gil and Lin, Lin},
  journal={Journal of Computational Physics},
  volume={474},
  pages={111765},
  year={2023},
  publisher={Elsevier}
}

@incollection{toulouse2016introduction,
  title={Introduction to the variational and diffusion Monte Carlo methods},
  author={Toulouse, Julien and Assaraf, Roland and Umrigar, Cyrus J},
  booktitle={Advances in quantum chemistry},
  volume={73},
  pages={285--314},
  year={2016},
  publisher={Elsevier}
}

@book{szabo2012modern,
  title={Modern quantum chemistry: introduction to advanced electronic structure theory},
  author={Szabo, Attila and Ostlund, Neil S},
  year={2012},
  publisher={Courier Corporation}
}

@article{trail2008heavy,
  title={Heavy-tailed random error in quantum Monte Carlo},
  author={Trail, JR},
  journal={Physical Review E—Statistical, Nonlinear, and Soft Matter Physics},
  volume={77},
  number={1},
  pages={016703},
  year={2008},
  publisher={APS},
  doi = {10.1103/PhysRevE.77.016703}
}

@book{becca2017quantum,
  title={Quantum Monte Carlo approaches for correlated systems},
  author={Becca, Federico and Sorella, Sandro},
  year={2017},
  publisher={Cambridge University Press}
}

@article{nobile2026robust,
  title={Robust and Fast Training via Per-Sample Clipping},
  author={Nobile, Davide and Grohs, Philipp},
  journal={arXiv preprint arXiv:2605.02701},
  year={2026}
}

@book{range1998holomorphic,
  title={Holomorphic functions and integral representations in several complex variables},
  author={Range, R Michael},
  volume={108},
  year={1998},
  publisher={Springer Science \& Business Media}
}

@article{pfau2020ab,
  title={Ab initio solution of the many-electron Schr{\"o}dinger equation with deep neural networks},
  author={Pfau, David and Spencer, James S and Matthews, Alexander GDG and Foulkes, W Matthew C},
  journal={Physical review research},
  volume={2},
  number={3},
  pages={033429},
  year={2020},
  publisher={APS},
  doi = {10.1103/PhysRevResearch.2.033429}
}

@article{aldaz2009bernstein,
  title={Bernstein operators for exponential polynomials},
  author={Aldaz, JM and Kounchev, Ognyan and Render, Hermann},
  journal={Constructive Approximation},
  volume={29},
  number={3},
  pages={345--367},
  year={2009},
  publisher={Springer},
  doi = {10.1007/s00365-008-9010-6}
}

@book{pinkus2012n,
  title={N-widths in Approximation Theory},
  author={Pinkus, Allan},
  year={2012},
  publisher={Springer Science \& Business Media}
}

@book{helgaker2013molecular,
  title={Molecular electronic-structure theory},
  author={Helgaker, Trygve and Jorgensen, Poul and Olsen, Jeppe},
  year={2013},
  publisher={John Wiley \& Sons}
}

@book{kato:76:perturbation,
  author    = {Kato, Tosio},
  title     = {Perturbation theory for linear operators},
  publisher = {Springer-Verlag, Berlin-New York},
  series    = {Grundlehren der Mathematischen Wissenschaften, Band 132},
  edition   = {Second},
  year      = {1976},
  pages     = {xxi+619},
  mrclass   = {47-XX},
  mrnumber  = {0407617}
}

@book{korevaar2017several,
  title={Several complex variables},
  author={Korevaar, Jacob and Wiegerinck, Jan},
  year={2017},
  publisher={Korteweg-de Vries Institute for Mathematics Amsterdam}
}

@book{gustafson2003mathematical,
  title={Mathematical concepts of quantum mechanics},
  author={Gustafson, Stephen J and Sigal, Israel Michael and Sigal, Israel Michael and Physicien, Isra{\"e}l and Sigal, Israel Michael and Physicist, Israel},
  volume={33},
  year={2003},
  publisher={Springer}
}

@incollection{agmon2006bounds,
  title={Bounds on exponential decay of eigenfunctions of Schr{\"o}dinger operators},
  author={Agmon, Shmuel},
  booktitle={Schr{\"o}dinger Operators: Lectures given at the 2nd 1984 Session of the Centro Internationale Matematico Estivo (CIME) held at Como, Italy, Aug. 26--Sept. 4, 1984},
  pages={1--38},
  year={2006},
  publisher={Springer}
}

@article{grohs2021stable,
  title={Stable Gabor phase retrieval for multivariate functions},
  author={Grohs, Philipp and Rathmair, Martin},
  journal={Journal of the European Mathematical Society},
  volume={24},
  number={5},
  pages={1593--1615},
  year={2021},
  doi = {10.4171/JEMS/1114}
}

@article{kato1957eigenfunctions,
  title={On the eigenfunctions of many-particle systems in quantum mechanics},
  author={Kato, Tosio},
  journal={Communications on Pure and Applied Mathematics},
  volume={10},
  number={2},
  pages={151--177},
  year={1957},
  publisher={Wiley Online Library},
  doi = {10.1002/cpa.3160100201}
}

@article{renaud2025qmctorch,
  title={QMCTorch: Molecular Wave Function with Neural Components for Energy and Force Calculations},
  author={Renaud, Nicolas},
  journal={Methods},
  volume={4},
  pages={4},
  year={2025}
}

@article{hermann2020deep,
  title={Deep-neural-network solution of the electronic Schr{\"o}dinger equation},
  author={Hermann, Jan and Sch{\"a}tzle, Zeno and No{\'e}, Frank},
  journal={Nature Chemistry},
  volume={12},
  number={10},
  pages={891--897},
  year={2020},
  publisher={Nature Publishing Group UK London},
  doi = {10.1038/s41557-020-0544-y}
}

@inproceedings{malgrange1964preparation,
  title={The preparation theorem for differentiable functions},
  author={Malgrange, Bernard},
  booktitle={Differential Analysis, Bombay Colloq},
  pages={203--208},
  year={1964}
}

@article{armegioiu2025functional,
  title={Functional Neural Wavefunction Optimization},
  author={Armegioiu, Victor and Carrasquilla, Juan and Mishra, Siddhartha and M{\"u}ller, Johannes and Nys, Jannes and Zeinhofer, Marius and Zhang, Hang},
  journal={arXiv preprint arXiv:2507.10835},
  year={2025}
}

@article{bandits,
author = {Bubeck, Sebastien and Cesa-Bianchi, Nicolo and Lugosi, Gabor},
title = {Bandits With Heavy Tail},
year = {2013},
issue_date = {November 2013},
volume = {59},
number = {11},
journal = {IEEE Trans. Inf. Theor.},
pages = {7711–7717},
doi = {10.1109/TIT.2013.2277869}
}

@inproceedings{kohler2017subsampled,
author = {Kohler, Jonas Moritz and Lucchi, Aurelien},
title = {Sub-sampled cubic regularization for non-convex optimization},
year = {2017},
booktitle = {Proceedings of the 34th International Conference on Machine Learning - Volume 70},
pages = {1895–1904},
numpages = {10},
series = {ICML'17}
}

@book{hirsch2012differential,
  title={Differential topology},
  author={Hirsch, Morris W},
  year={2012},
  publisher={Springer Science \& Business Media}
}

@article{chakravorty1993ground,
  author  = {Chakravorty, Subhas J. and Gwaltney, Steven R. and Davidson, Ernest R. and Parpia, Farid A. and Fischer, Charlotte Froese},
  title   = {Ground-state correlation energies for atomic ions with 3 to 18 electrons},
  journal = {Physical Review A},
  volume  = {47},
  number  = {5},
  pages   = {3649--3670},
  year    = {1993},
  doi = {10.1103/PhysRevA.47.3649}
}

@misc{spencer2020better,
  title = {Better, Faster Fermionic Neural Networks},
  author = {Spencer, James S. and Pfau, David and Botev, Aleksandar and Foulkes, W. M. C.},
  year = {2020},
  eprint = {2011.07125},
  archivePrefix = {arXiv}
}

@misc{pang2022universalAntisymmetry,
  title = {{$O(N^2)$} Universal Antisymmetry in Fermionic Neural Networks},
  author = {Pang, Tianyu and Yan, Shuicheng and Lin, Min},
  year = {2022},
  eprint = {2205.13205},
  archivePrefix = {arXiv}
}

@inproceedings{vonGlehn2023psiformer,
  title = {A Self-Attention Ansatz for Ab-initio Quantum Chemistry},
  author = {von Glehn, Ingrid and Spencer, James S. and Pfau, David},
  booktitle = {The Eleventh International Conference on Learning Representations},
  year = {2023},
  eprint = {2211.13672},
  archivePrefix = {arXiv}
}

@article{scherbela2022weightSharing,
  title = {Solving the electronic {Schr{\"o}dinger} equation for multiple nuclear geometries with weight-sharing deep neural networks},
  author = {Scherbela, Michael and Reisenhofer, Rafael and Gerard, Leon and Marquetand, Philipp and Grohs, Philipp},
  journal = {Nature Computational Science},
  volume = {2},
  number = {5},
  pages = {331--341},
  year = {2022},
  doi = {10.1038/s43588-022-00228-x},
  eprint = {2105.08351},
  archivePrefix = {arXiv}
}

@misc{scherbela2023foundationPreprint,
  title = {Towards a Foundation Model for Neural Network Wavefunctions},
  author = {Scherbela, Michael and Gerard, Leon and Grohs, Philipp},
  year = {2023},
  eprint = {2303.09949},
  archivePrefix = {arXiv},
  note = {Preprint title/version of the transferable molecular neural-wavefunction work}
}

@article{scherbela2024transferableMolecules,
  title = {Towards a transferable fermionic neural wavefunction for molecules},
  author = {Scherbela, Michael and Gerard, Leon and Grohs, Philipp},
  journal = {Nature Communications},
  volume = {15},
  pages = {120},
  year = {2024},
  doi = {10.1038/s41467-023-44216-9}
}

@inproceedings{scherbela2023budget,
  title = {Variational {Monte Carlo} on a Budget: Fine-tuning pre-trained Neural Wavefunctions},
  author = {Scherbela, Michael and Gerard, Leon and Grohs, Philipp},
  booktitle = {Advances in Neural Information Processing Systems},
  volume = {36},
  year = {2023},
  eprint = {2307.09337},
  archivePrefix = {arXiv},
}

@incollection{gerard2024deepLearningVMCChapter,
  title = {Deep learning variational {Monte Carlo} for solving the electronic {Schr{\"o}dinger} equation},
  author = {Gerard, Leon and Grohs, Philipp and Scherbela, Michael},
  booktitle = {Numerical Analysis Meets Machine Learning},
  series = {Handbook of Numerical Analysis},
  volume = {25},
  pages = {231--292},
  publisher = {North-Holland},
  year = {2024},
  doi = {10.1016/bs.hna.2024.05.010}
}

@article{gerard2025transferableSolids,
  title = {Transferable neural wavefunctions for solids},
  author = {Gerard, Leon and Scherbela, Michael and Sutterud, Halvard and Foulkes, W. M. C. and Grohs, Philipp},
  journal = {Nature Computational Science},
  volume = {5},
  pages = {1147--1157},
  year = {2025},
  doi = {10.1038/s43588-025-00872-z},
  eprint = {2405.07599},
  archivePrefix = {arXiv}
}

@misc{scherbela2025fire,
  title = {Accurate Ab-initio Neural-network Solutions to Large-Scale Electronic Structure Problems},
  author = {Scherbela, Michael and Gao, Nicholas and Grohs, Philipp and G{\"u}nnemann, Stephan},
  year = {2025},
  eprint = {2504.06087},
  archivePrefix = {arXiv},
}

@inproceedings{gao2022pesnet,
  title = {Ab-Initio Potential Energy Surfaces by Pairing {GNNs} with Neural Wave Functions},
  author = {Gao, Nicholas and G{\"u}nnemann, Stephan},
  booktitle = {International Conference on Learning Representations},
  year = {2022},
  eprint = {2110.05064},
  archivePrefix = {arXiv},
}

@inproceedings{gao2023samplingFree,
  title = {Sampling-free Inference for Ab-Initio Potential Energy Surface Networks},
  author = {Gao, Nicholas and G{\"u}nnemann, Stephan},
  booktitle = {The Eleventh International Conference on Learning Representations},
  year = {2023},
  eprint = {2205.14962},
  archivePrefix = {arXiv},
}

@inproceedings{gao2023generalizing,
  title = {Generalizing Neural Wave Functions},
  author = {Gao, Nicholas and G{\"u}nnemann, Stephan},
  booktitle = {Proceedings of the 40th International Conference on Machine Learning},
  series = {Proceedings of Machine Learning Research},
  volume = {202},
  pages = {10708--10726},
  publisher = {PMLR},
  year = {2023},
  eprint = {2302.04168},
  archivePrefix = {arXiv},
}

@article{hermann2023abinitioReview,
  title = {Ab initio quantum chemistry with neural-network wavefunctions},
  author = {Hermann, Jan and Spencer, James S. and Choo, Kenny and Mezzacapo, Antonio and Foulkes, W. M. C. and Pfau, David and Carleo, Giuseppe and No{\'e}, Frank},
  journal = {Nature Reviews Chemistry},
  volume = {7},
  pages = {692--709},
  year = {2023},
  doi = {10.1038/s41570-023-00516-8}
}

@inproceedings{gerard2022goldStandard,
  title = {Gold-standard solutions to the {Schr{\"o}dinger} equation using deep learning: How much physics do we need?},
  author = {Gerard, Leon and Scherbela, Michael and Marquetand, Philipp and Grohs, Philipp},
  booktitle = {Advances in Neural Information Processing Systems},
  volume = {35},
  pages = {10282--10294},
  year = {2022},
  eprint = {2205.09438},
  archivePrefix = {arXiv}
}

@online{googledeepmind_ferminet,
  author       = {{Google DeepMind}},
  title        = {FermiNet},
  year         = {2020},
  url          = {https://github.com/google-deepmind/ferminet},
  note         = {GitHub repository},
}

@article{li2024convergenceanalysisstochasticgradient,
  author  = {Li, Tianyou and Chen, Fan and Chen, Huajie and Wen, Zaiwen},
  title   = {Convergence analysis of stochastic gradient descent with MCMC estimators},
  journal = {Science China Mathematics},
  year    = {2026},
}

@misc{wan2026removingnodalsupportmismatchpathologies,
      title={Removing nodal and support-mismatch pathologies in Variational Monte Carlo via blurred sampling}, 
      author={Zhou-Quan Wan and Roeland Wiersema and Shiwei Zhang},
      year={2026},
      eprint={2603.18148},
      archivePrefix={arXiv},
}
\newpage
\appendix
\section{Additional Technical Results}
\begin{lemma}[Vector Bernstein Inequality, see {\cite[Lemma~18]{kohler2017subsampled}}]\label{lem: vector berstein inequality}
Let $X_1, \ldots, X_n$ be independent vector-valued random variables with common dimension $d$, satisfying
\begin{equation*}
    \mathbb{E}[X_i] = 0 \;,
\quad
|X_i| \le c 
\quad \text{and} \quad 
\mathbb{E}\big[|X_i|^2\big] \le \sigma^2,
\end{equation*}
for all $i= 1,\ldots,n$ and some $c,\sigma>0$. Then, for every $\epsilon>0$ we have that
\begin{equation*}
    \mathbb{P}\left(\left|\frac{1}{n}\sum_{i=1}^nX_i\right| \ge \epsilon\right) 
\leq
\exp\left(\frac{-n\epsilon^2}{8\sigma^2+c\epsilon} + \frac{1}{4}\right).
\end{equation*}
In particular, this implies that for any $\delta \in (0,1)$
\begin{equation*}
    \mathbb{P}\left(\left|\frac{1}{n}\sum_{i=1}^nX_i\right| \geq \frac{c \log(1/\delta)}{n}+\sqrt{\frac{8\sigma^2\log(1/\delta)}{n}}\right)\leq \delta e^{1/4} \;.
\end{equation*}
\end{lemma}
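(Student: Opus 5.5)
This is the vector Bernstein inequality, quoted from \cite[Lemma~18]{kohler2017subsampled}. The plan is to prove the tail bound for a fixed $\epsilon$ and then invert it to obtain the $\delta$-form.

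\emph{Step 1: reduction for the norm of a sum.} Set $S_n:=\sum_{i=1}^n X_i$. For the norm of a sum of independent, centered, bounded vectors I would use the standard Pinelis/Ledoux--Talagrand route. One first controls the mean: by independence and centering,
\[
\E|S_n|\le \sqrt{\E|S_n|^2}=\sqrt{\textstyle\sum_i \E|X_i|^2}\le \sigma\sqrt n .
\]
Next comes concentration of $|S_n|$ around its mean. I would take the Doob martingale $M_k:=\E[\,|S_n|\mid X_1,\dots,X_k]$. By the triangle inequality its increments satisfy $|M_k-M_{k-1}|\le |X_k|+\E|X_k|\le 2c$, and their conditional variance is bounded by a multiple of $\E|X_k|^2\le\sigma^2$. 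Freedman's (or Bernstein's) inequality for martingales with bounded increments then gives
\[
\mathbb P\bigl(|S_n|\ge \sigma\sqrt n + t\bigr)\le \exp\Bigl(-\frac{t^2}{C_1 n\sigma^2 + C_2 c\,t}\Bigr).
\]

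\emph{Step 2: absorb the mean into the constants.} I would set $t=n\epsilon-\sigma\sqrt n$ and simplify the exponent. The additive $1/4$ in the statement is exactly the slack produced by this shift. The cruder but cleaner alternative is Gross's/Kohler--Lucchi's embedding of vectors into self-adjoint dilation matrices combined with matrix Bernstein. I would simply track constants until the bound takes the form $\exp\bigl(-n\epsilon^2/(8\sigma^2+c\epsilon)+1/4\bigr)$. \textbf{This constant bookkeeping is the main obstacle.} It is routine, though, and since the result is cited I would ultimately defer to \cite{kohler2017subsampled} for the exact constants.

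\emph{Step 3: inversion.} Fix $\delta\in(0,1)$, write $\ell:=\log(1/\delta)$, and set
\[
\epsilon:=\frac{c\,\ell}{n}+\sqrt{\frac{8\sigma^2\ell}{n}} .
\]
It suffices to show $n\epsilon^2\ge \ell(8\sigma^2+c\epsilon)$. Writing $\epsilon=a+b$ with $a=c\ell/n$ and $b=\sqrt{8\sigma^2\ell/n}$, we have
\[
n\epsilon^2 = n\epsilon(a+b)= c\ell\,\epsilon + n\epsilon b \ge c\ell\,\epsilon + n b^2 = c\ell\,\epsilon + 8\sigma^2\ell .
\]
Hence the exponent is at most $-\ell+1/4$, and the probability is at most $\delta e^{1/4}$. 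Since the event $\{|S_n/n|\ge\epsilon\}$ is monotone in $\epsilon$, this gives the second display.
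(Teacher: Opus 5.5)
Your Step~3 is correct: with $\epsilon=c\ell/n+\sqrt{8\sigma^2\ell/n}$ you get $n\epsilon^2\ge\ell(8\sigma^2+c\epsilon)$, so the second display follows from the first. The paper only cites Kohler--Lucchi and does not write this out. The gap is the first display. Your Steps~1--2 never produce it, and the missing part is more than routine constant tracking. With the increment bound $2c$ you state and conditional variances summing to at most $n\sigma^2$, Freedman gives $\mathbb{P}(|S_n|\ge \E|S_n|+t)\le\exp\bigl(-t^2/(2n\sigma^2+\tfrac43ct)\bigr)$. After $t=n\epsilon-\sigma\sqrt n$, this is \emph{not} dominated by $\exp\bigl(-n\epsilon^2/(8\sigma^2+c\epsilon)+\tfrac14\bigr)$ when $c\epsilon\gg\sigma^2$. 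As $\sigma\to0$, your exponent tends to $\tfrac34\,n\epsilon/c$, while the claim needs $n\epsilon/c-\tfrac14$. That difference grows with $n\epsilon/c$, so the $\tfrac14$ cannot absorb it. Concretely, $\sigma=c/(10\sqrt n)$, $\epsilon=10c/n$, $n\ge10$ gives exponent $7.41$ against the required $9.67$. You would need increments bounded by $\tfrac32c$, i.e.\ a coefficient of $t$ in the denominator of at most $c$. The dilation-plus-matrix-Bernstein alternative does not help either: it carries a prefactor of order $d+1$, whereas the claimed bound is dimension-free.

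Deferring to Kohler--Lucchi does not close this. Their Lemma~18, which rests on Gross's inequality $\mathbb{P}(|S_n|\ge\sqrt V+t)\le e^{-t^2/(4V)}$ for $t\le V/c$, gives $\exp(-n\epsilon^2/(8\sigma^2)+\tfrac14)$ only for $0<\epsilon<\sigma^2/c$. On that range it does imply the first display, and your reading of the $\tfrac14$ is right there: with $a=\epsilon\sqrt n/\sigma$, the inequality $(a-1)^2/4\ge a^2/8-\tfrac14$ is just $(a-2)^2\ge0$. The statement here, however, claims every $\epsilon>0$ with denominator $8\sigma^2+c\epsilon$. The large range is exactly what the paper uses. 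In Lemma~\ref{lemm: general high prob bound}, with $\tilde\alpha\ge u$ as that proof needs, one has $c^2\log(1/\delta)/(n\sigma^2)=4(\tilde\alpha/u)^p\ge4$. So the $\epsilon$ in the second display is at least $c\log(1/\delta)/n\ge4\sigma^2/c$.

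One way to close the gap is Pinelis's Hilbert-space Bernstein inequality, $\mathbb{P}(|S_n|\ge n\epsilon)\le 2\exp\bigl(-n\epsilon^2/(2\sigma^2+\tfrac23c\epsilon)\bigr)$. Set $x:=n\epsilon^2/(8\sigma^2+c\epsilon)$ and use $2\sigma^2+\tfrac23c\epsilon\le\tfrac23(8\sigma^2+c\epsilon)$. Then the bound is at most $2e^{-3x/2}$, which is $\le e^{1/4-x}$ once $x\ge 2\ln2-\tfrac12\approx0.89$. For $x\le\tfrac14$ the claim is trivial. For $\tfrac14\le x\le0.89$, Chebyshev gives $\mathbb{P}\le\sigma^2/(n\epsilon^2)\le 1/(8x)\le e^{1/4-x}$.
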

\section{In Expectation Convergence of Local Energy Clipping}\label{sec: In expectation convergence of Energy Clipping}
As mentioned in Remark \ref{rem: energy clipping}, it is common practice in variational Monte Carlo to clip the local energy during training. In this section, we show, using arguments similar to those in Section \ref{sec: Convergence in Expectation}, that clipping only the local energy is sufficient to guarantee convergence in expectation. However, unlike PS-Clip-VMC, this approach does not guarantee convergence with high probability.

\begin{setting}[VMC with Local Energy Clipping]\label{set: energy clipping optimization}
    Assume Setting \ref{set:Hamiltonian} and consider a parametrized model $\mathbb{R}^P\ni \theta \mapsto \psi_\theta\in H^2(\Omega)$. For $\theta\in \mathbb{R}^P$ define the \emph{loss} $L(\theta):= \frac{\langle \mathcal{H}\psi_\theta,\psi_\theta\rangle}{\|\psi_\theta\|_{L^2}^2}=\mathbb{E}[E_\theta]$.\\
    Let $L^\ast:=\inf_{\theta\in \mathbb{R}^P}L(\theta)$ and assume that $L^\ast >-\infty$. For $n\in \mathbb{N}$, $\alpha \in (0,\infty)$, $\theta\in \mathbb{R}^P$, and i.i.d samples $X_1, \ldots X_n \sim X_\theta$, set $E_i = E_{L,\theta}(X_i)$, $W_i = W_{\theta}(X_i)$ and let the \emph{clipped energy gradient estimator} be defined as
    \begin{equation}\label{eq:energy clipped_grad_est}
        \begin{aligned}
        G_n(\theta)&=\frac{1}{n-1}\sum_{k=1}^n \left(\gamma_{u,4-p,k}(E_k)-\frac{1}{n}\sum_{j=1}^n\gamma_{u,4-p,j}(E_j)\right)W_k\;,
        \end{aligned}
    \end{equation}
    where $\gamma_{\alpha,p,k}(\cdot)$ is the clipping function defined in (\ref{eq:clip_factor}).
    For $M,n\in \mathbb{N}$, $\theta_1\in \mathbb{R}^P$, $\alpha, \beta\in \mathbb{R}$, $\eta_1,\dots , \eta_{M-1}\in (0,\infty)$ define the \emph{Energy-Clip-VMC parameter update rule}
    $$
        \theta_{m+1}:=\theta_m - \eta_m G_n(\theta,\alpha),\quad m=1,\dots , M-1,
    $$
    where for each parameter update $\theta_m\mapsto \theta_{m+1}$, the clipped gradient estimator $G_n(\theta,\alpha)$ is evaluated on i.i.d. samples $E_1,\dots , E_n \sim E_{\theta_m}$ and $W_1,\dots , W_n \sim W_{\theta_m}$.
    As before, $\Delta_1:= L(\theta_1) - L^\ast>0$ denotes the initialization gap. 
\end{setting}
\begin{assumptions}\label{set: Energy Clipping assumptions}
Assume Settings \ref{set:Hamiltonian} and \ref{set: energy clipping optimization}, and that there are $p\in (1,2]$ and  $u, \sigma, C \in (0,\infty)$ such that for any $\theta \in \mathbb{R}^P$,
\begin{itemize}
\item[(i)] $\E \left[|E_\theta|^2\right]\leq \sigma^2$
\item[(ii)] $\E\left[\left| W_\theta\right|^2\right]\leq \sigma^2$
\item[(iii)] $\E\left[\left|E_\theta\right|^{2-p}\cdot \left|W_\theta\right|^{2}\right],\E\left[\left|E_\theta\right|^{3-p}\left|W_\theta\right|\right]\leq u^{4-p}$
\item[(iv)] $|\nabla_\theta L(\theta_1)-\nabla_\theta L(\theta_2)|\leq C|\theta_1-\theta_2|$ for any $\theta_1, \theta_2\in \mathbb{R}^P$.
\end{itemize}
\end{assumptions}
\begin{remark}
    Using the same arguments as in Section \ref{sec: Existence of Moments}, it is straightforward to show that the class of functions considered in Theorem \ref{thm:momentexistencegrad} also satisfy Assumptions \ref{set: Energy Clipping assumptions}(iii) for any $p\in (\frac{3}2{},2]$.
\end{remark}
\begin{remark}
While, as we will see in the following, local energy clipping guarantees convergence in expectation, it does not provide a high-probability error bound analogous to the one in Lemma \ref{lemm: high prob bound for estimator}. This can be seen from the fact that, for a fixed $\theta \in \R^P$, any bound of the form 
\begin{equation*}
    \mathcal{P}\left(\left|G_n(\theta)- \nabla_\theta L(\theta)\right|>\delta\right)\leq a e^{-b\delta^s}
\end{equation*}
in particular implies that $\E\left[|G_n(\theta)|^p\right]<\infty$ for any $p>0$. However, by the same arguments used in Section \ref{sec:nonexistencemoments}, $G_n(\theta)$ does not, in general, even possess a third moment.
\end{remark}
\begin{theorem}\label{thm: convergence in exp of energy clipping}
      Assume Settings \ref{set: energy clipping optimization} and Assumptions \ref{set: Energy Clipping assumptions}. Then, there is a constant $K_{u,\sigma}>0$ that only depends on $u$ and $\sigma$, such that any $\eta_m <1/C$
\begin{align*}
        \sum_{m=1}^M\frac{\eta_m\E(|\nabla_\theta L(\theta_m)|^2)}{\sum_{m=1}^M\eta_m} \leq \frac{2\Delta_1}{\sum_{m=1}^M\eta_m} + \frac{K_{u,\sigma}}{n^{\frac{2(2-p)}{4-p}}}\;.
    \end{align*}
    In particular, if $\eta_m = 1/(2C)$ is constant and $n = M^{\frac{4-p}{2(2-p)}}$ we obtain that 
    \begin{align*}
        \frac{1}{M}\sum_{m=1}^M\E(|\nabla_\theta L(\theta_m)|^2)\leq \frac{\Delta_1 C +K_{u,\sigma}}{M};.
    \end{align*}
\end{theorem}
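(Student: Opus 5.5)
The proof will follow the same pattern as Theorem \ref{thm: convergence in exp of vartiational monte carlo}. First I apply Proposition \ref{prop: gradient bound for iterates}; its proof uses only the $C$-smoothness of $L$ (Assumption \ref{set: Energy Clipping assumptions}(iv)) and the step-size condition $\eta_m<1/C$, so it applies verbatim to the estimator \eqref{eq:energy clipped_grad_est}. This gives
\[
\sum_{m=1}^M\eta_m|\nabla_\theta L(\theta_m)|^2\le 2\Delta_1+\sum_{m=1}^M\eta_m|\nabla_\theta L(\theta_m)-G_n(\theta_m)|^2 .
\]
Taking expectations and conditioning on $\theta_m$ as in \eqref{eq: total expectation upper bound}, it remains to prove a uniform-in-$\theta$ mean squared error bound
\[
\E|G_n(\theta)-\nabla_\theta L(\theta)|^2\le K_{u,\sigma}\, n^{-\frac{2(2-p)}{4-p}} .
\]
Dividing by $\sum_m\eta_m$ then finishes the proof. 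With $\eta_m=1/(2C)$ and $n=M^{\frac{4-p}{2(2-p)}}$, both terms are $O(1/M)$.

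For the mean squared error bound I write $\bare_k=\gamma_{u,4-p,k}(E_k)$ and use the algebraic identity
\[
G_n=\frac{1}{n-1}\sum_k \bare_kW_k-\frac{1}{n(n-1)}\sum_{k,j}\bare_jW_k
=\frac1n\sum_k\bare_kW_k-\frac{1}{n(n-1)}\sum_{k\ne j}\bare_jW_k .
\]
The target is $\tfrac12\nabla L=\E[E W]-\E[E]\E[W]$, and the factor $2$ versus $1$ only changes constants. The error therefore splits into a single-sum part $A=\frac1n\sum_k\bare_kW_k-\E[EW]$ and a U-statistic part $B=\frac{1}{n(n-1)}\sum_{k\neq j}\bare_jW_k-\E E\,\E W$.

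Part $B$ is handled as in Lemma \ref{lem: L2 conv of covariance estimator}, with $W$ left unclipped ($\beta=\infty$ on the $W$ side). The variance terms are bounded by $O(\sigma^4/n)$ using $|\bare_k|\le|E_k|$. The bias comes only from the clipped energies, and the tail estimate gives
\[
\E\big(|E|\1_{\{|E|>uk^{1/(4-p)}\}}\big)\le \sigma^2 u^{-1}k^{-1/(4-p)} .
\]
Averaging over $k$, the bias is $O(n^{-1/(4-p)})$ and its square is $O(n^{-2/(4-p)})$. This rate is at least as fast as $n^{-2(2-p)/(4-p)}$ because $2-p\le1$.

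For part $A$ I mimic Lemma \ref{lemm: generic convergence in espectation} with $Y_k=\bare_kW_k$, which are independent but not identically distributed. The bias is
\[
|\E[(E-\bare_k)W]|\le \E\big(|E||W|\1_{\{|E|>uk^{1/(4-p)}\}}\big)\le \E\big[|E|^{3-p}|W|\big](uk^{1/(4-p)})^{-(2-p)}\le u^{2}k^{-\frac{2-p}{4-p}},
\]
using Assumption \ref{set: Energy Clipping assumptions}(iii) and $3-p = 1+(2-p)$. Averaged over $k$, this is $O(n^{-\frac{2-p}{4-p}})$, which squares to exactly the claimed rate. The variance term is $\frac1{n^2}\sum_k\E[\bare_k^2W^2]$, and I bound it by
\[
\E[\bare_k^2W^2]\le (uk^{1/(4-p)})^{p}\,\E\big[|E|^{2-p}W^2\big]\le u^{4}k^{\frac{p}{4-p}},
\]
using $\bare_k^2\le|\bare_k|^p|E|^{2-p}$ and the other half of Assumption \ref{set: Energy Clipping assumptions}(iii). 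Summing over $k$ gives
\[
\frac1{n^2}\sum_{k\le n}u^4 k^{\frac{p}{4-p}}\lesssim u^4\,n^{\frac{p}{4-p}-1}=u^4\,n^{-\frac{2(2-p)}{4-p}},
\]
which matches the bias rate exactly. This is a strong sign that the clipping exponent $4-p$ was chosen precisely to balance these two contributions.

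The main obstacle is this $A$-term computation: choosing the right Hölder-type splitting so that exactly the two mixed moments in (iii) appear and the bias and variance exponents coincide. The U-statistic bookkeeping for $B$ is routine given Lemma \ref{lem: L2 conv of covariance estimator}. Collecting $\E|A|^2$ and $\E|B|^2$ gives the mean squared error bound with $K_{u,\sigma}$ depending only on $u$ and $\sigma$.
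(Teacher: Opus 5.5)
Your proposal is correct and follows essentially the same route as the paper. You combine Proposition~\ref{prop: gradient bound for iterates} with the split of $G_n$ into a single-sum term and a $U$-statistic term. You handle the single sum exactly as in Proposition~\ref{prop: generic in exp bound for energy clipping}, with the same H\"older splittings against the two mixed moments in Assumption~\ref{set: Energy Clipping assumptions}(iii), and the $U$-statistic as in Proposition~\ref{prop: mixed term bound for energy clipping}. Your bound on the $U$-statistic bias via $\E|E|^2\le\sigma^2$ is cleaner than the paper's. Your remark on the factor $2$ also rightly flags that the estimator as written targets $\tfrac12\nabla_\theta L$, which the paper's own proof glosses over.
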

The proof of Theorem \ref{thm: convergence in exp of energy clipping} follows the same steps as the one of Theorem \ref{thm: convergence in exp of vartiational monte carlo}. We begin with the following proposition.
\begin{proposition}\label{prop: generic in exp bound for energy clipping}
    Assume Settings \ref{set: energy clipping optimization} and Assumption \ref{set: Energy Clipping assumptions}(iii). Then,
    \begin{equation*}
        \E\left(\left|\frac{1}{n}\sum_{k=1}^n \gamma_{u,4-p,k}(E_k)W_k-\E(E_1W_1)\right|^2\right)\leq 8u^4n^{\frac{2(p-2)}{4-p}} \;.
    \end{equation*}
\end{proposition}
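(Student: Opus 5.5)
We will follow the proof of Lemma \ref{lemm: generic convergence in espectation} almost verbatim, with $Y_k:=E_kW_k$ and clipping only the energy factor. Write $\bare_k:=\gamma_{u,4-p,k}(E_k)$, so $|\bare_k|\le u k^{1/(4-p)}$ and $|\bare_k|\le |E_k|$. Set $\mu:=\E(E_1W_1)$ and $\mu_k:=\E(\bare_kW_k)$. We then use the bias--variance split
\[
\E\Bigl|\tfrac1n\textstyle\sum_k \bare_kW_k-\mu\Bigr|^2\le \E\Bigl|\tfrac1n\textstyle\sum_k(\bare_kW_k-\mu_k)\Bigr|^2+\Bigl|\tfrac1n\textstyle\sum_k(\mu_k-\mu)\Bigr|^2 .
\]
This is an equality by independence of the summands, since the cross term has mean zero.

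For the bias we use that clipping only acts on the event $\{|E_k|>uk^{1/(4-p)}\}$. On that event $|E_k-\bare_k|\le |E_k|$, and $|E_k|\le |E_k|^{3-p}(uk^{1/(4-p)})^{p-2}$ because $3-p\ge 1$ and $|E_k|/(uk^{1/(4-p)})>1$. Hence, by Assumption \ref{set: Energy Clipping assumptions}(iii),
\[
|\mu_k-\mu|\le \E\bigl(|E_k||W_k|\1_{\{|E_k|>uk^{1/(4-p)}\}}\bigr)\le u^{p-2}k^{\frac{p-2}{4-p}}\,\E(|E_k|^{3-p}|W_k|)\le u^2 k^{\frac{p-2}{4-p}} .
\]
Next we sum over $k$. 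Since $\frac{p-2}{4-p}>-1$, we have $\sum_{k\le n}k^{\frac{p-2}{4-p}}\le \frac{4-p}{2}n^{\frac{2}{4-p}}\le 2n^{\frac{2}{4-p}}$. Therefore the squared bias is at most $4u^4 n^{\frac{2(p-2)}{4-p}}$.

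For the variance term we bound $\E|\bare_kW_k-\mu_k|^2\le \E(|\bare_k|^2|W_k|^2)$. We split $|\bare_k|^2=|\bare_k|^{p}|\bare_k|^{2-p}$. Assuming $p\le 2$, we use $|\bare_k|^{p}\le (uk^{1/(4-p)})^{p}$ and $|\bare_k|^{2-p}\le |E_k|^{2-p}$. Assumption (iii) then gives
\[
\E(|\bare_k|^2|W_k|^2)\le u^p k^{\frac{p}{4-p}}u^{4-p}=u^4k^{\frac{p}{4-p}} .
\]
Summing, $\frac{1}{n^2}\sum_k u^4k^{\frac p{4-p}}\le u^4 n^{\frac{p}{4-p}-1}=u^4n^{\frac{2(p-2)}{4-p}}$. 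So the variance is at most $u^4 n^{\frac{2(p-2)}{4-p}}$, well within $4u^4n^{\frac{2(p-2)}{4-p}}$. Adding the two pieces gives the claimed constant $8u^4$.

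The step requiring care is the choice of exponent split. The interpolation must use exactly the two mixed moments $\E|E|^{2-p}|W|^2$ and $\E|E|^{3-p}|W|$ from Assumption (iii), and the exponents $\frac{1}{4-p}$ must line up so that both terms decay like $n^{\frac{2(p-2)}{4-p}}$. I would first check that the power of $k$ in each term stays above $-1$, and that the harmonic-type sums are bounded with constants at most $2$. The remaining arithmetic is routine.
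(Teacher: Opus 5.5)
Your proposal is correct and is essentially the paper's proof: the same bias--variance split, with the bias bounded on the clipping event $\{|E_k|>uk^{1/(4-p)}\}$ via $\mathbb{E}(|E_k|^{3-p}|W_k|)\le u^{4-p}$, and the variance bounded via $|\bar E_k|^2\le (uk^{1/(4-p)})^{p}|E_k|^{2-p}$ together with $\mathbb{E}(|E_k|^{2-p}|W_k|^2)\le u^{4-p}$. Your bookkeeping is slightly cleaner than the paper's: you use the correct threshold $uk^{1/(4-p)}$ where the paper's indicator reads $u\sqrt{k}$, and you get $u^4$ instead of $4u^4$ for the variance term, so your total constant is $5u^4\le 8u^4$.
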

\begin{proof}
    For ease of notation, we set $\bare_k = E_k\gamma_{u,4-p,k}(E_k)$ and define $\mu = \E[E_1W_1]$ and $\mu_k = \E(\bare_k W_k)$. Then, we have that
    \begin{align*}
        |\mu-\mu_k| = |\E\left(\left(E_k-\bare_k\right)W_k\right)| \leq \E(|E_k \1_{(|E_k|>u\sqrt{k})}W_k|)\leq \E\left(|E_k|^{3-p}|W_k|\right)u^{p-2}k^{\frac{p-2}{4-p}}\leq u^{2}k^{\frac{p-2}{4-p}} \;.
    \end{align*}
Hence, we obtain that 
\begin{equation}\label{eq: in exp conv of energy clipping eq 1}
    \E\left(\left|\frac{1}{n}\sum_{k=1}^n (\mu_k-\mu)\right|^2\right)\leq
    \frac{1}{n^2}\E\left(\left|\sum_{k=1}^n u^{2}k^{\frac{p-2}{4-p}}\right|^2\right)\leq \frac{1}{n^2}\left(2u^2n^{\frac{2}{4-p}}\right)^2= 4u^4n^{\frac{2(p-2)}{4-p}}\;.
\end{equation}
Further, we have that
\begin{equation}\label{eq: in exp conv of energy clipping eq 2}
\begin{aligned}
    \E\left(\left|\frac{1}{n}\sum_{k=1}^n\bare_k-\mu_k\right|^2\right) &\leq \frac{1}{n^2}\sum_{k=1}^n\E\left(|\bare_k W_k|^2\right)\leq \frac{1}{n^2}\sum_{k=1}^n\E\left(|E_k|^{2-p}|W_k|^{2}\right)u^{p}k^{\frac{p}{4-p}}\\
    &\leq\frac{1}{n^2}4u^4n^{\frac{4}{4-p}}\leq 4u^4n^{\frac{2(p-2)}{4-p}}\;.
\end{aligned}
\end{equation}
Combining (\ref{eq: in exp conv of energy clipping eq 1}) and (\ref{eq: in exp conv of energy clipping eq 2}) we obtain that
\begin{equation}
    \E\left(\left|\frac{1}{n}\sum_{k=1}^n \bare_k W_k-\E(E_1W_1)\right|^2\right)\leq \E\left(\left|\frac{1}{n}\sum_{k=1}^n (\mu_k-\mu)\right|^2\right) +\E\left(\left|\frac{1}{n}\sum_{k=1}^n\bare_k-\mu_k\right|^2\right) \leq 8u^4n^{\frac{2(p-2)}{4-p}}\;.
\end{equation}
\end{proof}
Next, we provide an error bound for the double sum term in (\ref{eq:energy clipped_grad_est}).
\begin{proposition}\label{prop: mixed term bound for energy clipping}
      Assume Settings \ref{set: energy clipping optimization} and Assumptions \ref{set: Energy Clipping assumptions}(i-iii). Then,
    \begin{equation*}
        \E\left(\left|\frac{1}{n(n-1)}\sum_{k\neq j}^n \gamma_{u,4-p,k}(E_k)W_j-\E(E_1)\E(W_1)\right|^2\right)\leq \left(\frac{3}{n-1}+2\right)\frac{\sigma^4}{n}+{4u^4\sigma^2}{n^{\frac{2(p-2)}{4-p}}} \;.
    \end{equation*}
\end{proposition}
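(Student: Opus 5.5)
We need to prove the bound in Proposition \ref{prop: mixed term bound for energy clipping}. The plan mirrors the proof of Lemma \ref{lem: L2 conv of covariance estimator}, but with clipping applied only to the energies. Write $\bare_k:=\gamma_{u,4-p,k}(E_k)$ and $\mu_{k,j}:=\E(\bare_k W_j)=\E(\bare_k)\E(W_j)$ for $k\neq j$, the factorization coming from independence. We then split
\begin{equation*}
\frac{1}{n(n-1)}\sum_{k\neq j}\bare_kW_j-\E(E_1)\E(W_1)=\frac{1}{n(n-1)}\sum_{k\neq j}\left(\bare_kW_j-\mu_{k,j}\right)+\frac{1}{n(n-1)}\sum_{k\neq j}\left(\mu_{k,j}-\E(E_k)\E(W_j)\right).
\end{equation*}
The first summand has mean zero and the second is deterministic, so the mean squared error is exactly the sum of the variance term and the squared bias term, with no cross term.

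\textbf{Variance term.} We expand the square of $\sum_{k\neq j}(\bare_kW_j-\mu_{k,j})$ exactly as in \eqref{eq: first term rhs}. Pairs of index pairs with four distinct indices give vanishing covariance by independence. The remaining index patterns are:
\begin{itemize}
\item the pair $(k,j)$ with itself;
\item the swapped pair $(k,j),(j,k)$, which produces $\E(\bare_kW_k)\E(\bare_jW_j)$;
\item pairs sharing only $k$, or only $j$.
\end{itemize}
Each surviving term is bounded by $\sigma^4$ or $2\sigma^4$, using $|\bare_k|\le|E_k|$ (so $\E\bare_k^2\le\sigma^2$ and $\E|\bare_k|\le\sigma$), $\E|W|^2\le\sigma^2$, and Cauchy--Schwarz for $\E|\bare_kW_k|\le\sigma^2$. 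This gives the count $(3n(n-1)+2n(n-1)(n-2))\sigma^4$. Dividing by $n^2(n-1)^2$ yields $\left(\frac{3}{n-1}+2\right)\frac{\sigma^4}{n}$.

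\textbf{Bias term.} For $k\neq j$ we have $|\mu_{k,j}-\E(E_k)\E(W_j)|=|\E(W_j)|\,|\E(\bare_k-E_k)|\le\sigma\,\E\left(|E_k|\1_{\{|E_k|>uk^{1/(4-p)}\}}\right)$. To bound the tail expectation, we use the idea of Proposition \ref{prop: generic in exp bound for energy clipping}. Assumption \ref{set: Energy Clipping assumptions}(iii) controls mixed moments of $E$ and $W$ rather than a pure moment of $E$, so we bound the tail using only (i): $\E(|E_k|\1_{\{|E_k|>t\}})\le\sigma^2/t$ with $t=uk^{1/(4-p)}$. This gives a per-$k$ bias of order $\sigma^3u^{-1}k^{-1/(4-p)}$. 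Averaging over $k$ gives
\begin{equation*}
\frac{1}{n}\sum_{k}k^{-1/(4-p)}\lesssim n^{-1/(4-p)},
\end{equation*}
so the squared bias is of order $n^{-2/(4-p)}$. Since $p\le 2$ gives $2(p-2)\ge -2$, the exponent $-2/(4-p)$ is at most $2(p-2)/(4-p)$, so $n^{-2/(4-p)}\le n^{2(p-2)/(4-p)}$ and this term is dominated by the claimed rate.

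\textbf{Expected obstacle.} The main difficulty is matching the stated constant $4u^4\sigma^2$ exactly. The $\sigma^2/t$ route produces a constant involving $\sigma^6/u^2$ instead. To get $u^4\sigma^2$, I would instead bound the bias via a moment of $E$ of order $3-p$, possibly after Cauchy--Schwarz with $|\E W|\le\sigma$:
\begin{equation*}
\E(|E|\1_{\{|E|>t\}})\le\E|E|^{3-p}\,t^{p-2},
\end{equation*}
and then try to control $\E|E|^{3-p}$ by $u^{3-p}$-type quantities using (i) and (iii). This gives $u^2k^{(p-2)/(4-p)}$ per $k$, exactly as in Proposition \ref{prop: generic in exp bound for energy clipping}. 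Summing via $\sum_{k\le n}k^{(p-2)/(4-p)}\le 2n^{2/(4-p)}$ and squaring, the factor $\sigma^2$ from $|\E W|^2$ then yields $4u^4\sigma^2n^{2(p-2)/(4-p)}$. Justifying the moment bound on $E$ of order $3-p$ is the step I expect to require care. If it cannot be obtained, I would settle for the constant obtained from (i) alone, which still gives the same rate.
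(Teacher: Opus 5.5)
Your split into a centered part plus a deterministic bias matches the paper's, and your bias estimate from (i) is correct. However, the proposal does not prove the inequality as stated, and your route for recovering the constant $4u^4\sigma^2$ cannot succeed, because the stated bound is false.

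\textbf{The bias term.} The bias equals $\E(W_1)\cdot\frac1n\sum_k\E(\bare_k-E_k)$, a product of two separate expectations. Assumption \ref{set: Energy Clipping assumptions}(iii) only controls the mixed moments $\E[|E|^{2-p}|W|^2]$ and $\E[|E|^{3-p}|W|]$. A counterexample:
\begin{itemize}
\item Take $p=3/2$, a scalar parameter, and $(E,W)$ equal to $(1,0)$ or $(0,1)$ with probability $1/2$ each.
\item Then (i)--(ii) hold with $\sigma^2=1/2$. Both quantities in (iii) vanish identically, so (iii) holds for every $u>0$.
\item Let $u\downarrow 0$ with $n$ fixed. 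Then $\E\bare_k\to 0$, so the left-hand side is at least a squared bias tending to $(\E E\,\E W)^2=\frac1{16}$.
\item The right-hand side tends to $(\frac{3}{n-1}+2)\frac{1}{4n}$, which is $<\frac1{16}$ for $n\ge 10$.
\end{itemize}
A two-well wave function whose local energy and $\partial_\theta\psi_\theta/\psi_\theta$ are concentrated in different wells should reproduce this inside the Setting, with (iii) holding for $u$ exponentially small in the well separation.

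The paper's proof reaches $4u^4\sigma^2$ only by transplanting the estimate of Proposition \ref{prop: generic in exp bound for energy clipping}. That estimate legitimately bounds $|\E((E_k-\bare_k)W_k)|$ via $\E[|E|^{3-p}|W|]\le u^{4-p}$, but the paper applies it to the different quantity $|\E(E_k-\bare_k)|$.

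Your fallback is therefore the right repair. It yields $(\frac{3}{n-1}+2)\frac{\sigma^4}{n}+4\sigma^6u^{-2}n^{-2/(4-p)}$. Alternatively, Lyapunov's inequality $\E|E|^{3-p}\le\sigma^{3-p}$ in your second route gives $4\sigma^{2(4-p)}u^{2(p-2)}n^{2(p-2)/(4-p)}$. Either suffices for Lemma \ref{lem: error bound of energy clipping} and Theorem \ref{thm: convergence in exp of energy clipping}, whose constants need only depend on $u$ and $\sigma$. A minor slip: $n^{-2/(4-p)}\le n^{2(p-2)/(4-p)}$ holds because $p\ge 1$, not because $p\le 2$.

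\textbf{The variance term.} Your list of surviving index patterns copies \eqref{eq: first term rhs} and inherits its omission. Pairs $(k,j),(j,l)$ and $(k,j),(l,k)$ with $k,j,l$ distinct do not decorrelate, because $\bare_j$ and $W_j$ are computed from the same sample. Set $a_k=\E\bare_k$, $v_k=\mathrm{Var}(\bare_k)$, $m=\E W$, $V=\E|W-m|^2$ and $c_j=\E(\bare_jW_j)-a_jm$. Then
$\E\langle\bare_kW_j-\mu_{k,j},\bare_jW_l-\mu_{j,l}\rangle=a_k\langle c_j,m\rangle$, which is nonzero in general.

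There are $2n(n-1)(n-2)$ such terms, so bounding everything termwise gives only $(\frac{3}{n-1}+4)\frac{\sigma^4}{n}$. The stated variance constant survives if you bound the four three-index families (shared $k$, shared $j$, and the two crossed ones) jointly:
\begin{itemize}
\item By $|c_j|\le\sqrt{v_jV}$ and AM--GM, $|a_k\langle c_j,m\rangle|\le\frac12(a_k^2V+v_j|m|^2)$. Also $a_ka_{k'}\le\frac12(a_k^2+a_{k'}^2)$.
\item Summing, the four families total at most $2(n-1)(n-2)\sum_k(a_k^2V+v_k|m|^2)$.
\item This is at most $2n(n-1)(n-2)\sigma^4$, since $a_k^2+v_k=\E\bare_k^2\le\sigma^2$ and $|m|^2+V=\E|W|^2\le\sigma^2$.
\end{itemize}
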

\begin{proof}
     We again set $\bare_k = \gamma_{u,4-p,k}(E_k)$ and define $\mu = \E[E_1W_1]$ and $\mu_k = \E(\bare_k W_k)$. Then, we have that
     \begin{equation}\label{eq: mixed term clip energy conv}
         \begin{aligned}
             \E\left(\left|\sum_{k\neq j} \bare_k W_j-\E(E_1)\E(W_1)\right|^2\right)&\leq \E\left(\left|\sum_{k\neq j} \bare_k W_j-\E(\bare_k)\E(W_j)\right|^2\right) \\
             &\quad+\E\left(\left|\sum_{k\neq j} (\E(\bare_k)-\E(E_k)) \E(W_j)\right|^2\right)\;.
         \end{aligned}
     \end{equation}
The first term on the right-hand side of (\ref{eq: mixed term clip energy conv}) can be estimated using the exact same steps as in (\ref{eq: first term rhs}) to obtain that
\begin{equation}\label{eq: mixed term 1}
    \E\left(\left|\sum_{k\neq j} \bare_k W_j-\E(\bare_k)\E(W_j)\right|^2\right) \leq (3n(n-1)+2n(n-1)(n-2))\sigma^4 \;.
\end{equation}
Further, for the second term in (\ref{eq: mixed term clip energy conv}) we have that
\begin{equation}\label{eq: mixed term 2}
    \E\left(\left|\sum_{k\neq j} (\E(\bare_k)-\E(E_k)) \E(W_j)\right|^2\right) \leq \E\left(\left|n\sigma\sum_{k=1}^n \E(\bare_k)-\E(E_k))\right|^2\right) \leq (n-1)^2\sigma^2 4u^4n^{\frac{4}{4-p}} \;,
\end{equation}
where the last inequality follows by (\ref{eq: in exp conv of energy clipping eq 2}). Combining (\ref{eq: mixed term 1}) and (\ref{eq: mixed term 2}) we obtain that
\begin{align*}
    \E\left(\left|\frac{1}{n(n-1)}\sum_{k\neq j} \bare_k W_j-\E(E_1)\E(W_1)\right|^2\right)&\leq \frac{(3+2(n-2))n(n-1)\sigma^4}{n^2(n-1)^2} +  \frac{\sigma^2 4u^4n^{\frac{4}{4-p}}}{n^2}\\
    &\leq \left(\frac{3}{n-1}+2\right)\frac{\sigma^4}{n}+{4u^4\sigma^2}{n^{\frac{2(p-2)}{4-p}}}
\end{align*}
\end{proof}
Combining Proposition \ref{prop: generic in exp bound for energy clipping} and Proposition \ref{prop: mixed term bound for energy clipping} it is now straightforward to prove the following Lemma.
\begin{lemma}\label{lem: error bound of energy clipping}
      Assume Settings \ref{set: energy clipping optimization} and Assumptions \ref{set: Energy Clipping assumptions}(i)-(iii). Then,
    \begin{equation*}
        \E\left(\left|\frac{1}{n-1}\sum_{k=1}^n \left(\gamma_{u,4-p,k}(E_k)-\frac{1}{n}\sum_{j=1}^n\gamma_{u,4-p,j}(E_j)\right)W_k-\E\big((E_1-\E(E_1))W_1\big)\right|^2\right) = O\left(n^{\frac{2(p-2)}{4-p}}\right)  \;.
    \end{equation*}
\end{lemma}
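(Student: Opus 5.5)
We reduce the statement to the two preceding propositions via an algebraic decomposition of the centered estimator. Write $\bare_k:=\gamma_{u,4-p,k}(E_k)$. Expanding the inner average gives the exact identity
\begin{equation*}
\frac{1}{n-1}\sum_{k=1}^n\Big(\bare_k-\frac1n\sum_{j=1}^n\bare_j\Big)W_k
=\frac{1}{n}\sum_{k=1}^n\bare_kW_k-\frac{1}{n(n-1)}\sum_{k\neq j}\bare_kW_j ,
\end{equation*}
because the diagonal terms contribute $\frac{1}{n-1}\big(1-\frac1n\big)\bare_kW_k=\frac1n\bare_kW_k$ and the off-diagonal terms carry the coefficient $-\frac{1}{n(n-1)}$. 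The target $\E\big((E_1-\E(E_1))W_1\big)$ splits correspondingly as $\E(E_1W_1)-\E(E_1)\E(W_1)$. The error is therefore the difference of two errors: that of the diagonal term $\frac1n\sum_k\bare_kW_k$ against $\E(E_1W_1)$, and that of the U-statistic $\frac{1}{n(n-1)}\sum_{k\ne j}\bare_kW_j$ against $\E(E_1)\E(W_1)$.

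By $|a-b|^2\le 2|a|^2+2|b|^2$, the squared error is at most twice the sum of the two squared errors. Proposition \ref{prop: generic in exp bound for energy clipping} bounds the first by $8u^4n^{\frac{2(p-2)}{4-p}}$. Proposition \ref{prop: mixed term bound for energy clipping} bounds the second by $\big(\frac{3}{n-1}+2\big)\frac{\sigma^4}{n}+4u^4\sigma^2n^{\frac{2(p-2)}{4-p}}$.

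It remains to compare exponents. For $p\in(1,2]$ we have $\frac{2(p-2)}{4-p}\ge -\frac{2}{3}>-1$, so $n^{-1}=O\big(n^{\frac{2(p-2)}{4-p}}\big)$ and the $\sigma^4/n$ contributions are absorbed. This gives the bound $O\big(n^{\frac{2(p-2)}{4-p}}\big)$, with constant $2\big(8u^4+4u^4\sigma^2+5\sigma^4\big)$ valid for $n\ge2$.

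There is no real obstacle here. The only points needing care are checking the combinatorial identity for the coefficients, so that the diagonal term really is $\frac1n\sum_k\bare_kW_k$ with no residual factor $\frac{n}{n-1}$, and confirming that $\frac{2(p-2)}{4-p}\ge-1$, so that the $1/n$ terms are dominated. Both verifications are elementary.
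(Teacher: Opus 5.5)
Your proposal is correct and follows the paper's own proof. Both split the centered estimator into the diagonal average $\frac{1}{n}\sum_k \bar{E}_k W_k$ and the off-diagonal U-statistic, apply $|a-b|^2\le 2|a|^2+2|b|^2$, and invoke the two preceding propositions. Your write-up is slightly more careful than the paper's: its displayed proof drops the normalization $\frac{1}{n(n-1)}$ on the double sum, and it leaves implicit both the coefficient identity and the absorption of the $\sigma^4/n$ term via $\frac{2(p-2)}{4-p}>-1$.
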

\begin{proof}
    Setting $\bare_k = \gamma_{u,4-p,k}(E_k)$, we obtain by Propositions \ref{prop: generic in exp bound for energy clipping} and \ref{prop: mixed term bound for energy clipping} that
    \begin{equation}
    \begin{aligned}
        \E\left(\left|\frac{1}{n-1}\sum_{k=1}^n \left(\bare_k-\frac{1}{n}\sum_{j=1}^n\bare_j\right)W_k-\E\big((E_1-\E(E_1))W_1\big)\right|^2\right)&\leq 2\E\left(\left|\frac{1}{n}\sum_{k=1}^n \bare_k W_k-\E(E_1W_1)\right|^2\right) \\  
        & +2\E\left(\left|\sum_{k\neq j} \bare_k W_j-\E(E_1)\E(W_1)\right|^2\right)\\
        &= O\left(n^{\frac{2(p-2)}{4-p}}\right)\;.
    \end{aligned}
    \end{equation}
\end{proof}
The proof of Theorem \ref{thm: convergence in exp of energy clipping} now follows by performing the same steps as in the proof of Theorem \ref{thm: convergence in exp of vartiational monte carlo} with the error bound given by Lemma \ref{lem: error bound of energy clipping}.
\end{document}